\documentclass{article}

\usepackage{arxiv}

\usepackage{epstopdf}
\usepackage{amsthm}
\usepackage{hyperref}
\usepackage{url}
\usepackage{hyperref}
\usepackage{url}
\usepackage[utf8]{inputenc} 
\usepackage[T1]{fontenc}    
\usepackage{booktabs}       
\usepackage{amsfonts}       
\usepackage{nicefrac}       
\usepackage{microtype}      
\usepackage{mathtools}
\usepackage{amssymb}
\usepackage[shortlabels]{enumitem}
\usepackage{xcolor,colortbl}
\usepackage{comment}
\usepackage{subcaption}
\usepackage{graphicx}
\usepackage{makecell}
\usepackage{multirow}
\usepackage{float}
\usepackage{booktabs,caption}
\usepackage[flushleft]{threeparttable}
\usepackage{ulem}
\usepackage{algorithm}
\usepackage{algpseudocode}
\usepackage[T1]{fontenc}
\usepackage{lmodern}

\newtheorem{theorem}{Theorem}
\newtheorem{definition}[theorem]{Definition}
\newtheorem{remark}[theorem]{Remark}
\newtheorem{lemma}[theorem]{Lemma}
\newtheorem{corollary}[theorem]{Corollary}

\numberwithin{theorem}{section}

\title{Optimal Transport Regularized Divergences: Application to Adversarial Robustness}

\author{Jeremiah Birrell\\
  Department of Mathematics\\
  Texas State University\\
  San Marcos, TX,  USA \\
  \texttt{jbirrell@txstate.edu} \\
   \And
Reza Ebrahimi\\
  School of Information Systems and Management\\
University of South Florida\\
  Tampla, FL, USA \\
  \texttt{ebrahimim@usf.edu} 
}

\begin{document}

\maketitle

\begin{abstract}
We introduce a new class of optimal-transport-regularized divergences, $D^c$, constructed via an infimal convolution between an information divergence, $D$, and an optimal-transport (OT) cost, $C$, and study their use in distributionally robust optimization (DRO).  In particular,  we propose the $ARMOR_D$ methods as novel approaches to enhancing the adversarial robustness of deep learning models. These  DRO-based methods are defined by minimizing the maximum expected loss over a $D^c$-neighborhood of the empirical distribution of the training data.   Viewed as a tool for constructing adversarial samples,  our method allows samples to be both transported, according to the OT cost, and re-weighted, according to the information divergence; the addition of a principled and dynamical adversarial re-weighting on top of adversarial sample transport is a key innovation of $ARMOR_D$.  $ARMOR_D$ can be viewed as a generalization of the best-performing loss functions and OT costs in the adversarial training literature; we demonstrate this flexibility by using $ARMOR_D$ to augment the UDR, TRADES, and MART methods and obtain  improved performance on CIFAR-10 and CIFAR-100 image recognition. Specifically, augmenting with $ARMOR_D$ leads to 1.9\% and 2.1\% improvement  against  AutoAttack, a powerful ensemble of adversarial attacks, on CIFAR-10 and CIFAR-100 respectively.  To foster reproducibility, we made the code accessible at \url{https://github.com/star-ailab/ARMOR}.
\end{abstract}

\keywords{Adversarial Robustness \and Information Divergence \and  Optimal Transport \and  Distributionally Robust Optimization \and  Deep Learning}

\section{Introduction}
Machine learning and specifically deep learning models are known to be vulnerable to adversarial samples: inputs intentionally and meticulously modified by an adversary to evade or mislead the classification model \cite{papernot2016limitations,goodfellow2014explaining}.  To date, the large body of prominent defense mechanisms for mitigating this vulnerability and enhancing adversarial robustness includes  adversarial training methods \cite{papernot2017mask,madry2018towardsrobustoptimization,hu2018does,wang2019improving_mart,zhang2019theoreticallytradeOff,zhang2020geometry,dong2020adversarial,regniez2021distributional,bui_UDR_2022unified,dong2023towards} which  construct adversarial samples that are employed during training, and  certifiable approaches \cite{cert_baharlouei2023improving,cert_raghunathan2018certified}, which can  guarantee the absence of adversarial examples misclassified by the model for a specific input, with \cite{sinha2018certifying}   having aspects of both categories. Despite attractive guarantees, the  certifiable approaches often operate on a convex relaxation of the original model rather than the original model and tend to have inferior performance compared to adversarial training \cite{wang2019improving_mart, athalye2018obfuscatedfalsesense}. Adversarial training also remains challenging, as it is difficult to maintain the model's performance generalizability while also  enhancing its adversarial robustness \cite{carlini2019evaluating, zhang2019theoreticallytradeOff}.


In PGD \cite{madry2018towardsrobustoptimization}, the pioneering robust optimization approach to adversarial training,   the loss function $\mathcal{L}_\theta$, depending on parameters $\theta\in\Theta$, is maximized over a metric-space ball centered at the training samples $z_i$, leading to the empirical risk minimization problem
\begin{align}
\inf_\theta E_{P_n}\left[ \sup_{\tilde z:d(z,\tilde z)\leq \epsilon}\mathcal{L}_\theta(\tilde z)\right]\,,\label{eq:classical_robust_optimization}
\end{align}
where $P_n=\frac{1}{n}\sum_{i=1}^n \delta_{z_i}$ is the empirical distribution. Solving the inner maximization problem in \eqref{eq:classical_robust_optimization} involves the construction of adversarial samples, $\tilde z_i\in \mathrm{argmax}_{\tilde z:d(z_i,\tilde z)\leq \epsilon} \mathcal{L}_\theta(\tilde z)$, corresponding to the true samples, $z_i$.  In \cite{bui_UDR_2022unified} it was noted that a large class of robust classification methods, including \eqref{eq:classical_robust_optimization}, can be expressed as distributionally robust optimization (DRO) problems. In DRO,  a stochastic optimization problem, $\inf_\theta E_P[\mathcal{L}_\theta]$, is regularized (or robustified) via the introduction of an adversarial maximization over a neighborhood of distributions, $\mathcal{U}(P)$, around the baseline distribution $P$:
\begin{align}\label{eq:DRO_gen}
    \inf_\theta \sup_{Q\in\mathcal{U}(P)}E_Q[\mathcal{L}_\theta]\,.
\end{align}
This DRO problem formalizes an uncertainty in the underlying distribution $P$ and can protect against overfitting, leading to better out-of-sample performance; see \cite{Rahimian_2022} for an overview of DRO. For general distribution neighborhoods \eqref{eq:DRO_gen} is an intractable infinite dimensional problem but if $\mathcal{U}$ has the appropriate structure then one can derive tractable finite dimensional reformulations of  \eqref{eq:DRO_gen}.    Prior  approaches to the general theory of DRO employ various types  of distribution neighborhoods, such as moment constraints \cite{doi:10.1287/opre.1090.0795,doi:10.1287/opre.1090.0741,doi:10.1287/opre.2014.1314}, conditional moment constraints \cite{2023arXiv230805414B}, MMD \cite{NEURIPS2019_1770ae9e}, Kullback-Leibler (KL)  and  $f$-divergence neighborhoods \cite{doi:10.1287/opre.1100.0821,Javid2012,Hu2013,10.2307/23359484,doi:10.1287/opre.2018.1786}, smoothed $f$-divergences \cite{liu2023smoothed},   Wasserstein neighborhoods \cite{EsfahaniKuhn2018,JMLR:v20:17-633,wu2022generalization,li2022general,doi:10.1287/moor.2022.1275}, Sinkhorn diveregence \cite{wang2021sinkhorn}, and more general optimal-transport (OT) neighborhoods \cite{doi:10.1287/moor.2018.0936,azizian2023regularization}.

In \cite{bui_UDR_2022unified} it was shown that the adversarial training methods PGD   \cite{madry2018towardsrobustoptimization}, TRADES,  \cite{zhang2019theoreticallytradeOff}, and MART \cite{wang2019improving_mart}, can  all be formulated as DRO problems of the type
\begin{align}\label{eq:OT_DRO_Bui}
&\inf_\theta\sup_{Q:C_d(Q,P_n)\leq \epsilon}E_Q[\mathcal{L}_\theta]\,, \,\,\,\,\,c_d(z,\tilde z)=\infty 1_{d(x,\tilde x)>\epsilon}+\infty 1_{x\neq \tilde{x}^\prime}+\infty 1_{y\neq \tilde y}\,,
\end{align}
for an appropriate loss $\mathcal{L}_\theta$, with  $C_d$ being the optimal transport (OT) cost  associated with the cost function $c_d$ on $\mathcal{Z}\times\mathcal{Z}$, where $\mathcal{Z}\coloneqq \mathcal{X}\times\mathcal{X}\times\mathcal{Y}$. Specifically,
\begin{align}\label{eq:Cd_def_intro}
C_d(\mu,\nu)\coloneqq \inf_{\pi:\pi_1=\mu,\pi_2=\nu}\int  c_d(z,\tilde z)\pi(dzd\tilde z)\,,
\end{align}
where $\pi_1,\pi_2$ denote the marginals of $\pi\in\mathcal{P}(\mathcal{Z}\times\mathcal{Z})$ (the set of probability distributions on $\mathcal{Z}\times\mathcal{Z}$).
In \eqref{eq:OT_DRO_Bui}  the empirical distribution for a collection of sample/label pairs $(x_i,y_i)\in\mathcal{X}\times\mathcal{Y}$ is $P_n=\frac{1}{n}\sum_{i=1}^n \delta_{z_i}$, $z_i\coloneqq (x_i,x_i,y_i)$, and the adversarial distribution $Q$ is over the variable $\tilde z=(\tilde x,\tilde{x}^\prime,\tilde y)\in\mathcal{Z}$.  The duplication of the sample component in $z_i$ allows for both adversarial and original samples to be used in the loss; specifically, the cost in \eqref{eq:OT_DRO_Bui} forces $\tilde{x}^\prime$ to equal the original sample.  PGD (with a mixture of original and adversarial samples) \cite{madry2018towardsrobustoptimization}, TRADES,  \cite{zhang2019theoreticallytradeOff}, and MART \cite{wang2019improving_mart}, correspond to the following loss functions
\begin{align}\label{eq:PGD_T_M_losses}
    &\mathcal{L}_\theta^{PGD}(\tilde z)\coloneqq CE(h_\theta(\tilde{x}^\prime),\tilde y)+\beta CE(h_\theta(\tilde x),\tilde y)\,,\\
    &\mathcal{L}_\theta^{TRADES}(\tilde z)\coloneqq CE(h_\theta(\tilde{x}^\prime),\tilde y)+\beta {KL}(h_{\theta}(\tilde{x}^\prime),h_\theta(\tilde x))\,,\notag\\
    &\mathcal{L}_\theta^{MART}(\tilde z)\coloneqq BCE(h_\theta(\tilde{x}),\tilde y)+\beta(1-[h_\theta(\tilde{x}^\prime)]_{\tilde y})
    {KL}(h_\theta(\tilde{x}^\prime)\|h_\theta(\tilde x))\,,\notag
\end{align}
where $h_\theta$ is the classifier, $CE$ denotes cross-entropy, $KL$ is the KL divergence, and $BCE$ is defined in Eq.~(8) of \cite{wang2019improving_mart}. The UDR adversarial training methods proposed in \cite{bui_UDR_2022unified} generalized \eqref{eq:OT_DRO_Bui} by allowing for alternative OT-costs; specifically, they relaxed the hard $\epsilon$-ball constraint in $c_d$ to  a soft penalty \eqref{eq:c_UDR}  (see also the related approaches in \cite{sinha2018certifying} and \cite{regniez2021distributional}).

 In the present work we further generalize the DRO approach to adversarial robustness by proposing a novel class of divergences for comparing probability distributions that combines features of both OT costs and information-theoretic divergences (such as KL). We use these new optimal-transport-regularized divergences to  define  distribution neighborhoods for use in DRO \eqref{eq:DRO_gen}. This leads us to propose a novel class of  adversarial training methods that simultaneously transport adversarial samples (with general OT cost) and  re-weight the adversarial samples according to the information-theoretic divergence.  The former feature is shared with the UDR method \cite{bui_UDR_2022unified} but the ability of our approach to use information from the loss  together with the OT cost in order to  adversarially re-weight samples in a principled and dynamical manner during training is a qualitatively new feature of our method; this feature follows naturally from our more general DRO framework, which `mixes' information-theoretic and OT divergences via an infimal convolution (see Eq.~\ref{eq:Dc_def_intro} below).  In practice, the adversarial re-weighting causes the optimization algorithm to focus  on the samples in each minibatch that are more vulnerable to adversarial perturbation. Our approach complements and can be combined with methods that modify the OT cost (e.g., UDR) as well as methods which focus on modifying the loss (e.g., TRADES and MART); in the experiments in Section \ref{sec:experiments} we demonstrate the performance benefit of our framework when used in combination with UDR, TRADES, and MART.

{\bf Optimal-Transport-Regularized Divergences:}
The new divergences that we introduce in this work  are defined as an infimal convolution between an optimal transport cost, $C$, and an information divergence, $D$, e.g., a $f$-divergence, $D=D_f$ \cite{{LieseVajda}}, of which the KL-divergence is one example. More precisely,  given an OT cost function $c(z,\tilde z)$ and an information divergence, $D$, we define the {\bf OT-regularized divergence}, $D^c$, of a distribution $Q$ with respect to a distribution  $P$ by
\begin{flalign}\label{eq:Dc_def_intro}
D^c(Q\|P)\coloneqq\inf_{\eta\in\mathcal{P}(\mathcal{Z})}\{D(\eta\|P)+C(\eta,Q)\}\,,
\end{flalign}
where $\mathcal{P}(\mathcal{Z})$ denotes the set of probability distributions on the space $\mathcal{Z}$ and the optimal transport cost associated with the cost function $c$ is given by (see \cite{villani2008optimal})
\begin{align}\label{eq:C_def_intro}
C(\mu,\nu)\coloneqq \inf_{\pi:\pi_1=\mu,\pi_2=\nu}\int  c(z,\tilde z)\pi(dzd\tilde z)\,.
\end{align}
The only assumptions we make regarding $c$ are non-negativity, lower semicontinuity, and that $c(z,z)=0$ for all $z$.   Intuitively, one can view \eqref{eq:Dc_def_intro} as specifying a cost via a two-step procedure for transforming $P$ into $Q$.  First, one redistributes the probability-mass in $P$ to form an intermediate distribution $\eta$, paying the cost $D(\eta\|P)$ (we say redistribute because we focus on $D$ that are information divergences, meaning they are computable in terms of the likelihood ratio $d\eta/dP$, though most of our theorems in Section \ref{sec:proofs} apply more generally).  Second, one performs optimal transport to transform $\eta$ into $Q$, paying the cost $C(\eta,Q)$.  The optimal intermediate measure $\eta_*$ determines the final cost $D^c(Q\|P)$. The infimal convolution structure, including the bound $D^c(Q\|P)\leq \min\{D(Q\|P),C(P,Q)\}$, causes $D^c$ to inherit properties from both $D$ and $C$ and allows it to interpolate between these two extremes; see Section \ref{sec:properties}. The OT-regularized divergences are related to the $\Gamma$-divergences defined in \cite{Dupuis:Mao}, $(f,\Gamma)$-divergences defined in \cite{JMLR:v23:21-0100}, and the IC-$\Gamma$-R{\'e}nyi divergences from \cite{birrell2023functionspace}, but here we utilize optimal transport costs as opposed to integral-probability-metric (IPM) regularization of information divergences.  We found that OT-regularization is more naturally suited to adversarial robustness methods than IPM regularization from a mathematical perspective; { specifically, the definition of OT-cost via a minimization over couplings \eqref{eq:C_def_intro} pairs well with the structure of the computation in \eqref{eq:DRO_initial}\,-\,\eqref{eq:DRO_identity_formal} below, often leading to an equivalent low dimensional optimization as in \eqref{eq:Df_Gibbs} and \eqref{eq:Renyi_cc}. In contrast, the definition of an IPM in terms of maximizing over a space of test functions does not naturally lead to  such relatively simple and computationally   tractable reformulations. } In addition,  those prior works on IPM regularization focused on the  equality of the primal and dual formulas for the divergence, which facilitates applications to GANs; here we focus on adversarial robustness, which requires different techniques.

{  Alternative approaches to regularizing  OT have  previously appeared in the DRO literature, such as  Sinkhorn-DRO \cite{wang2021sinkhorn}, which uses the well-known Sinkhorn divergence (entropic regularization) \cite{cuturi2013sinkhorn}, and  \cite{azizian2023regularization} which considers more general  penalties (e.g., generalizing Sinkhorn to use other $f$-divergence penalties).  While at a high-level our approach shares some features with these convex-penalty regularization methods, our framework is distinguished  by the use of an infimal  convolution of two different divergences \eqref{eq:Dc_def_intro}.  This has a very different operational meaning than entropic (or more general $f$-divergence penalty) regularization, which penalizes the deviation of the transport plan from a baseline coupling (e.g., a product).  In contrast, our approach has the intuitive interpretation of an  optimal two stage transformation process that is jointly controlled by OT and a second divergence; we argue that this is more appropriate for applications such as adversarial training, wherein it corresponds to an optimal combination of adversarially reweighting and transporting of the training  samples.}

We use the OT-regularized divergences \eqref{eq:Dc_def_intro} to define distribution neighborhoods of size $\epsilon>0$; this leads to  the following OT-regularized DRO problem, which we will employ as a tool for enhancing adversarial robustness:
\begin{align}\label{eq:DRO_Dc}
\inf_\theta\sup_{Q: D^c(Q\|P_n)\leq \epsilon}E_Q[\mathcal{L}_\theta]\,.
\end{align}
The OT-regularized-divergence neighborhoods are qualitatively different from both $f$-divergence and optimal-transport neighborhoods, as they allow for a combination of probability-mass transport and redistribution when forming the perturbed distributions, $Q$.  This allows for the support of $Q$ to differ from that of $P_n$ (as in \cite{bui_UDR_2022unified}) and also for the probability of widely separated modes to be re-weighted, something that is not possible with pure optimal-transport neighborhoods.  

When viewed as tools for adversarial training, we call \eqref{eq:DRO_Dc} the  $ARMOR_D$ methods, standing for \textbf{A}dversarially \textbf{R}obust \textbf{M}odels with \textbf{O}ptimal-Transport-\textbf{R}egularized \textbf{D}ivergences.  We note that one retains the freedom to choose the loss function in \eqref{eq:DRO_Dc}, with the options in \eqref{eq:PGD_T_M_losses} leading to the $ARMOR_D-PGD$, $ARMOR_D-TRADES$, and $ARMOR_D-MART$ methods. Furthermore, by an appropriate choice of OT cost, one obtains $ARMOR_D-UDR$ variants of these methods. 

In Section \ref{sec:formal_derivation}, we show that, for a general choice of loss function and OT cost and for appropriate choices of $D$, the DRO problem \eqref{eq:DRO_Dc} can be converted into a computationally tractable optimization problem; in particular, we find that the inclusion of the information divergence $D$ adds only slightly to the computational cost, as compared to pure OT-based methods. In Section \ref{sec:reweighting} we provide a formal solution in the case $D=D_f$, thereby clarifying the manner in which our method combines optimal transport and adversarial re-weighting.    In Section \ref{sec:properties} we  list a number of properties of the OT-regularized divergences, thus demonstrating that they are well-behaved mathematical objects; precise statements and proofs are found in Section \ref{sec:proofs}. In Section \ref{sec:experiments} we empirically evaluate the $ARMOR_D$ methods on  CIFAR-10 and CIFAR-100 image classification, where we find it offers significant performance gains. 

{ In sum, we find the proposed  $ARMOR_D$ methods to be promising tools for addressing the persistent and pressing threat that adversarial attacks  present to real-world AI systems \cite{survey,adversarial_IEEEaccess,carlini_1}.  The $ARMOR_D$ framework is sufficiently flexible to be layered onto other methods, as demonstrated in the examples in Section \ref{sec:experiments}, with little added computational cost. Moreover, our general DRO framework \eqref{eq:DRO_Dc} can be applied to any  stochastic optimization problem,  including (but not limited to) the deep-learning classification problems considered below.}

\subsection{Comparison with Related DRO Frameworks}\label{sec:related_approaches}
{ 
Two related, but distinct, DRO approaches to ours, both also utilizing a combination of optimal transport and reweighting, were released in close proximity to the present work. He we provide a detailed comparison between them and our OT-regularized-divergence framework.

1) The smoothed $f$-divergence method proposed in \cite{liu2023smoothed} is defined in terms of minimizing an $f$-divergence over an OT-cost ball. They primarily focus on   deriving statistical bounds on the out-of-sample performance.  As they show in their Theorem 6.3, their smoothed $f$-divergences are closely related to infimal convolutions. However, our framework is significantly more general, in  that it allows for OT-regularization of more than $f$-divergences, e.g.,  R{\'e}nyi divergences \eqref{eq:Renyi_cc};  moreover, our Theorem \ref{thm:CC_general_D}  provides a concrete tool for the construction   of a wide range of   divergences to which   our OT-regularized divergence theory applies.  

2)  The conditional moment constraint approach of \cite{2023arXiv230805414B}  generalizes standard OT by introducing a cost function on $\mathcal{Z}\times\mathbb{R}^+$ that simultaneously allows for probability-mass transport and redistribution. In the special case of $D=D_f$,  our approach agrees with the use of a particular form of cost function in  the method of \cite{2023arXiv230805414B}, with both  reducing to the same finite dimensional optimization problem under appropriate assumptions; see their Theorems 4.1, 5.1, and Proposition 5.1 and compare with  our Theorem \ref{thm:DRO_gen_P_gen_D} and Eq.~\eqref{eq:OT_reg_Df_DRO}-\eqref{eq:OT_reg_KL_DRO}.  However, both our approach and that of \cite{2023arXiv230805414B} allow for a great deal of freedom in constructing new divergences, with neither clearly subsuming the other at a   general level.   In particular, note that \eqref{eq:DH_DRO}, which is the combination of our Theorems \ref{thm:DRO_gen_P_gen_D} and \ref{thm:CC_general_D}, allows for divergences to be constructed from functionals $H_P$ that can depend nonlinearly on the expectations under $P$, as in R{\'e}nyi divergences \eqref{eq:Renyi_cc} or variance penalties.  It is not clear that such terms can be incorporated into the approach of \cite{2023arXiv230805414B}; see  Eq.(D) above their Theorem 4.1. 

In addition, both \cite{liu2023smoothed} and \cite{2023arXiv230805414B} assume the probability distributions under consideration have compact support, whereas much of our theory, including the tractable DRO reformulation results in Theorem \ref{thm:DRO_gen_P_gen_D} and Corollary \ref{cor:OT_f_div_DRO}, apply to distributions with non-compact support. Our work is also distinguished   through the proofs of a number of  properties of the OT-regularized divergences that do not have analogues in \cite{liu2023smoothed,2023arXiv230805414B} (see the summary in Section \ref{sec:properties}), and through our novel use of \eqref{eq:OT_reg_Df_DRO}, \eqref{eq:OT_reg_KL_DRO}, and \eqref{eq:OT_reg_Renyi_DRO} as tools for enhancing adversarial robustness,  where  we find substantial performance gains with only a slight increase in computational cost. }

\section{OT-Regularized Divergences: DRO Identity and Properties}\label{sec:formal_derivation}
In general, the DRO problem \eqref{eq:DRO_Dc} is an intractable infinite dimensional optimization problem. However, for appropriate choices of $D$ one can derive a finite dimensional reformulation that leads to computationally efficient implementations.  In this section we provide a formal derivation of the key identity.  For  the required assumptions and a rigorous proof, see Section \ref{app:DRO_identity_proofs}.  

Noting that $C$ is jointly convex and assuming that $D$ is convex in its first argument (as is the case when $D$ is a $f$-divergence) one can see from \eqref{eq:Dc_def_intro} that $D^c$ is convex in its first argument. Therefore the DRO problem is a convex optimization problem and one can compute 
\begin{align}
&\sup_{Q: D^c(Q\|P_n)\leq \epsilon}E_Q[\mathcal{L}_\theta]\label{eq:DRO_initial}\\
=&\inf_{\lambda>0}\{\lambda \epsilon+\sup_{Q\in\mathcal{P}(\mathcal{Z})}\{E_Q[\mathcal{L}_\theta]-\lambda D^c(Q\|P_n)\}\}\label{eq:convex_duality}\\
=&\inf_{\lambda>0}\{\lambda \epsilon+\sup_{Q,\eta\in\mathcal{P}(\mathcal{Z})}\{E_Q[\mathcal{L}_\theta]-\lambda D(\eta\|P_n)-\lambda C(\eta,Q)\}\}\label{eq:DRO_identity_Dc_def}\\
=&\inf_{\lambda>0}\{\lambda \epsilon+\sup_{\eta\in\mathcal{P}(\mathcal{Z})}\{ -\lambda D(\eta\|P_n)+ \sup_{Q\in \mathcal{P}(\mathcal{Z})}\sup_{\pi:\pi_1=\eta,\pi_2=Q}\{E_Q[\mathcal{L}_\theta]-\lambda E_\pi[c]\}\}\}\label{eq:DRO_identity_C_def}\\
=&\inf_{\lambda>0}\left\{\lambda \epsilon+\lambda\sup_{\eta\in\mathcal{P}(\mathcal{Z})}\left\{ - D(\eta\|P_n)+ \sup_{\pi_z(d\tilde z)}\int \int \mathcal{L}_\theta (\tilde z)/\lambda -  c(z,\tilde z)\pi_z(d\tilde z)\eta(dz)\right\}\right\}\label{eq:DRO_identity_prob_kernel}\\
=&\inf_{\lambda>0}\left\{\epsilon\lambda +\lambda\sup_{\eta\in\mathcal{P}(\mathcal{Z})}\left\{  \int\lambda^{-1} \sup_{\tilde z\in\mathcal{Z}}\{ \mathcal{L}_\theta (\tilde z) -  \lambda c(z,\tilde z)\}\eta(dz)- D(\eta\|P_n)\right\}\right\}\,.\label{eq:DRO_identity_formal}
\end{align}
The equality \eqref{eq:convex_duality} is obtained using strong duality, lines \eqref{eq:DRO_identity_Dc_def} and \eqref{eq:DRO_identity_C_def} are obtained using the definitions \eqref{eq:Dc_def_intro} and \eqref{eq:C_def_intro} of $D^c$ and $C$ along with properties of suprema and infima, \eqref{eq:DRO_identity_prob_kernel} recognizes that the suprema over $Q$ and $\pi$ can be rewritten as a supremum over probability kernels $\pi_z(d\tilde z)$, and finally   \eqref{eq:DRO_identity_formal} uses the fact that  the supremum over probability kernels  achieves the pointwise supremum of the integrand. To this point, the derivation closely follows that of \cite{EsfahaniKuhn2018} for Wasserstein DRO, as well as the adversarial robustness approach by \cite{bui_UDR_2022unified}. Note that  effect of the OT cost is  to replace the loss $\mathcal{L}_\theta$ with what we call the OT-regularized loss
\begin{align}\label{eq:OT_regularized_loss_main}
\mathcal{L}^{c}_{\theta,\lambda}(z)\coloneqq  \sup_{\tilde z\in\mathcal{Z}}\{ \mathcal{L}_\theta (\tilde z) -  \lambda c(z,\tilde z)\}\,, 
\end{align}
which is known as the $c$-transform in the optimal transport literature; see Definition  5.2 in \cite{villani2008optimal}. The importance of the $c$-transformed loss for Wasserstein DRO is well known; see the references to prior work on Wasserstein and OT-DRO in the introduction. The supremum over $\tilde z\in\mathcal{Z}$ in \eqref{eq:OT_regularized_loss_main} can be thought of as selecting an adversarial sample that is paired with each real sample, $z$. We note that our mathematical framework can be used to robustify any empirical risk minimization problem, not only classification, and so our notation here does not  explicitly decompose the variables into sample and label components.

\subsection{Convex Conjugate of $D$}
Comparing our OT-regularized-divergence DRO framework with Wasserstein and OT-DRO, the new ingredient  is the optimization over $\eta$ in \eqref{eq:DRO_identity_formal}. This can be recognized as  the convex-conjugate of $\eta\mapsto D(\eta\|P_n)$ and for certain choices of $D$ this term can be reformulated as a finite dimensional convex optimization problem.  In particular, here we present explicit results for two  important families of divergences, namely the $f$-divergences and R{\'e}nyi divergences. { 
Section \ref{sec:CC_general_D} below presents a more general theory that can be applied to other convex functionals.}
\begin{enumerate}
    \item {\bf $f$-Divergences:} Using the generalization of the Gibbs variational principle to $f$-divergences, see Theorem 4.2 in \cite{BenTal2007}, one has
\begin{align}\label{eq:Df_Gibbs}
\sup_{\eta\in\mathcal{P}(\mathcal{Z})} \{E_\eta[\phi]-D_f(\eta\|P)\}=\inf_{\rho\in\mathbb{R}}\{\rho+ E_P[f^*(\phi-\rho)]\}\,,
\end{align}
where $f^*$ is the Legendre transform of $f$.  
\item { {\bf R{\'e}nyi Divergences:} Building off of the variational representation of R{\'e}nyi divergences that was derived in \cite{birrell2023functionspace}, one can compute the convex conjugate of the R{\'e}nyi divergences:
\begin{align}\label{eq:Renyi_cc}
&\sup_{\eta\in \mathcal{P}(\mathcal{Z})}\{E_\eta[\phi]-R_\alpha(P\|\eta)\}
= \inf_{\rho\in\mathbb{R}}\{\rho+\Lambda_\alpha^P[\phi-\rho]\}\,,\\
&\Lambda_\alpha^P[g]\coloneqq-\left(\frac{1}{\alpha-1}\log E_P\left[ |g|^{(\alpha-1)/\alpha} \right]+\alpha^{-1}(\log\alpha +1)\right)1_{g<0}+\infty1_{g\not<0}\,.\notag
\end{align}
To the best of the authors' knowledge, the above result is new; see Theorem \ref{thm:Renyi_cc} in Appendix \ref{app:additional_proofs} for details. Note the  reversal of the order of arguments in $R_\alpha(P\|\eta)$ as compared to $D_f(\eta\|P)$ in \eqref{eq:Df_Gibbs}.  This is due to the R{\'e}nyi divergences being convex only in their second argument when $\alpha>1$.
}
\end{enumerate}
{ In the numerical experiments in Section \ref{sec:experiments} we primarily focus on the case where $D$ is a $f$-divergence, with some additional exploration of R{\'e}nyi divergences. Beyond this, we emphasize that our general framework  opens the door to   further exploration and experimentation with a broad range of alternative convex functionals; see Theorem \ref{thm:CC_general_D} for a theoretical tool that enables such experimentation.

Choosing $D=D_f$, by combining \eqref{eq:DRO_initial}\,-\,\eqref{eq:DRO_identity_formal} with \eqref{eq:Df_Gibbs} } we obtain the following finite-dimensional reformulation of the OT-regularized DRO problem
\begin{align}\label{eq:OT_reg_Df_DRO}
\inf_{\theta\in\Theta}\sup_{Q: D_f^c(Q\|P_n)\leq \epsilon}E_Q[\mathcal{L}_\theta]=\inf_{\lambda>0,\rho\in\mathbb{R},\theta\in\Theta}\left\{ \epsilon\lambda+ \rho+\lambda \frac{1}{n}\sum_{i=1}^nf^*(\lambda^{-1}(\mathcal{L}_{\theta,\lambda}^{c}(z_i)-\rho))\right\}\,.
\end{align}
Here we made the  change of variables $\rho\to \rho/\lambda$ so that  the objective function is jointly convex in $\lambda,\rho$ (see Corollary \ref{cor:OT_f_div_DRO}). Note that the new variables $\lambda,\rho$ simply augment the minimization over model  parameters $\theta$ by adding two real variables, which adds a relatively small additional computational cost when applied to deep learning (due to the dimension of $\theta$ being large). The $\lambda$ parameter has the same interpretation as in the  OT-DRO based method \cite{bui_UDR_2022unified}; it can be viewed as a dynamical OT-cost weight, selected according to the optimization \eqref{eq:DRO_identity_formal}, which is tied to the neighborhood size $\epsilon$.  This perspective is most apparent in \eqref{eq:convex_duality}. The significance of $\rho$ will be discussed in Section \ref{sec:reweighting} below. In Section \ref{sec:experiments} we will experiment with the KL divergence and the family of $\alpha$-divergences (i.e., $f=f_\alpha$ as in Eq.~\ref{eq:f_alpha_def}), which we call the $ARMOR_{KL}$ and $ARMOR_\alpha$ methods respectively. An explicit formula for $f^*$ in the case of $\alpha$-divergences is given in \eqref{eq:f_alpha_star}. In the KL-divergence case the minimization over $\rho$ can be evaluated analytically, yielding
\begin{align}\label{eq:OT_reg_KL_DRO}
\inf_{\theta\in\Theta}\sup_{Q: KL^c(Q\|P_n)\leq \epsilon}E_Q[\mathcal{L}_\theta]=\inf_{\lambda>0,\theta\in\Theta}\left\{ \epsilon\lambda+\lambda \log\left(\frac{1}{n}\sum_{i=1}^n\exp(\lambda^{-1}\mathcal{L}_{\theta,\lambda}^{c}(z_i))\right)\right\}\,.
\end{align}
{ Similarly, choosing $D(Q\|P)=R_\alpha(P\|Q)$ leads to the $ARMOR_{R{\'e}n}$ methods
\begin{align}\label{eq:OT_reg_Renyi_DRO}
\inf_{\theta\in\Theta}\sup_{Q:R^c_\alpha(P_n\|Q)\leq \epsilon}E_Q[\mathcal{L}_\theta]=\inf_{\lambda>0,\rho\in\mathbb{R},\theta\in\Theta}\left\{\epsilon\lambda+\rho+\lambda \Lambda_\alpha^{P_n}\left[\lambda^{-1}(\mathcal{L}^c_{\theta,\lambda}-\rho)\right]\right\}\,.
\end{align}
We will refer to any of \eqref{eq:OT_reg_Df_DRO}, \eqref{eq:OT_reg_KL_DRO}, or \eqref{eq:OT_reg_Renyi_DRO}} as the outer minimization problem and will call \eqref{eq:OT_regularized_loss_main}  the inner maximization problem.

\subsection{Interpreting the Outer Minimizer: Adversarial Sample Weights}\label{sec:reweighting}

In this section we give an intuitive interpretation of the minimization over the auxiliary parameters $\lambda,\rho$ in the $f$-divergence case  \eqref{eq:OT_reg_Df_DRO}; they can  be viewed as the computation of optimal adversarial weights for the adversarial samples, where optimality is defined in part by the chosen $f$-divergence. This is a complement to the inner maximizer \eqref{eq:OT_regularized_loss_main} which constructs the optimally transported adversarial samples, according to the chosen OT cost function. This interpretation gives insight into the qualitatively novel nature of our method.

Letting $\tilde z_i(\lambda)$ be the solution to the inner maximizer \eqref{eq:OT_regularized_loss_main} with $z=z_i$ and $\lambda_*$, $\rho_*$ be the optimal scaling and shift parameters  for the outer minimizer at a fixed $\theta$ (we suppress the $\theta$-dependence of $\tilde z_i$, $\lambda_*$, and $\rho_*$ in the notation) we formally derive the following reformulation of \eqref{eq:OT_reg_Df_DRO} in Appendix \ref{app:reweighting}:
\begin{align}\label{eq:OT_reg_Df_DRO_optim_main}
\inf_{\lambda>0,\rho\in\mathbb{R}}\left\{ \epsilon\lambda+ \rho+\lambda \frac{1}{n}\sum_{i=1}^nf^*(\lambda^{-1}(\mathcal{L}_{\theta,\lambda}^{c}(z_i)-\rho))\right\}=E_{Q_{*,\theta}}[\mathcal{L}_\theta]\,,
\end{align}
where the optimal adversarial distribution is $Q_{*,\theta}\coloneqq\sum_{i=1}^n p_{*,i} \delta_{\tilde z_i(\lambda_*)}$, 
having optimal adversarial weights
\begin{align}\label{eq:new_weights_main}
   p_{*,i}\coloneqq\frac{1}{n}(f^*)^\prime((\mathcal{L}^c_{\theta,\lambda_*}(z_i)-\rho_*)/\lambda_*)\,.
\end{align}
This shows that the minimization over $\theta$ in \eqref{eq:OT_reg_Df_DRO} solves the risk minimization problem for the ($\theta$-dependent) optimal adversarial distribution $Q_{*,\theta}$. The optimal adversarial distribution is supported on the optimal adversarial samples  $\tilde z_i(\lambda_*)$ and the weight of the $i$'th sample is changed from $1/n$ to $p_{*,i}$ \eqref{eq:new_weights_main}.  To understand the significance of the re-weighting  $p_{*,i}$, first recall that $f^*$ is non-decreasing (see Definition \ref{def:f_div} and Corollary \ref{cor:OT_f_div_DRO}), hence $p_{*,i}\geq 0$. In  \eqref{eq:new_weight_sum} we show that the $p_{*,i}$'s sum to $1$. Convexity of $f^*$ implies that $(f^*)^\prime$ is also non-decreasing, hence the $p_{*,i}$'s shift more weight towards the samples where the OT-regularized loss is larger, as would be expected for an adversarial re-weighting.  In some cases, such as for the $\alpha$-divergences, $f^*$ is constant on $(-\infty,M)$ for some $M$ ($M=0$ when $f=f_\alpha$, as seen in Eq.~\ref{eq:f_alpha_star}).  In such cases, samples with $\mathcal{L}^c_{\theta,\lambda_*}(z_i)<\rho_*+M\lambda_*$  have their weighting changed to $0$. Intuitively, one can consider those samples as having sufficiently small OT-regularized loss and hence the method moves its attention away from them to focus on the more troublesome samples. Note that samples are only temporarily ignored; attention may return to them later in the training if their loss moves above the (dynamic) threshold.  Part of the task of the outer minimizer is to dynamically determine the optimal threshold for `sufficient smallness', as set by $\rho_*+M\lambda_*$. We note that this threshold changes with $\theta$, as  $\lambda_*$ and $\rho_*$ are both $\theta$-dependent. The ability of the  $ARMOR_D$ methods to re-weight adversarial samples in addition to transporting them is the primary innovation of our approach, as compared to the prior OT-DRO based robustness method \cite{bui_UDR_2022unified} or the earlier soft-constraint based method \cite{sinha2018certifying}.  As we demonstrate in the examples in Section \ref{sec:experiments}, this is a powerful new ingredient  and is made possible because our DRO neighborhoods incorporate both information-theoretic and OT components via the infimal convolution  \eqref{eq:Dc_def_intro}. Our approach is distinct from the re-weighting method proposed in \cite{guo2022learning} for addressing the problem of class imbalance in the training data, which is not an adversarial re-weighting. Our method is also distinct from the approach in \cite{zhang2020geometry} where modified weights were introduced manually, based on an informal notion of distance to the decision boundary. In contrast,  re-weighting in $ARMOR_D$ is determined in a principled manner by the DRO framework, via the choice of $f$ and $c$; it uses information from the OT-regularized loss of each sample during training, along with a dynamic threshold, as seen in \eqref{eq:new_weights_main}.  In particular the adaptive threshold, which determines which samples the optimizer currently considers `troublesome', is a qualitatively novel feature of our method.  



\subsection{Properties of the OT-Regularized Divergences}\label{sec:properties} The OT-regularized divergences have many attractive mathematical properties, making them well suited to DRO as well as other statistical learning tasks. We summarize a number of these properties here;  see Section \ref{sec:proofs} for precise statements of the required assumptions along with proofs. Given appropriate assumptions on $D$ and $c$ one has the following:
\begin{enumerate}
\item $D^c(\nu\|\mu)\geq 0$ and $D^c(\nu\|\mu)=0$ if and only if $\nu=\mu$; see Theorem \ref{thm:D_c_div_property}. This divergence property implies that $D^c(\nu\|\mu)$ can be interpreted as measuring the discrepancy between $\nu$ and $\mu$.
\item There exists an optimal intermediate distribution that solves the minimization problem in the definition \eqref{eq:Dc_def_intro}. More specifically, there exists $\eta_*$ such that
\begin{align}
D^c(\nu\|\mu)=D(\eta_*\|\mu)+C(\eta_*,\nu)
\end{align}
and this $\eta_*$ is unique under appropriate assumptions; see Theorem \ref{thm:inf_conv_solution}.
\item $D^c(\nu\|\mu)$ is convex in $\nu$ (see Lemma \ref{lemma:D_c_convex_property}).  This implies that the DRO neighborhoods $\{Q:D^c(Q\|P_n)\leq \epsilon\}$ are convex sets and is also key in the derivation of the DRO identity \eqref{eq:DRO_identity_formal}. 
\item $D^c(\nu\|\mu)$ is lower semicontinuous in $\nu$ (see Theorem \ref{thm:Dc_LSC}). This property is useful for theoretical purposes and it implies that the DRO neighborhoods $\{Q:D^c(Q\|P_n)\leq \epsilon\}$ are closed sets.
\item $D^c$ interpolates between $D$ and $C$ in the following sense:  For $r>0$ define the scaled cost function $c_r=rc$.  Then
\begin{align}
&\lim_{r\to0^+}r^{-1}D^{c_r}(\nu\|\mu)=C(\mu,\nu) \,\,\,\, \text{(see Theorem \ref{thm:limit_D_c_to_C})}\,,\label{eq:Dc_to_C_preview}\\
&\lim_{r\to \infty}D^{c_r}(\nu\|\mu)=D(\nu\|\mu) \,\,\,\, \text{(see Theorem \ref{thm:limit_D_c_to_D}).}\label{eq:Dc_to_D_preview}
\end{align}
Informally, this property implies that DRO over both $D$ and $C$ neighborhoods can be viewed as special cases of DRO over $D^c$ neighborhoods. More specifically, \eqref{eq:Dc_to_C_preview} indicates that when $r$ is sufficiently small, DRO over the neighborhood $\{Q:D^{c_r}(Q\|P_n)\leq r\epsilon\}$ is approximately the same as DRO over the neighborhood $\{Q:C(P_n,Q)\leq \epsilon\}$. Similarly, \eqref{eq:Dc_to_D_preview} indicates that when $r$ is sufficiently large, DRO over the neighborhood $\{Q:D^{c_r}(Q\|P_n)\leq \epsilon\}$ is approximately the same as DRO over the neighborhood $\{Q:D(Q\|P_n)\leq \epsilon\}$ (see Theorems \ref{thm:DRO_limit_r_0} and \ref{thm:DRO_limit_r_infty} for precise statements).  Therefore if one includes the scale factor $r$ and neighborhood size $\epsilon$ as tunable hyperparameters (as we do in the experiments in Section \ref{sec:experiments}) then the special cases of $C$ and $D$ neighborhoods will be (approximately) explored in the process of tuning an $ARMOR_D$ method.
\end{enumerate}
We note that these properties   do not require the distributions to have compact support, except for the DRO interpolation results in  Theorems \ref{thm:DRO_limit_r_0} and \ref{thm:DRO_limit_r_infty}.

\section{Experimental Results}\label{sec:experiments}
We empirically evaluate the $ARMOR_D$ adversarial robustness methods using several widely-used deep-learning-based image classification benchmark datasets: MNIST, CIFAR-10, and CIFAR-100.  In the  preliminary evaluation on MNIST, we compare the performance of different variants of $ARMOR_D$, including the use of different information divergences. Then on CIFAR, we demonstrate the ability of $ARMOR_D$ to augment various robust losses used in the literature and show that it leads to substantial performance improvements, thereby illustrating the effectiveness of our adversarial re-weighting mechanism in $ARMOR_D$. The experimental setup and the threat model in our experiments follow \cite{bui_UDR_2022unified} and \cite{carlini2017towards} as detailed in Section~\ref{app:setup} and Section~\ref{app:threat_model}. 

\subsection{Preliminary Evaluation of $ARMOR_D$ Variants}
\label{sec:mnist}
To gain initial insight regarding the various $ARMOR_D$ variants, before we conduct comparisons with counterparts on large datasets, we present the results of evaluating  $ARMOR_D$ on the standard MNIST dataset for various $D$'s and with or without natural samples. The results are summarized in Table~\ref{tab:mnist}.

\begin{table}[H]
\small
\centering
\setlength\tabcolsep{1.5pt}
\caption{{\bf Preliminary Evaluation of $ARMOR_D$ Variants on MNIST} Comparison of the performance of our proposed method for enhancing the robustness on the MNIST dataset. 
$adv_s$  denotes the use of adversarial samples constructed via \eqref{eq:OT_regularized_loss_main} and $nat$ refers to the use of natural samples alongside the adversarial samples, as described in Section \ref{app:adv+nat_description}. }
\label{tab:mnist}
\begin{tabular}{|l|l|l|l||l|l|l||l|l|l|}
\hline
& \multicolumn{3}{c|}{PGD Attack} & \multicolumn{3}{c|}{FGSM Attack} & \multicolumn{3}{c|}{No Attack}\\\hline

\textbf{Defense} & Acc & FNR & FPR & Acc & FNR & FPR & Acc & FNR & FPR\\ \hline

Non-robust & 25.36\% & 74.23\% & 8.29\% & 52.76\%& 46.96\%& 5.24\% & 99.01\% & 1.00\% & 0.11\%\\ \hline\hline

FGSM & 96.63\% & 3.39\% & 0.37\%& 97.45\%& 2.57\% & 0.28\%& 99.05\%& 0.96\%& 0.11\%\\ \hline

PGD & 97.10\% & 2.92\% &0.32\% & 97.26\% & 2.76\% & 0.30\%& 99.15\%& 0.85\%& 0.09\%\\ \hline





$ARMOR_{R{\'e}n}$ ($adv_{s}$) & 97.46\% & 2.56\%& 0.28\%& 97.64\% & 2.38\% & 0.26\%&98.99\% &1.02\%  & 0.11\% \\ \hline

$ARMOR_{R{\'e}n}$ ($adv_{s}+nat$) & 97.32\% & 2.71\%& 0.29\% & 97.55\% & 2.48\% & 0.27\% & 98.87\%& 1.15\% & 0.12\%\\ \hline

$ARMOR_{\alpha}$ ($adv_s$) & \underline{97.31\%} & \underline{2.71\%} & \underline{0.30\%} & {97.63\%} & {2.40\%} & {0.26\%}& 99.15\%& 0.86\%& 0.09\% \\ \hline

$ARMOR_{\alpha}$ ($adv_s+nat$) & \underline{98.05\%} & \underline{1.97\%} & \underline{0.22\%} & \underline{\textbf{98.25\%}} & \underline{\textbf{1.77\%}} & \uline{\textbf{0.19\%}}& \uline{\textbf{99.30\%}} & \uline{\textbf{0.70\%}} & \uline{\textbf{0.08\%}} \\ \hline

$ARMOR_{KL}$ ($adv_s$) & \underline{97.68\%}  & \underline{2.34\%} & \underline{0.26\%} & \underline{98.10\%} & \underline{1.92\%} & \underline{0.21\%} & {99.25\%} & {0.76\%}& {0.08\%} \\ \hline

$ARMOR_{KL}$ ($adv_s+nat$) & \underline{\textbf{98.11\%}}  & \underline{\textbf{1.91\%}} & \underline{\textbf{0.21\%}} & \underline{98.23\%} & \underline{1.78\%} & \underline{0.20\%}& 99.09\%& 0.92\%& 0.10\% \\ \hline

\end{tabular}
\begin{tablenotes}
      \small
      \item \textit{Note:} Best results are shown in bold font.  The results for methods that outperform the non-robust model and prior adversarial robustness methods across all three metrics are underlined.
\end{tablenotes}
\end{table}

Specifically, we consider the cases where $D$ is a R{\'e}nyi divergence, $\alpha$-divergence, or the KL divergence. We observe that all variants of $ARMOR_D$  tested here lead to performance improvements when under attack, as compared to the FGSM and PGD defenses. The best performance is obtained by $ARMOR_D$ where $D$ is an $\alpha$-divergence or the KL divergence, and using a mixture of adversarial and natural samples; we focus on these cases in the experiments presented in Section \ref{sec:CIFAR10} below.

\subsubsection{Qualitative Analysis of $ARMOR_D$}
\label{mnist_further}

To qualitatively verify the effectiveness of the proposed method, we examine a subset of MNIST test digits for which the performance of the FGSM, PGD, and our $ARMOR_{\alpha}$  methods differ, i.e., the digits for which at least one, but not all three of these methods failed under adversarial attack. This subset consists of 225 digits out of the 10,000 digit MNIST test set. The models robustified with FGSM, PGD, and our method correctly classified 54 digits (24.00\%), 100 digits (44.44\%), and 198 digits (88.00\%) out of this set, respectively. Figure~\ref{fig:mnist} shows the results on 12 randomly selected examples from the constructed set of 225 digits. We see that our method, $ARMOR_{\alpha}$ ($adv_{s}+nat$) shown in Figure~\ref{fig:ours}, exhibits significantly greater robustness on this subset of digits, though there remains a subset of ``difficult'' samples on which all three methods fail under adversarial attack. 


\begin{figure}[!h]
     \captionsetup[subfigure]{aboveskip=-1pt, belowskip=-1pt, justification=centering}
     \begin{subfigure}[b]{0.48\textwidth}
         \centering
         \includegraphics[width=\textwidth]{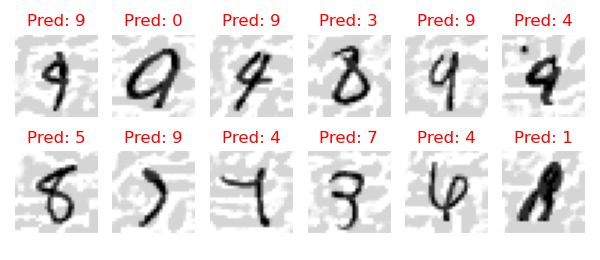}
         \caption{Non-robustified CNN model under the $PGD^{40}$ attack\newline}
         \label{fig:plain}
     \end{subfigure}
     \hfill
     \begin{subfigure}[b]{0.48\textwidth}
         \centering
         \includegraphics[width=\textwidth]{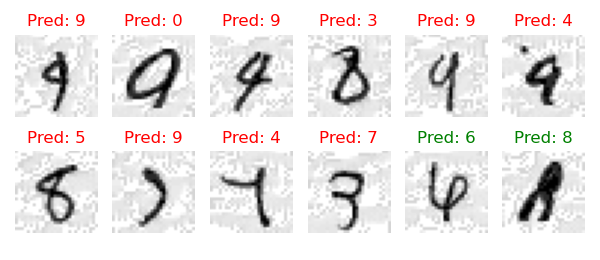}
         \caption{Robustified CNN model using adversarial training with FGSM under the $PGD^{40}$ attack}
         \label{fig:fgsm}
     \end{subfigure}
     \\
     \begin{subfigure}[b]{0.48\textwidth}
         \centering
         \includegraphics[width=\textwidth]{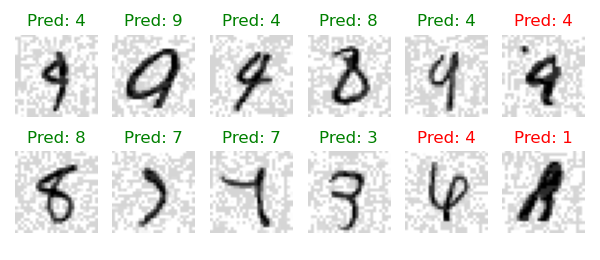}
         \caption{Adversarial training with $PGD^{20}$ under the $PGD^{40}$ attack}
         \label{fig:pgd}
     \end{subfigure}
     \hfill
     \begin{subfigure}[b]{0.48\textwidth}
         \centering
         \includegraphics[width=\textwidth]{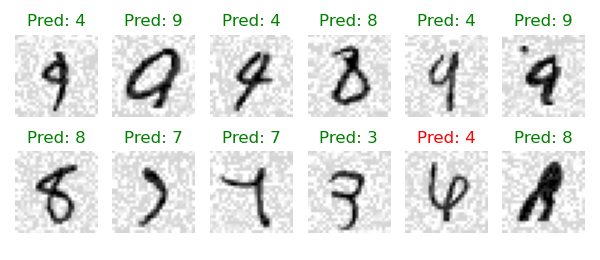}
         \caption{Adversarial training with our $ARMOR_{\alpha}$ under the $PGD^{40}$ attack}
         \label{fig:ours}
     \end{subfigure}
        \caption{The majority of the images, all except the second digit in the bottom row (``7''), were classified correctly by the non-robust CNN model when not under attack, but all 12 samples modified with the $PGD^{40}$ attack successfully mislead the non-robust CNN (Figure~\ref{fig:plain}). The robustified model via adversarial training with FGSM leads to two correct predictions (for digits ``6'' and ``8'') (Figure~\ref{fig:fgsm}). The number of correct predictions in this sample increases to 9 via adversarial training with $PGD^{20}$ (Figure~\ref{fig:pgd}). The number of correct predictions increases to 11 with our proposed method under the same attack (Figure~\ref{fig:ours}). }
        \label{fig:mnist}
\end{figure}

\subsection{Benchmark Comparisons}\label{sec:CIFAR10}

In this experiment we demonstrate the flexibility of  $ARMOR_D$ as a tool for augmenting previous approaches to adversarially robust learning and show the performance benefits  conferred by its novel features; specifically, we will compare UDR, TRADES, and MART with their $ARMOR_D$ variants.

{\bf Benchmark Methods and Evaluation Metrics:} Following \cite{bui_UDR_2022unified}, we evaluate the methods against the Projected Gradient Method ($PGD^{200}$) attack and the much stronger and more recent  AutoAttack  \cite{croce2020reliable}.  In these experiments we are evaluating the effect of  $ARMOR_D$'s adversarial re-weighting mechanism  and therefore we leave the OT cost unchanged (i.e., $c_d$ in \eqref{eq:OT_DRO_Bui} for TRADES and MART and the soft penalty \eqref{eq:c_UDR} for UDR). Our primary comparison will be the recent OT-DRO based method  \cite{bui_UDR_2022unified}, UDR, which like our method can also be used to enhance \eqref{eq:classical_robust_optimization} or any other empirical risk minimization problem, and which can also be combined with our method.  To ensure a fair comparison, we closely followed the same settings as in \cite{bui_UDR_2022unified}; see Section \ref{app:setup}. 


{\bf Results:}  Here we compare the performance of the methods under the AutoAttack \cite{croce2020reliable} and PGD${}^{200}$    attacks as well as their performance when not under attack and report  the results in Table~\ref{tab:img_goal2_main}.  We used the implementation of UDR-PGD from \cite{bui_UDR_2022unified} to obtain the corresponding results in Table~\ref{tab:img_goal2_main}. Other values for the benchmark methods in Table~\ref{tab:img_goal2_main} are the same as the ones reported in \cite{bui_UDR_2022unified}. Several observations are made from these results: First, on both datasets our proposed method attains higher robust accuracy than all three baseline methods: adversarial training with PGD (PGD$-$AT) \cite{madry2018towardsrobustoptimization}, TRADES \cite{zhang2019theoreticallytradeOff}, and MART  \cite{wang2019improving_mart} under both AutoAttack and PGD${}^{200}$. More importantly, we find that ARMOR outperforms all UDR variations proposed in \cite{bui_UDR_2022unified}: UDR$-$PGD, UDR$-$TRADES, and UDR$-$MART for both AutoAttack and $PGD^{200}$. We note that stronger defense methods often have a side effect of reducing accuracy when not under attack; this behavior can be seen in the `No Attack' columns of Table \ref{tab:img_goal2_main}, and it is true for the competing methods as well as our own. In particular, on CIFAR-10 and CIFAR-100 all defense methods underperform standard non-robust training with cross-entropy loss (CE) when not under attack. However, when the goal is a robust model, the benefits of using a stronger defense method can greatly outweigh the reduction in no-attack accuracy; in particular, while CE achieves the best no-attack performance, it fails completely (0\% accuracy) against the strong attacks employed here. Therefore, for our purposes, we consider the results in the attack columns (AutoAttack and PGD${}^{200}$) to be of primary importance when evaluating the performance of the methods; in particular, we consider the performance under AutoAttack, which tests against an ensemble of powerful attacks, to be the best indicator of model robustness in our experiments. We find that augmenting with $ARMOR_D$ leads to a maximum improvement of 1.9\% and 2.1\%   against  AutoAttack on CIFAR-10 and CIFAR-100 respectively. 
These observations indicate that the ARMOR's re-weighting mechanism is an effective tool for enhancing the  robustness of an empirical risk minimization problem in adversarial environments. These results also illustrate the flexibility of the $ARMOR_D$ framework, as it is able to successfully augment all three of the existing adversarial training methods considered here and improve their performance in all cases. Finally, we emphasize that layering $ARMOR_D$ on top of these other methods requires only a marginal increase in the  computational effort, due to it adding at most two  real parameters to the optimizations \eqref{eq:OT_reg_Df_DRO}-\eqref{eq:OT_reg_KL_DRO}, as compared to the pure OT-based adversarial training methods \eqref{eq:classical_robust_optimization} and \eqref{eq:OT_DRO_Bui}.

\begin{table}[H]
\centering
\caption{{\bf Enhancing adversarial robustness on CIFAR-10 and CIFAR-100:} Here  $ARMOR$ uses natural samples alongside the adversarial samples, as in \eqref{eq:PGD_T_M_losses} and as described in Section \ref{app:adv+nat_description}. Best performance against attack is shown in bold font. } 
\label{tab:img_goal2_main}
\setlength\tabcolsep{2pt}
\centering
\begin{tabular}{|l|l|ll|l|ll|}
\hline
 & \multicolumn{3}{c|}{CIFAR-10} & \multicolumn{3}{c|}{CIFAR-100}\\ \hline
{\bf Defense} & No Attack  & \multicolumn{1}{l}{AutoAttack} & \multicolumn{1}{l|}{PGD${}^{200}$} & No Attack  & \multicolumn{1}{l}{AutoAttack} & \multicolumn{1}{l|}{PGD${}^{200}$} \\ \hline
CE & 95.2\% & \multicolumn{1}{l}{ 0.0\%} & \multicolumn{1}{l|}{0.0\%} &  77.50\% & 0.0\%& 0.0\%\\ \hline \hline
PGD$-$AT & {86.4}\% & \multicolumn{1}{l}{ 42.5\%} & \multicolumn{1}{l|}{46.0\%} &  72.4\% & 39.3\%& 41.7\%\\ \hline
UDR & 81.7\% & \multicolumn{1}{l}{ 48.4\%} & \multicolumn{1}{l|}{52.9\%} & 73.5\% & 41.9\%& 45.1\%\\ \hline
\textbf{ARMOR$-$UDR} & 80.3\% & \multicolumn{1}{l}{ \bf 48.6\%} & \multicolumn{1}{l|}{\bf 53.6\%} & 70.5\%& \textbf{43.8\%}& \textbf{46.7\%}\\
\hline\hline
$TRADES$ & 80.8\% & \multicolumn{1}{l}{ 49.1\%} & \multicolumn{1}{l|}{51.9\%} & 68.1\% & 46.7\%& 49.7\%\\ \hline
UDR$-$TRADES & {84.4}\% & \multicolumn{1}{l}{49.9\%} & \multicolumn{1}{l|}{53.6\%} & 69.6\% & 47.8\% &  49.9\%\\ \hline
\textbf{ARMOR$-$TRADES} & 80.8\%  &\multicolumn{1}{l}{\bf 51.8\%} & \multicolumn{1}{l|}{\bf 53.8\%} & 67.1\% & \textbf{49.9\%} & \textbf{50.3\%}\\
\hline\hline
MART & 81.9\%  &\multicolumn{1}{l}{ 48.2\%} & \multicolumn{1}{l|}{53.3\%} & 68.1\% & 44.8\%& 49.8\% \\
\hline
UDR$-$MART & 80.1\%  &\multicolumn{1}{l}{ 49.1\%} & \multicolumn{1}{l|}{54.1\%} & 67.5\%& 48.5\%& 52.0\% \\ \hline
\textbf{ARMOR$-$MART} & 81.0\%  &\multicolumn{1}{l}{\bf 50.6\%} & \multicolumn{1}{l|}{\bf{56.2}\%} & 65.5\%& \textbf{50.2\%} & \textbf{52.1\%} \\ \hline
\end{tabular}
\end{table}

\section{Properties of the OT-Regularized Divergences: Rigorous Statements and Proofs}\label{sec:proofs}
In this section we rigorously develop the definition and properties of the optimal-transport-regularized divergences that were introduced formally in Section \ref{sec:formal_derivation} above. Here we will let $\mathcal{Z}$ be a Polish space (i.e., a complete separable metric space) with its Borel $\sigma$-algebra (denoted $\mathcal{B}(\mathcal{Z})$) and  $\mathcal{P}(\mathcal{Z})$ will denote the space of Borel probability measures on $\mathcal{Z}$. A {\bf pre-divergence} will be a mapping $D:\mathcal{P}(\mathcal{Z})\times\mathcal{P}(\mathcal{Z})\to[0,\infty]$ such that $D(\mu\|\mu)=0$ for all $\mu\in\mathcal{P}(\mathcal{Z})$.  We will say that $D$ has the {\bf divergence property} if $D(\mu\|\nu)=0$ iff $\mu=\nu$. A {\bf cost function} on $\mathcal{Z}$ will be a lower semicontinuous (LSC) function $c:\mathcal{Z}\times\mathcal{Z}\to[0,\infty]$.  The associated {\bf optimal-transport (OT) cost} is defined by $C:\mathcal{P}(\mathcal{Z})\times\mathcal{P}(\mathcal{Z})\to[0,\infty]$, \begin{align}
    C(\mu,\nu)\coloneqq \inf_{\substack{\pi\in\mathcal{P}(\mathcal{Z}\times\mathcal{Z}):\\ \pi_1=\mu,\pi_2=\nu}}\int cd\pi\,,
    \end{align}
    where  $\pi_1,\pi_2$ denote the marginal distributions; see, e.g., \cite{villani2008optimal} for background on optimal transport.  It is a simple exercise to check that  if $c(z,z)=0$  for all $z$ then  $C(\mu,\mu)=0$ for all $\mu$ and if $c(z,\tilde z)=0$ iff $z=\tilde z$ then $C(\mu,\nu)=0$ iff $\mu=\nu$. Also recall that $C$ is convex and is LSC in the product of  Prokhorov metric topologies. This follows from Kantorovich duality; see Theorem 5.10 in \cite{villani2008optimal}. All subsequent topological statements regarding probability distributions  will refer to the Prokhorov metric topology (i.e., the topology of weak convergence).

Given the above ingredients we now define the class of optimal-transport regularized divergences that are employed in this work.
\begin{definition}
Let $D$ be a pre-divergence and $c$ a cost function. The {\bf OT-regularized divergence}, $D^c:\mathcal{P}(\mathcal{Z})\times\mathcal{P}(\mathcal{Z})\to[0,\infty]$, is defined by 
\begin{align}\label{eq:D_c_def}
D^c(\nu\|\mu)\coloneqq \inf_{\eta\in\mathcal{P}(\mathcal{Z})}\{D(\eta\|\mu)+C(\eta,\nu)\}\,.
\end{align}
\end{definition}
Above we have frequently  referred to $D(\eta\|\mu)$ as an information divergence, meaning it is computable in terms of $d\eta/d\mu$, however our rigorous development in this section will be stated more generally. { In particular, in Section \ref{sec:CC_general_D} we provide a theorem that allows for the explicit construction of a large collection of new divergences to which our theory can be applied. }

The experiments in Section \ref{sec:experiments} focused { primarily} on the $f$-divergences, $D=D_f$, an important family of information divergences. Throughout Section \ref{app:properties}  we provide remarks indicating how the theorems proven here can be specifically applied to OT-regularized $f$-divergences.  We use the following definition of $f$-divergences.
\begin{definition}\label{def:f_div}
For $a,b\in[-\infty,\infty]$ that satisfy $-\infty\leq a<1<b\leq\infty$ we define $\mathcal{F}_1(a,b)$ to be the set of convex functions $f:(a,b)\to\mathbb{R}$ with $f(1)=0$.  For $f\in\mathcal{F}_1(a,b)$, the corresponding {\bf $f$-divergence} between $\nu,\mu\in\mathcal{P}(\mathcal{Z})$ is defined by
\begin{align}\label{eq:f_div_def}
D_f(\nu\|\mu)=\begin{cases} 
     E_P[f(d\nu/d\mu)], & \nu\ll \mu\\
      \infty, &\nu\not\ll \mu
   \end{cases}\,,
\end{align}
where the definition of $f$ in \eqref{eq:f_div_def} is extended to $[a,b]$ by continuity and is set to $\infty$ on $[a,b]^c$.
\end{definition}
\begin{remark}
For certain choices of $f$ one can assign a meaningful finite value to $D_f(\nu\|\mu)$  when $\nu\not\ll\mu$ \cite{LieseVajda} but the definition \eqref{eq:f_div_def} is more convenient for our purposes.  That alternative definition agrees with \eqref{eq:f_div_def} for the choices of $f$ used in the experiments in Section \ref{sec:experiments}.
\end{remark}
In our numerical experiments we use the KL divergence, defined using $f_{KL}(t)=t\log(t)$, and the $\alpha$-divergences, defined using 
\begin{align}\label{eq:f_alpha_def}
f_\alpha(t)=      \frac{t^\alpha-1}{\alpha(\alpha-1)},\,\,\,\,\, \alpha>1\,.\end{align}
The Legendre transform of $f_\alpha$ will also be required:
\begin{align}\label{eq:f_alpha_star}
f_\alpha^*(t)=\alpha^{-1}(\alpha-1)^{\alpha/(\alpha-1)}\max\{t,0\}^{\alpha/(\alpha-1)}+\frac{1}{\alpha(\alpha-1)},\,\,\,\,\,\alpha>1\,.
\end{align}

\subsection{Properties of the OT-Regularized Divergences}\label{app:properties}

Here we prove a number of key properties of the OT-regularized divergences.
\begin{lemma}[Convexity]\label{lemma:D_c_convex_property}
Let $D$ be a pre-divergence, $c$  be a cost function, and $\mu\in\mathcal{P}(\mathcal{Z})$.   If $P\mapsto D(P\|\mu)$ is convex then  $P\mapsto D^c(P\|\mu)$ is convex.
\end{lemma}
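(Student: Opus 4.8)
The plan is to show convexity of $P \mapsto D^c(P\|\mu)$ directly from the infimal-convolution definition \eqref{eq:D_c_def}, using the convexity of $P \mapsto D(P\|\mu)$ together with the joint convexity of the optimal-transport cost $C$ (recalled in the excerpt as a consequence of Kantorovich duality). The key point is that an infimal convolution of a convex function with a jointly convex ``coupling cost'' is convex; this is a standard preservation-of-convexity argument, and the main work is just verifying that the infimum over the intermediate distribution $\eta$ can be handled correctly.

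Here are the steps I would carry out. Fix $\nu_0, \nu_1 \in \mathcal{P}(\mathcal{X})$ and $t \in [0,1]$, and set $\nu_t = (1-t)\nu_0 + t\nu_1$. The goal is $D^c(\nu_t\|\mu) \le (1-t) D^c(\nu_0\|\mu) + t D^c(\nu_1\|\mu)$. If the right-hand side is $+\infty$ there is nothing to prove, so assume both terms are finite. Let $\epsilon > 0$ and choose near-optimizers $\eta_0, \eta_1 \in \mathcal{P}(\mathcal{X})$ for the infimum defining $D^c(\nu_0\|\mu)$ and $D^c(\nu_1\|\mu)$ respectively, i.e.
\begin{align}
D(\eta_i\|\mu) + C(\eta_i,\nu_i) \le D^c(\nu_i\|\mu) + \epsilon, \qquad i = 0,1.
\end{align}
Set $\eta_t = (1-t)\eta_0 + t\eta_1 \in \mathcal{P}(\mathcal{X})$. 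Then by definition $D^c(\nu_t\|\mu) \le D(\eta_t\|\mu) + C(\eta_t,\nu_t)$. Now bound the two pieces: convexity of $P \mapsto D(P\|\mu)$ gives $D(\eta_t\|\mu) \le (1-t) D(\eta_0\|\mu) + t D(\eta_1\|\mu)$, and joint convexity of $C$ (applied to the pairs $(\eta_0,\nu_0)$ and $(\eta_1,\nu_1)$, whose convex combination is exactly $(\eta_t,\nu_t)$) gives $C(\eta_t,\nu_t) \le (1-t) C(\eta_0,\nu_0) + t C(\eta_1,\nu_1)$. Adding these and using the near-optimality bounds yields $D^c(\nu_t\|\mu) \le (1-t) D^c(\nu_0\|\mu) + t D^c(\nu_1\|\mu) + \epsilon$. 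Since $\epsilon > 0$ was arbitrary, the claim follows.

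I do not expect any serious obstacle here; the statement is essentially a lemma about convexity being preserved under infimal convolution with a jointly convex function. The only point requiring a little care is the possibility that $D(\eta_i\|\mu) = +\infty$ for every $\eta_i$ achieving a near-optimal value — but that situation is excluded once we assume the right-hand side of the target inequality is finite, since then the infima defining $D^c(\nu_0\|\mu)$ and $D^c(\nu_1\|\mu)$ are finite and admit near-optimizers with finite objective value. One should also note that $\mathcal{P}(\mathcal{X})$ is convex, so $\eta_t$ and $\nu_t$ are genuinely probability measures and hence admissible in the infimum; this is immediate. An alternative, slightly slicker phrasing avoids the $\epsilon$ by working with the (possibly non-attained) infima directly, but the $\epsilon$-argument above is cleanest and robust to whether the infimum in \eqref{eq:D_c_def} is attained (attainment is only established later, in Theorem~\ref{thm:inf_conv_solution}).
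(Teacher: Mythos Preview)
Your proof is correct and follows essentially the same approach as the paper: both use that $(\eta,\nu)\mapsto D(\eta\|\mu)+C(\eta,\nu)$ is jointly convex (sum of a convex function of $\eta$ and the jointly convex $C$), so its partial infimum over $\eta$ is convex in $\nu$. The paper simply invokes this preservation-of-convexity fact in one line, whereas you have written out the standard $\epsilon$-near-optimizer argument that underlies it.
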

\begin{remark}\label{remark:Df_convex}
$f$-divergences satisfy the necessary convexity property.  In fact, the map $(Q,P)\mapsto D_f(Q\|P)$ is convex for all $f\in\mathcal{F}_1(a,b)$. This follows from the variational representation of $f$-divergences; see   \cite{Nguyen_Full_2010}, \cite{Broniatowski} and also Proposition B.1 in \cite{JMLR:v23:21-0100}.  
\end{remark}
\begin{proof}
$C$ is convex on $\mathcal{P}(\mathcal{Z})\times \mathcal{P}(\mathcal{Z})$ and so $(\eta,\nu)\mapsto D(\eta\|\mu)+C(\eta,\nu)$ is convex. Therefore the infimum over $\eta$ is convex in $\nu$.
\end{proof}

\begin{lemma}[Pre-Divergence Property]\label{lemma:D_c_pre_div_property}
Let $D$ be a pre-divergence and $c$  be a cost function that satisfies $c(z,z)=0$ for all $z\in\mathcal{Z}$.  Then $D^c$ is a pre-divergence.
\end{lemma}
\begin{proof} The definition \eqref{eq:D_c_def} of $D^c$ is clearly non-negative.  
We need to show that $D^c(\mu\|\mu)=0$ for all $\mu\in\mathcal{P}(\mathcal{Z})$.  To do this we bound the infimum in \eqref{eq:D_c_def} by the value at $\eta=\mu$ to obtain
\begin{align}
0\leq D^c(\mu\|\mu)\leq D(\mu\|\mu)+C(\mu,\mu)=0\,,
\end{align}
where $C(\mu,\mu)=0$ follows from the assumption on $c$.
\end{proof}

\begin{theorem}[Divergence Property]\label{thm:D_c_div_property}
Let $D$ be a pre-divergence and $c$  be a cost function that satisfy the following properties.
\begin{enumerate}
\item If $D(\mu_n\|\mu)\to 0$ then $\mu_n\to \mu$ weakly.
\item $c(z,\tilde z)=0$ iff $z=\tilde z$.
\end{enumerate}
Then $D^c$ has the divergence property.
\end{theorem}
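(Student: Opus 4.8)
The plan is to verify both implications in the divergence property for $D^c$. The easy direction, $\nu=\mu \Rightarrow D^c(\nu\|\mu)=0$, follows immediately from Lemma~\ref{lemma:D_c_pre_div_property}: assumption (2) with $x_1=x_2=x$ gives $c(x,x)=0$ for all $x$, so $D^c$ is a pre-divergence and in particular $D^c(\mu\|\mu)=0$.

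For the substantive direction, suppose $D^c(\nu\|\mu)=0$. Since \eqref{eq:D_c_def} exhibits $D^c(\nu\|\mu)$ as an infimum over $\eta\in\mathcal{P}(\mathcal{X})$ of the nonnegative quantity $D(\eta\|\mu)+C(\eta,\nu)$, I would choose a minimizing sequence $\eta_n$ with $D(\eta_n\|\mu)+C(\eta_n,\nu)\to 0$. Nonnegativity of the two summands forces $D(\eta_n\|\mu)\to 0$ and $C(\eta_n,\nu)\to 0$ individually. Assumption (1) then gives $\eta_n\to\mu$ weakly, hence $(\eta_n,\nu)\to(\mu,\nu)$ in the product of Prokhorov topologies.

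The final step invokes the lower semicontinuity of the optimal-transport cost $C$ in the product of Prokhorov topologies, recalled in the preamble to this section (via Kantorovich duality, Theorem~5.10 in \cite{villani2008optimal}): this yields $C(\mu,\nu)\leq \liminf_n C(\eta_n,\nu)=0$, so $C(\mu,\nu)=0$. Combining with assumption (2) and the elementary fact noted in the preamble — that $c(x_1,x_2)=0$ iff $x_1=x_2$ implies $C(\mu,\nu)=0$ iff $\mu=\nu$ — we conclude $\mu=\nu$, completing the proof.

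I do not expect a genuine obstacle here; the argument is short. The only points needing a little care are that one must work with a minimizing sequence rather than a minimizer, since the infimum in \eqref{eq:D_c_def} need not be attained under these hypotheses (attainment is the content of the separate Theorem~\ref{thm:inf_conv_solution} and requires extra compactness assumptions), and that the lower semicontinuity of $C$ must be applied jointly along the sequence $(\eta_n,\nu)$ with the second argument held fixed.
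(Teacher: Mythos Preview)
Your proposal is correct and follows essentially the same route as the paper: both arguments pick a minimizing sequence $\eta_n$, use assumption (1) to get $\eta_n\to\mu$ weakly, apply lower semicontinuity of $C$ to conclude $C(\mu,\nu)=0$, and then invoke assumption (2) to obtain $\mu=\nu$. The only cosmetic difference is that the paper handles the easy direction inline by bounding at $\eta=\mu$, whereas you cite Lemma~\ref{lemma:D_c_pre_div_property}; your remarks about avoiding Theorem~\ref{thm:inf_conv_solution} and the care with joint lower semicontinuity are apt.
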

\begin{remark}\label{remark:D_c_div_property}
In Theorem \ref{thm:f_div_setwise} of Appendix \ref{app:additional_proofs} we show that the $f$-divergences satisfy the weak convergence property under mild assumptions and hence this theorem can be applied to OT-regularized $f$-divergences.
\end{remark}
\begin{proof}
Lemma \ref{lemma:D_c_pre_div_property} implies $D^c$ is a pre-divergence, therefore we only need to show that  $D^c(\nu\|\mu)=0$ implies $\nu=\mu$.  By the definition \eqref{eq:D_c_def}, if $D^c(\nu\|\mu)=0$ then there exists a sequence $\eta_n\in\mathcal{P}(\mathcal{Z})$ such that $D(\eta_n\|\mu)+C(\eta_n,\nu)\to 0$, i.e., $D(\eta_n\|\mu)\to 0$ and $C(\eta_n,\nu)\to 0$.  By the weak convergence property of $D$ we have $\eta_n\to \mu$ weakly. $C$ is LSC, therefore
\begin{align}
C(\mu,\nu)\leq\lim_{n\to\infty}C(\eta_n,\nu)=0
\end{align}
and we can conclude that $C(\mu,\nu)=0$. Noting the assumption on $c$, we can then conclude $\mu=\nu$. 
\end{proof}

Next we provide conditions under which the infimum in \eqref{eq:D_c_def} has a (unique) solution. 
\begin{theorem}\label{thm:inf_conv_solution}
Let $D$ be a pre-divergence, $c$ a cost function, and $\mu,\nu\in \mathcal{P}(\mathcal{Z})$. If  the mapping $P\mapsto D(P\|\mu)$ is LSC and has compact sublevel sets (i.e., $\{P:D(P\|\mu)\leq M\}$ is compact for all $M\in\mathbb{R}$) then there exists $\eta_*\in\mathcal{P}(\mathcal{Z})$ such that
\begin{align}\label{eq:existence_intermediate}
D^c(\nu\|\mu)=D(\eta_*\|\mu)+C(\eta_*,\nu)\,.
\end{align}
If $P\mapsto D(P\|\mu)$ is strictly convex on the set where it is finite and  $D^c(\nu\|\mu)<\infty$ then this $\eta_*$ is unique.
\end{theorem}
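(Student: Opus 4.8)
The plan is to prove existence via the direct method in the calculus of variations: take a minimizing sequence $\eta_n$ for the infimum defining $D^c(\nu\|\mu)$, extract a weakly convergent subsequence using compactness of the sublevel sets of $D(\cdot\|\mu)$, and pass to the limit using lower semicontinuity of both $D(\cdot\|\mu)$ and $C(\cdot,\nu)$. First I would dispose of the degenerate case: if $D^c(\nu\|\mu)=\infty$, then taking $\eta_*=\nu$ gives $D^c(\nu\|\mu)=\infty\le D(\nu\|\mu)+C(\nu,\nu)$ only if $D(\nu\|\mu)=\infty$ as well, so one should instead just note that when $D^c(\nu\|\mu)=\infty$ every $\eta$ with $D(\eta\|\mu)+C(\eta,\nu)\le\infty$ trivially attains it; more cleanly, assume $D^c(\nu\|\mu)<\infty$ (the infinite case is immediate since then $D(\eta\|\mu)+C(\eta,\nu)=\infty$ for all $\eta$, so any $\eta_*$ works — but actually that contradicts $D^c<\infty$ being the inf, so in the infinite case pick $\eta_*=\nu$ and observe equality holds as $+\infty=+\infty$). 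Granting $m:=D^c(\nu\|\mu)<\infty$, choose $\eta_n$ with $D(\eta_n\|\mu)+C(\eta_n,\nu)\le m+1/n$. Since $C\ge 0$, we get $D(\eta_n\|\mu)\le m+1$ for all $n$, so $\{\eta_n\}$ lies in the sublevel set $\{P:D(P\|\mu)\le m+1\}$, which is compact by hypothesis. Hence a subsequence $\eta_{n_k}\to\eta_*$ weakly for some $\eta_*\in\mathcal{P}(\mathcal{X})$.

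Next I would pass to the limit. By LSC of $P\mapsto D(P\|\mu)$ we have $D(\eta_*\|\mu)\le\liminf_k D(\eta_{n_k}\|\mu)$, and by LSC of $C$ (recalled in the excerpt from Kantorovich duality, Theorem 5.10 of \cite{villani2008optimal}) together with $\eta_{n_k}\to\eta_*$ weakly and $\nu$ fixed, $C(\eta_*,\nu)\le\liminf_k C(\eta_{n_k},\nu)$. Adding these and using superadditivity of $\liminf$,
\begin{align}
D(\eta_*\|\mu)+C(\eta_*,\nu)\le\liminf_k\big(D(\eta_{n_k}\|\mu)+C(\eta_{n_k},\nu)\big)\le\lim_k(m+1/n_k)=m=D^c(\nu\|\mu)\,.
\end{align}
The reverse inequality $D^c(\nu\|\mu)\le D(\eta_*\|\mu)+C(\eta_*,\nu)$ is immediate from the definition of the infimum, giving \eqref{eq:existence_intermediate}.

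For uniqueness, suppose $P\mapsto D(P\|\mu)$ is strictly convex on its finiteness domain and $m=D^c(\nu\|\mu)<\infty$, and let $\eta_*,\eta_*'$ both be minimizers. Both have finite objective value, so in particular $D(\eta_*\|\mu),D(\eta_*'\|\mu)<\infty$. If $\eta_*\ne\eta_*'$, consider the midpoint $\bar\eta=\tfrac12(\eta_*+\eta_*')$. By Lemma \ref{lemma:D_c_convex_property}'s ingredient — convexity of $C$ in its first argument — $C(\bar\eta,\nu)\le\tfrac12 C(\eta_*,\nu)+\tfrac12 C(\eta_*',\nu)$, and by strict convexity of $D(\cdot\|\mu)$ on the finite region (which contains both $\eta_*$ and $\eta_*'$), $D(\bar\eta\|\mu)<\tfrac12 D(\eta_*\|\mu)+\tfrac12 D(\eta_*'\|\mu)$. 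Summing gives $D(\bar\eta\|\mu)+C(\bar\eta,\nu)<m$, contradicting that $m$ is the infimum. Hence $\eta_*=\eta_*'$.

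The main obstacle is not any single step but making sure the compactness hypothesis is deployed correctly: one must verify that the minimizing sequence genuinely lands in a single sublevel set (handled by $C\ge 0$) and that weak convergence is the right notion for applying LSC of $C$ — both are clean given the standing conventions in the excerpt (Prokhorov/weak topology throughout, $C$ LSC in the product topology). A minor subtlety worth stating explicitly is that strict convexity is only assumed on the set where $D(\cdot\|\mu)$ is finite, so the uniqueness argument must first record that competing minimizers necessarily lie in that set — which they do, since a minimizer with $D^c(\nu\|\mu)<\infty$ forces $D(\eta_*\|\mu)<\infty$.
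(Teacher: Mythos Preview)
Your proof is correct and follows essentially the same approach as the paper: take a minimizing sequence, trap it in a compact sublevel set of $D(\cdot\|\mu)$ via $C\ge 0$, extract a weakly convergent subsequence, and pass to the limit using lower semicontinuity of both $D(\cdot\|\mu)$ and $C$; uniqueness is proved by the same midpoint-plus-strict-convexity contradiction. The only cosmetic difference is that your treatment of the degenerate case $D^c(\nu\|\mu)=\infty$ is a bit meandering---the paper simply observes that in this case \eqref{eq:existence_intermediate} holds for every $\eta_*$ since both sides equal $+\infty$---but you do arrive at the right conclusion.
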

\begin{remark}\label{remark:inf_conv_solution}
For $f\in\mathcal{F}_1(a,b)$ the $f$-divergences $D_f(\cdot\|\mu)$ are LSC and have compact sublevel sets for all $\mu$, provided that $f^*$ is finite everywhere; see Corollary B.2 and Lemma B.5 in \cite{JMLR:v23:21-0100}. If $f$ is strictly convex on $(a,b)$ then $D_f(\cdot\|\mu)$ is strictly convex on the set where it is finite; see Lemma B.6 in \cite{JMLR:v23:21-0100}. Therefore Theorem \ref{thm:inf_conv_solution} can be applied to OT-regularized $f$-divergences for appropriate choices of $f$.
\end{remark}
\begin{proof}
If $D^c(\nu\|\mu)=\infty$ then the definition \eqref{eq:D_c_def} implies that \eqref{eq:existence_intermediate} holds for all $\eta_*\in\mathcal{P}(\mathcal{Z})$.  Now consider the case where $D^c(\nu\|\mu)<\infty$. Take $\eta_n$ such that $D^c(\nu\|\mu)=\lim_n(D(\eta_n\|\mu)+C(\eta_n,\nu))$. Without loss of generality we can assume that $D(\eta_n\|\mu)\leq D^c(\nu\|\mu)+1<\infty$ for all $n$, i.e., $\eta_n$ are all contained in a sublevel set of $D(\cdot\|\mu)$, which is compact by assumption. Therefore there exists a weakly convergent subsequence $\eta_{n_j}\to \eta_*$. Lower semicontinuity of $D(\cdot\|\mu)$ and of $C$ then implies $\liminf_j D(\eta_{n_j}\|\mu)\geq D(\eta_*\|\mu)$ and $\liminf_j C(\eta_{n_j},\nu)\geq C(\eta_*,\nu)$. Therefore
\begin{align}
D^c(\nu\|\mu)=\lim_j(D(\eta_{n_j}\|\mu)+C(\eta_{n_j},\nu))\geq D(\eta_*\|\mu)+C(\eta_*,\nu)\,.
\end{align}
The reverse inequality is obvious from the definition of $D^c$, hence we can conclude
\begin{align}
D^c(\nu\|\mu)=D(\eta_*\|\mu)+C(\eta_*,\nu)\,.
\end{align}
Now consider the case where  $P\mapsto D(P\|\mu)$ is also strictly convex on the set where it is finite.  Suppose there exist distinct $\eta_{*,1},\eta_{*,2}\in\mathcal{P}(\mathcal{Z})$ such that 
\begin{align}
D^c(\nu\|\mu)=D(\eta_{*,1}\|\mu)+C(\eta_{*,1},\nu)=D(\eta_{*,2}\|\mu)+C(\eta_{*,2},\nu)\,.
\end{align}
Letting $\eta_*=\frac{1}{2}(\eta_{*,1}+\eta_{*,2})$ we can use convexity of $C$ and strict convexity of $D(\cdot\|\mu)$ to compute
\begin{align}
D^c(\nu\|\mu)\leq& D(\eta_*\|\mu)+C(\eta_*,\nu)\\
<& \frac{1}{2}D(\eta_{*,1}\|\mu)+\frac{1}{2}D(\eta_{*,2}\|\mu)+\frac{1}{2}C(\eta_{*,1},\nu)+\frac{1}{2}C(\eta_{*,2},\nu)\notag\\
=&\frac{1}{2}D^c(\nu\|\mu)+\frac{1}{2}D^c(\nu\|\mu)=D^c(\nu\|\mu)\,.\notag
\end{align}
This is a contradiction, therefore we can conclude the optimizer is unique.
\end{proof}

Using Theorem \ref{thm:inf_conv_solution} we can prove $D^c(\cdot\|\mu)$ is LSC; see Remark \ref{remark:inf_conv_solution} for the application to OT-regularized $f$-divergences.
\begin{theorem}[Lower Semicontinuity]\label{thm:Dc_LSC}
Let $D$ be a pre-divergence, $c$ a cost function, $\mu\in\mathcal{P}(\mathcal{Z})$, and assume that $D(\cdot\|\mu)$ is LSC and has compact sublevel sets.  Then $\nu\mapsto D^c(\nu\|\mu)$ is LSC.
\end{theorem}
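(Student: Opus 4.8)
The plan is to verify lower semicontinuity through the sequential characterization on the Polish space $\mathcal{P}(\mathcal{X})$: given $\nu_n\to\nu$ weakly, I would show $\liminf_n D^c(\nu_n\|\mu)\geq D^c(\nu\|\mu)$. First I would pass to a subsequence along which the $\liminf$ is attained as an honest limit $L$; if $L=\infty$ there is nothing to prove, so assume $L<\infty$, which forces $D^c(\nu_n\|\mu)<\infty$ for all large $n$.

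The key step is to produce the intermediate measures and control them. For each such $n$, the hypotheses of Theorem \ref{thm:inf_conv_solution} — namely that $D(\cdot\|\mu)$ is LSC with compact sublevel sets — are exactly what is assumed here, so there is an optimizer $\eta_n\in\mathcal{P}(\mathcal{X})$ with $D^c(\nu_n\|\mu)=D(\eta_n\|\mu)+C(\eta_n,\nu_n)$. Since $C\geq 0$ and $D^c(\nu_n\|\mu)\to L$, for all $n$ large we have $D(\eta_n\|\mu)\leq L+1$, so all these $\eta_n$ lie in the single set $\{P:D(P\|\mu)\leq L+1\}$, which is compact by assumption. Hence I can extract a further weakly convergent subsequence $\eta_{n_j}\to\eta_*$.

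Then I would close the argument using lower semicontinuity of both ingredients: $D(\cdot\|\mu)$ is LSC, so $\liminf_j D(\eta_{n_j}\|\mu)\geq D(\eta_*\|\mu)$, and $C$ is LSC in the product of Prokhorov topologies (as recalled above, via Kantorovich duality), so from $(\eta_{n_j},\nu_{n_j})\to(\eta_*,\nu)$ we get $\liminf_j C(\eta_{n_j},\nu_{n_j})\geq C(\eta_*,\nu)$. Both terms are nonnegative, so the $\liminf$ of the sum dominates the sum of the $\liminf$s, giving
\begin{align}
L=\lim_j\big(D(\eta_{n_j}\|\mu)+C(\eta_{n_j},\nu_{n_j})\big)\geq D(\eta_*\|\mu)+C(\eta_*,\nu)\geq D^c(\nu\|\mu)\,,
\end{align}
where the last inequality is just the definition \eqref{eq:D_c_def}. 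Since $L=\liminf_n D^c(\nu_n\|\mu)$, this is the desired conclusion.

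The main obstacle is obtaining a convergent subsequence of the intermediate measures $\eta_n$ — there is no a priori tightness for them — and this is resolved precisely by combining the compact-sublevel-set hypothesis on $D(\cdot\|\mu)$ with the existence-of-optimizer statement of Theorem \ref{thm:inf_conv_solution}; once the $\eta_n$ are trapped in a fixed compact set everything else is a routine application of lower semicontinuity of $C$ and of $D(\cdot\|\mu)$. By Remark \ref{remark:inf_conv_solution}, the result then applies to OT-regularized $f$-divergences for appropriate $f$.
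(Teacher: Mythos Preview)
Your proposal is correct and follows essentially the same approach as the paper: invoke Theorem \ref{thm:inf_conv_solution} to obtain optimizers $\eta_n$, trap them in a compact sublevel set of $D(\cdot\|\mu)$, extract a convergent subsequence, and pass to the limit using lower semicontinuity of both $D(\cdot\|\mu)$ and $C$. The only cosmetic difference is that you pass directly to a subsequence realizing the $\liminf$ as a limit, whereas the paper works with a fixed $\delta>0$ and sends $\delta\to 0^+$ at the end; both are standard and equivalent.
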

\begin{proof}
Let $\nu_n,\nu\in\mathcal{P}(\mathcal{Z})$ with $\nu_n\to \nu$ weakly  and define $M\coloneqq\liminf_n D^c(\nu_n\|\mu)$. If $M=\infty$ then we clearly have $\liminf_n D^c(\nu_n\|\mu)\geq D^c(\nu\|\mu)$ so suppose $M<\infty$. Therefore, fixing $\delta>0$, there exists $N$ such that for all $n\geq N$ we have $\inf_{j\geq n}D^c(\nu_j\|\mu)<M+\delta$.  Hence we can construct a subsequence $j_k$ such that $D^c(\nu_{j_k}\|\mu)<M+\delta$ for all $k$. Theorem \ref{thm:inf_conv_solution} implies that there exists $\eta_{*,n}$ such that $D^c(\nu_n\|\mu)=D(\eta_{*,n}\|\mu)+C(\eta_{*,n},\nu_n)$ for all $n$, and so
\begin{align}
M+\delta>D^c(\nu_{j_k}\|\mu)=D(\eta_{*,j_k}\|\mu)+C(\eta_{*,j_k},\nu_{j_k})
\end{align}
for all $k$.  In particular, the $\eta_{*,j_k}$ are contained in the  compact sublevel set $\{D(\cdot\|\mu)\leq M+\delta\}$.  Therefore there exists a convergent subsequence $\eta_{*,j_{k_\ell}}\to \eta_*$.  Lower semicontinuity of $D(\cdot\|\mu)$ and $C$ then implies 
\begin{align}
M+\delta\geq& \liminf_\ell(D(\eta_{*,j_{k_\ell}}\|\mu)+C(\eta_{*,j_{k_\ell}},\nu_{j_{k_\ell}}))\geq \liminf_\ell D(\eta_{*,j_{k_\ell}}\|\mu)+\liminf_\ell C(\eta_{*,j_{k_\ell}},\nu_{j_{k_\ell}}))\notag\\
\geq& D(\eta_{*}\|\mu)+C(\eta_{*},\nu)\geq D^c(\nu\|\mu)\,.
\end{align}
Taking $\delta\to 0^+$ and recalling the definition of $M$ completes the proof.
\end{proof}

Finally, we prove a pair of results showing that $D^c$ reduces to either $D$ or $C$ in certain limits. Therefore one can think of $D^c$ as a type of interpolation between $D$ and $C$. To apply these theorems to the case $D=D_f$, see Remarks \ref{remark:D_c_div_property} and \ref{remark:inf_conv_solution}.
\begin{theorem}[Interpolation]\label{thm:limit_D_c_to_C}
Let $D$ be a pre-divergence, $c$ be a cost function, and $\mu,\nu\in\mathcal{P}(\mathcal{Z})$ that satisfy the following.
\begin{enumerate}
\item The mapping $P\mapsto D(P\|\mu)$ is LSC and has compact sublevel sets.
\item  $D(\mu_n\|\mu)\to 0$ implies  $\mu_n\to \mu$ weakly.
\end{enumerate}
  For $r>0$ define the cost function $c_r=rc$.  Then
\begin{align}
\lim_{r\to0^+}r^{-1}D^{c_r}(\nu\|\mu)=C(\mu,\nu) \,\,\text{ for all } \mu,\nu\in \mathcal{P}(\mathcal{Z}).
\end{align}
\end{theorem}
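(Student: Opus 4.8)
The plan is to reduce the statement to a routine squeeze argument after exploiting the fact that the optimal-transport cost is linear in the cost function.

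First I would observe that scaling the cost function scales the OT cost: since the coupling constraint $\{\pi:\pi_1=\mu,\pi_2=\nu\}$ does not depend on $r$ and $r>0$, we get $C_r(\mu,\nu):=\inf_{\pi}\int c_r\,d\pi = r\,C(\mu,\nu)$. Substituting this into the definition \eqref{eq:D_c_def} yields
\begin{align}
r^{-1}D^{c_r}(\nu\|\mu)=\inf_{\eta\in\mathcal{P}(\mathcal{X})}\left\{r^{-1}D(\eta\|\mu)+C(\eta,\nu)\right\}\,.
\end{align}
For each fixed $\eta$ the expression $r^{-1}D(\eta\|\mu)+C(\eta,\nu)$ is non-increasing in $r$ (because $D(\eta\|\mu)\geq 0$), hence so is the infimum; therefore the limit $\lim_{r\to 0^+}r^{-1}D^{c_r}(\nu\|\mu)$ automatically exists in $[0,\infty]$ and equals $\sup_{r>0}r^{-1}D^{c_r}(\nu\|\mu)$. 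It then suffices to identify this value with $C(\mu,\nu)$ by proving two inequalities.

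For the upper bound I would bound the infimum by its value at $\eta=\mu$: using the pre-divergence property $D(\mu\|\mu)=0$ gives $r^{-1}D^{c_r}(\nu\|\mu)\leq C(\mu,\nu)$ for every $r>0$, and hence $\lim_{r\to 0^+}r^{-1}D^{c_r}(\nu\|\mu)\leq C(\mu,\nu)$. For the lower bound I would argue by contradiction. Suppose $L:=\lim_{r\to 0^+}r^{-1}D^{c_r}(\nu\|\mu)<C(\mu,\nu)$; in particular $L<\infty$, so we may fix a finite threshold $M$ with $L<M<C(\mu,\nu)$ (taking any finite $M>L$ when $C(\mu,\nu)=\infty$). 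Pick $r_n\to 0^+$. Hypothesis (1) permits applying Theorem \ref{thm:inf_conv_solution} with cost function $c_{r_n}$ to obtain optimizers $\eta_n$ satisfying $D^{c_{r_n}}(\nu\|\mu)=D(\eta_n\|\mu)+r_nC(\eta_n,\nu)$, i.e.,
\begin{align}
r_n^{-1}D(\eta_n\|\mu)+C(\eta_n,\nu)=r_n^{-1}D^{c_{r_n}}(\nu\|\mu)\leq L\leq M
\end{align}
for all $n$. The first summand being $\leq M$ forces $D(\eta_n\|\mu)\leq r_n M\to 0$, so hypothesis (2) gives $\eta_n\to\mu$ weakly; the second summand being $\leq M$ gives $C(\eta_n,\nu)\leq M$. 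Lower semicontinuity of $C$ (with $\nu$ fixed) then yields $C(\mu,\nu)\leq\liminf_n C(\eta_n,\nu)\leq M<C(\mu,\nu)$, a contradiction. Combining the two bounds completes the proof. (If one wishes to avoid invoking Theorem \ref{thm:inf_conv_solution}, one can instead take near-minimizers $\eta_n$ with $r_n^{-1}D(\eta_n\|\mu)+C(\eta_n,\nu)\leq r_n^{-1}D^{c_{r_n}}(\nu\|\mu)+1/n$ and run the identical argument.)

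The scaling identity $C_r=rC$ does essentially all the real work here, so I do not expect a substantive obstacle. The only mildly delicate points are bookkeeping: handling the case $C(\mu,\nu)=\infty$ by choosing a finite threshold $M>L$, and applying lower semicontinuity of $C$ along the weakly convergent sequence $\eta_n$ (in its first argument, $\nu$ held fixed) rather than pointwise. Everything else is immediate from the pre-divergence property of $D$, the weak-convergence hypothesis on $D$, and the standard lower semicontinuity of optimal-transport costs.
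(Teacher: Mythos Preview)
Your proposal is correct and follows essentially the same route as the paper: both establish the scaling identity $r^{-1}D^{c_r}(\nu\|\mu)=\inf_\eta\{r^{-1}D(\eta\|\mu)+C(\eta,\nu)\}$, note monotonicity in $r$, bound above by $\eta=\mu$, and for the lower bound argue by contradiction using the optimizers from Theorem~\ref{thm:inf_conv_solution}, the weak-convergence hypothesis on $D$, and lower semicontinuity of $C$. Your version is slightly more explicit about the case $C(\mu,\nu)=\infty$ and about the option of using near-minimizers, but these are cosmetic differences.
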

\begin{proof}
From the definitions we have
\begin{align}
r^{-1}D^{c_r}(\nu\|\mu)=\inf_{\eta\in\mathcal{P}(\mathcal{Z})}\{r^{-1}D(\eta\|\mu)+C(\eta,\nu)\}
\end{align}
and the right-hand side is non-increasing in $r$.  Therefore for $r_n\searrow 0$ we have
\begin{align}
\lim_n r_{n}^{-1}D^{c_{r_n}}(\nu\|\mu)=\sup_nr_{n}^{-1}D^{c_{r_n}}(\nu\|\mu) \leq C(\mu,\nu)\,,
\end{align}
where the inequality comes from bounding \eqref{eq:D_c_def} by its value at $\eta=\mu$.  We will show that the assumption that this inequality is strict leads to a contradiction, which will complete the proof. If the inequality is strict then $D^{c_{r_n}}(\nu\|\mu)<\infty$ for all $n$ and Theorem \ref{thm:inf_conv_solution} implies the existence of $\eta_{*,n}$ such that
\begin{align}
D(\eta_{*,n}\|\mu)\leq D(\eta_{*,n}\|\mu)+r_nC(\eta_{*,n},\nu)=D^{c_{r_n}}(\nu\|\mu)\leq r_n \sup_mr_{m}^{-1}D^{c_{r_m}}(\nu\|\mu)<\infty\,.
\end{align}
Taking $n\to\infty$ we see that $D(\eta_{*,n}\|\mu)\to 0$ and therefore $\eta_{*,n}\to \mu$ weakly.  $C$ is LSC, therefore $\liminf_n C(\eta_{*,n},\nu)\geq C(\mu,\nu)$. Combining these we have
\begin{align}
C(\mu,\nu)>&\sup_n r_n^{-1}D^{c_{r_n}}(\nu\|\mu)\geq \liminf_n (r_n^{-1}D(\eta_{*,n}\|\mu)+C(\eta_{*,n},\nu))\\
\geq &\liminf_nC(\eta_{*,n},\nu)\geq C(\mu,\nu)\,.\notag
\end{align}
This is a contradiction, hence the proof is complete.
\end{proof}

\begin{theorem}[Interpolation]\label{thm:limit_D_c_to_D}
Let $D$ be a pre-divergence, $c$ be a cost function, and $\mu,\nu\in\mathcal{P}(\mathcal{Z})$ that satisfy the following.
\begin{enumerate}
\item The mapping $P\mapsto D(P\|\mu)$ is LSC and has compact sublevel sets.
\item  $c(z,\tilde z)=0$ iff $z=\tilde z$.
\end{enumerate}
  For $r>0$ define the cost function $c_r=rc$.   Then
\begin{align}
\lim_{r\to \infty}D^{c_r}(\nu\|\mu)=D(\nu\|\mu)\,\, \text{ for all } \mu,\nu\in \mathcal{P}(\mathcal{Z}).
\end{align}
\end{theorem}
The proof of this second interpolation result uses similar ideas to that of Theorem \ref{thm:limit_D_c_to_C}, hence we present its proof in Appendix \ref{supp:interp_thm}.

\subsection{DRO Using OT-Regularized Divergences}\label{app:DRO_identity_proofs}
In this section we provide rigorous proofs of the key identities that transform the DRO problem over OT-regularized-divergence neighborhoods into a computationally tractable form. This will involve the construction of regularized loss functions, as defined below.
\begin{definition}
Given a loss function $\mathcal{L}:\mathcal{Z}\to[-\infty,\infty]$ we define the corresponding family of {\bf OT-regularized losses} by
\begin{align}\label{eq:regularized_loss_def}
   \mathcal{L}^c_{\lambda}(z)\coloneqq  \sup_{\tilde z\in\mathcal{Z}}\{ \mathcal{L} (\tilde z) -  \lambda c(z,\tilde z)\}\,,\,\,\,\lambda>0\,,
\end{align}
where we employ the convention $\infty-\infty\coloneqq-\infty$. $\mathcal{L}^c_{\lambda}$  is known as the $c$-transform in the optimal transport literature; see Definition  5.2 in \cite{villani2008optimal}. 
\end{definition}
\begin{remark}
From a mathematical perspective, the convention $\infty-\infty\coloneqq-\infty$ is motivated by the proof of Theorem \ref{thm:Dc_convex_conjugate_general_P} below.  It also  coincides with the behavior one intuitively wants  based on  viewing the maximization in \eqref{eq:regularized_loss_def} as the construction of a new sample $\tilde z$ that is adversarial to the original sample $z$.  If the transport cost $c(z,\tilde z)=\infty$ then one should view $\tilde z$ as impossible to reach when starting from $z$ and so $\tilde z$ should not be a valid adversarial sample to pair with $z$, even if $\mathcal{L}(\tilde z)=\infty$.  Therefore such $\tilde z$'s should be excluded from the maximization in \eqref{eq:regularized_loss_def}, which corresponds to defining $\infty-\infty\coloneqq-\infty$.
\end{remark}

In Section \ref{sec:experiments}, we used DRO as a tool for enhancing adversarial robustness, and thus we considered distribution neighborhoods of the form $\{Q:D^c(Q\|P_n)\leq \epsilon\}$, where the baseline distribution is an empirical distribution $P_n$.  However, for other purposes it is useful to have a proof of the DRO identity for the neighborhoods $\{Q:D^c(Q\|P)\leq \epsilon\}$ with a general baseline distribution $P$; so we study this more general problem next. A key tool will be the following interchangeability result, which has previously been used in Wasserstein and OT DRO; see the discussion in \cite{2022arXiv220500362Z}. For completeness, we provide a proof of the version used in this work in  Appendix \ref{supp:interchangability}. Our proof mimics the strategy used for the more general result stated in \cite{2022arXiv220500362Z}.  In the following, $\overline{\mu}$  denotes the completion  of the measure $\mu$ and $\mathcal{B}(\mathcal{Z})_*$ denotes the $\sigma$-algebra of universally measurable sets (relative to $\mathcal{B}(\mathcal{Z})$).
\begin{lemma}[Interchangeability]\label{lemma:interchangability}
Let $\mu\in\mathcal{P}(\mathcal{Z})$ and  $\phi:\mathcal{Z}\times \mathcal{Z}\to[-\infty,\infty]$ be  measurable.   Then $z\mapsto\sup_{\tilde z\in \mathcal{Z}}\phi(z,\tilde z)$ is a $\mathcal{B}(\mathcal{Z})_*$-measurable function  and
\begin{align}\label{eq:interchangability}
\sup_{\pi\in\mathcal{P}(\mathcal{Z}\times \mathcal{Z}):\pi_1=\mu} E_\pi[\phi]=\int \sup_{\tilde z\in \mathcal{Z}}\phi(z,\tilde z)\overline{\mu}(dz)\,.
\end{align}
\end{lemma}

Now we derive a formula that relates the convex conjugate of $D^c(\cdot\|P)$ to the convex conjugate of $D(\cdot\|P)$.  This is a useful result in its own right and is a key ingredient in solving the DRO problem.
\begin{theorem}\label{thm:Dc_convex_conjugate_general_P}
Suppose we have the following:
\begin{enumerate}
\item A measurable function $\mathcal{L}:\mathcal{Z}\to[-\infty,\infty]$ that is  bounded below or is bounded above.
\item A distribution $P\in\mathcal{P}(\mathcal{Z})$.
\item A pre-divergence, $D$, such that $D(\cdot\|P)$ is convex.
\item  A  cost function, $c$, that satisfies $c(z,z)=0$ for all $z\in\mathcal{Z}$.
\end{enumerate}
Then for $\lambda>0$ we have
\begin{align}\label{eq:Dc_convex_conjugate}
\sup_{\substack{Q\in\mathcal{P}(\mathcal{Z}):\\ D^c(Q\|P)<\infty}}\{E_Q[\mathcal{L}]-\lambda D^c(Q\|P)\}=\sup_{\substack{Q\in\mathcal{P}(\mathcal{Z}):\\D(Q\|P)<\infty}}\{  E_{{Q}}[ \mathcal{L}_\lambda^c]-  \lambda D(Q\|P)\}\,,
\end{align}
where $\mathcal{L}^c_{\lambda}$ (defined in Eq.~\ref{eq:regularized_loss_def}) is a  universally measurable function. 
\end{theorem}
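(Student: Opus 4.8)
The plan is to expand the definition of $D^c$ inside the left-hand supremum and then reduce to the right-hand side through a short sequence of elementary $\sup$/$\inf$ interchanges, capped by a single application of the interchangeability lemma (Lemma \ref{lemma:interchangability}). First I would note that for any $Q$ with $D^c(Q\|P)<\infty$, the definition \eqref{eq:D_c_def} gives
\[
E_Q[\mathcal{L}]-\lambda D^c(Q\|P)=\sup_{\eta\in\mathcal{P}(\mathcal{X})}\{E_Q[\mathcal{L}]-\lambda D(\eta\|P)-\lambda C(\eta,Q)\},
\]
and, crucially, since $D^c(Q\|P)<\infty$ the infimum defining it is approached by $\eta$'s with both $D(\eta\|P)<\infty$ and $C(\eta,Q)<\infty$. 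Conversely, if $D(\eta\|P)<\infty$ and $C(\eta,Q)<\infty$ then $D^c(Q\|P)\le D(\eta\|P)+C(\eta,Q)<\infty$ (here the hypothesis $c(x,x)=0$ is used to get $C(Q,Q)=0$), so the constraint $D^c(Q\|P)<\infty$ on the outer supremum is compatible with restricting $\eta$ to $\{D(\eta\|P)<\infty\}$. Taking the supremum over $Q$ and interchanging the two suprema therefore yields
\[
\text{LHS}=\sup_{\eta:\,D(\eta\|P)<\infty}\Big\{-\lambda D(\eta\|P)+\sup_{Q\in\mathcal{P}(\mathcal{X})}\{E_Q[\mathcal{L}]-\lambda C(\eta,Q)\}\Big\}.
\]

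Next, fixing $\eta$ with $D(\eta\|P)<\infty$, I would unfold the optimal transport cost: since $C(\eta,Q)=\inf\{\int c\,d\pi:\pi_1=\eta,\pi_2=Q\}$, the inner supremum rewrites as $\sup\{\int(\mathcal{L}(y)-\lambda c(x,y))\,\pi(dxdy):\pi_1=\eta\}$, where the supremum over $Q$ and over couplings of $\eta$ and $Q$ collapse into a single supremum over all $\pi\in\mathcal{P}(\mathcal{X}\times\mathcal{X})$ with first marginal $\eta$. I would then apply Lemma \ref{lemma:interchangability} with $\mu=\eta$ and $\phi(x,y)=\mathcal{L}(y)-\lambda c(x,y)$: this $\phi$ is $\mathcal{B}(\mathcal{X})\otimes\mathcal{B}(\mathcal{X})$-measurable (since $\mathcal{L}$ is measurable and $c$ is LSC), and the lemma gives $\sup_{\pi:\pi_1=\eta}\int\phi\,d\pi=\int\sup_y\phi(x,y)\,\overline{\eta}(dx)=\lambda\int\mathcal{L}^c_\lambda\,d\overline{\eta}$, using $\sup_y\{\mathcal{L}(y)-\lambda c(x,y)\}=\lambda\,\mathcal{L}^c_\lambda(x)$ for $\lambda>0$; the same lemma simultaneously provides that $\mathcal{L}^c_\lambda$ is universally measurable, which discharges that part of the claim. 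Substituting back gives $\text{LHS}=\sup_{\eta:\,D(\eta\|P)<\infty}\{E_\eta[\lambda\mathcal{L}^c_\lambda]-\lambda D(\eta\|P)\}$, which is exactly the asserted right-hand side after renaming $\eta$ to $Q$.

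I expect the main obstacle to be not the algebra of the interchanges but the measure-theoretic bookkeeping of $\pm\infty$: the quantities $E_Q[\mathcal{L}]$, $\int\phi\,d\pi$, and $E_\eta[\mathcal{L}^c_\lambda]$ must each be shown to be unambiguously defined before the inf-to-sup rewrites (e.g. $a-\lambda\inf_\eta b_\eta=\sup_\eta(a-\lambda b_\eta)$) are legitimate, and before one can split $\int(\mathcal{L}(y)-\lambda c(x,y))\,d\pi$ into $E_{\pi_2}[\mathcal{L}]-\lambda\int c\,d\pi$. This is precisely where hypotheses 1 and 4 carry their weight: $c\ge 0$ with $c(x,x)=0$ forces $\phi\le\mathcal{L}$ and $\mathcal{L}^c_\lambda\ge\mathcal{L}/\lambda$, while $\mathcal{L}$ bounded above (respectively below) makes every relevant integrand bounded above (respectively below), so each expectation lives in $[-\infty,\infty)$ (respectively $(-\infty,\infty]$) — consistent with the convention $\infty-\infty\coloneqq-\infty$ used in Lemma \ref{lemma:interchangability} for any residual terms coming from couplings with infinite transport cost. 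I would therefore organize the write-up to check the two boundedness cases in parallel at each step. Finally, the convexity hypothesis on $D(\cdot\|P)$ is what guarantees $D^c(\cdot\|P)$ inherits convexity via Lemma \ref{lemma:D_c_convex_property}, making this convex-conjugate identity the right tool for the downstream DRO duality; it need not be invoked explicitly in the chain of interchanges above.
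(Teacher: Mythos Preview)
Your chain of $\sup$-interchanges followed by the interchangeability lemma is precisely the paper's argument when $\mathcal{L}$ is bounded above, and that half is correct. The gap is in your treatment of the bounded-below case. You assert that when $\mathcal{L}$ is bounded below ``every relevant integrand is bounded below'', but this is false for $\phi(x,y)=\mathcal{L}(y)-\lambda c(x,y)$: the cost $c$ takes values in $[0,\infty]$ and need not be bounded, so $\phi$ is not bounded below. As a concrete failure of the splitting step you yourself flag, take $\mathcal{X}=[0,\infty)$, $\mathcal{L}(y)=y$, $c(x,y)=|x-y|$, $\eta=\delta_0$, $\lambda\in(0,1)$, and any $Q$ with $E_Q[Y]=\infty$; the unique coupling $\pi=\delta_0\otimes Q$ gives $\int\phi\,d\pi=(1-\lambda)E_Q[Y]=+\infty$, whereas $E_{\pi_2}[\mathcal{L}]-\lambda\int c\,d\pi=\infty-\infty=-\infty$ under the convention. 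So your rewrite of $\sup_Q\{E_Q[\mathcal{L}]-\lambda C(\eta,Q)\}$ as $\sup_{\pi:\pi_1=\eta}\int\phi\,d\pi$ is not justified by a parallel boundedness argument, and the ``check both cases in parallel'' plan does not go through as stated.

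The paper closes this gap by truncation rather than by a direct argument: set $\mathcal{L}^n=\min\{\mathcal{L},n\}$ (now bounded above as well as below), apply the already-established bounded-above case to each $\mathcal{L}^n$, and then take $\sup_n$ on both sides. Monotone convergence handles both limits --- on the left via $E_Q[\mathcal{L}^n]\uparrow E_Q[\mathcal{L}]$, and on the right via $(\mathcal{L}^n)^c_\lambda\uparrow\mathcal{L}^c_\lambda$ together with the uniform lower bound $(\mathcal{L}^n)^c_\lambda(x)\ge\min\{\mathcal{L}(x),1\}/\lambda$ coming from $c(x,x)=0$. This truncation-plus-MCT step is the missing idea in your plan for the bounded-below case.
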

\begin{remark}
     In Theorem \ref{thm:Dc_convex_conjugate_general_P} and in the following, when it is convenient for simplifying notation we use the same symbol to denote a probability measure and its completion, as the correct interpretation is easily discovered by examining the measurably of the integrand.
\end{remark}
\begin{proof}
Universal measurability of  $\mathcal{L}^c_{\lambda}$ follows from the interchangeability result, Lemma \ref{lemma:interchangability}. To prove \eqref{eq:Dc_convex_conjugate}, first suppose that $\mathcal{L}$ is bounded above.  Using the definitions of $D^c$ and $C$ we can compute
\begin{align}\label{eq:Dc_CC_bounded_above}
&\sup_{Q\in\mathcal{P}(\mathcal{Z}):D^c(Q\|P)<\infty}\left\{E_Q[\mathcal{L}]-\lambda D^c(Q\|P)\right\}\\
=&\sup_{Q\in\mathcal{P}(\mathcal{Z})}\left\{E_Q[\mathcal{L}]-\lambda \inf_{\eta\in\mathcal{P}(\mathcal{Z})}\left\{D(\eta\|P)+C(\eta,Q)\right\}\right\}\notag\\
=&\lambda\sup_{\eta\in\mathcal{P}(\mathcal{Z})}\left\{\sup_{Q\in\mathcal{P}(\mathcal{Z})}\left\{ E_Q[\mathcal{L}/\lambda]-C(\eta,Q)\right\}-D(\eta\|P)\right\}\notag\\
=&\lambda\sup_{\eta\in\mathcal{P}(\mathcal{Z}):D(\eta\|P)<\infty}\left\{\sup_{Q\in\mathcal{P}(\mathcal{Z})}\left\{ \sup_{\pi:\pi_1=\eta,\pi_2=Q}\left\{ \int \lambda^{-1}\mathcal{L}(\tilde z) - c(z,\tilde z)\pi(dzd\tilde z)\right\}\right\}-D(\eta\|P)\right\}\notag\\
=&\lambda\sup_{\eta\in\mathcal{P}(\mathcal{Z}):D(\eta\|P)<\infty}\left\{ \sup_{\pi:\pi_1=\eta}\left\{ \int \lambda^{-1}\mathcal{L}(\tilde z) - c(z,\tilde z)\pi(dzd\tilde z)\right\}-D(\eta\|P)\right\}\notag\,.
\end{align}
Now use   the interchangability result, Lemma \ref{lemma:interchangability}, to obtain  
\begin{align}
&\sup_{Q\in\mathcal{P}(\mathcal{Z}):D^c(Q\|P)<\infty}\left\{E_Q[\mathcal{L}]-\lambda D^c(Q\|P)\right\}\\
=&\sup_{\eta\in\mathcal{P}(\mathcal{Z}):D(\eta\|P)<\infty}\left\{  \int \sup_{\tilde z\in\mathcal{Z}}\{\mathcal{L}(\tilde z) - \lambda c(z,\tilde z)\}{\eta}(dz)-\lambda D(\eta\|P)\right\}\,.\notag
\end{align}
The assumption that $\mathcal{L}$ is bounded above, and hence $E_Q[\mathcal{L}]\in[-\infty,\infty)$ for all $Q$,  ensured that $\infty-\infty$ was not encountered in the computations  \eqref{eq:Dc_CC_bounded_above}. Recalling the definition \eqref{eq:regularized_loss_def} this completes the proof when $\mathcal{L}$ is bounded above.

Now suppose $\mathcal{L}$ is bounded below. Define $\mathcal{L}_n(z)\coloneqq\min\{\mathcal{L}(z),n\}$, $n\in\mathbb{Z}^+$. These are bounded below uniformly in $n$ and so $\sup_n E_Q[\mathcal{L}_n]=E_Q[\mathcal{L}]$ for all $Q$ by the monotone convergence theorem. The $\mathcal{L}_n$ are  bounded above, hence we can use \eqref{eq:Dc_CC_bounded_above} to obtain
\begin{align}
&\sup_{Q\in\mathcal{P}(\mathcal{Z}):D^c(Q\|P)<\infty}\left\{E_Q[\mathcal{L}]-\lambda D^c(Q\|P)\right\}\\
=&\sup_n\sup_{Q\in\mathcal{P}(\mathcal{Z}):D^c(Q\|P)<\infty}\left\{E_Q[\mathcal{L}_n]-\lambda D^c(Q\|P)\right\}\notag\\
=&\sup_{Q\in\mathcal{P}(\mathcal{Z}):D(Q\|P)<\infty}\left\{\sup_n \int  (\mathcal{L}_n)^c_\lambda(z)Q(dz)-\lambda D(Q\|P)\right\}\notag\\
=&\sup_{Q\in\mathcal{P}(\mathcal{Z}):D(Q\|P)<\infty}\left\{ E_Q[\mathcal{L}^c_\lambda]-\lambda D(Q\|P)\right\}\,,\notag
\end{align}
where, noting that the functions $(\mathcal{L}_n)^c_\lambda$ are bounded below uniformly in $n$, we again used the monotone convergence theorem in the final equality.  We emphasize that the convention $\infty-\infty=-\infty$ is needed to justify the computation $\sup_n (\mathcal{L}_n)^c_\lambda(z)=\sup_{\tilde z}\{\sup_n \mathcal{L}_n(\tilde z)-\lambda c(z,\tilde z)\}=\mathcal{L}^c_\lambda(z)$ for all $z$. This proves the claim when $\mathcal{L}$ is bounded below and so the proof is complete.
\end{proof}

In particular, when $D=D_f$ is a $f$-divergence we can further evaluate the convex conjugate of $D_f$ to obtain a formula  that only involves expectations with respect to $P$.
\begin{corollary}\label{cor:CC_formula_OT_f_div}
Suppose we have the following:
\begin{enumerate}
\item A measurable function $\mathcal{L}:\mathcal{Z}\to[-\infty,\infty]$ that is  bounded below or is bounded above.
\item A distribution $P\in\mathcal{P}(\mathcal{Z})$ with $\mathcal{L}^-\in L^1(P)$, where $\mathcal{L}^-$ denotes the negative part of $\mathcal{L}$.
\item $f\in\mathcal{F}_1(a,b)$ with $a\geq 0$.
\item  A  cost function, $c$, that satisfies $c(z,z)=0$ for all $z\in\mathcal{Z}$.
\end{enumerate}
Then for $\lambda>0$ we have
\begin{align}
\sup_{Q\in\mathcal{P}(\mathcal{Z}):D_f^c(Q\|P)<\infty}\{E_Q[\mathcal{L}]-\lambda D_f^c(Q\|P)\}=\lambda\inf_{\rho\in\mathbb{R}}\{\rho+E_{{P}}[f^*(\lambda^{-1} \mathcal{L}^c_\lambda-\rho)]\}\,,
\end{align}
where the definition of $f^*$ is extended by $f^*(\pm\infty)\coloneqq\infty$.
\end{corollary}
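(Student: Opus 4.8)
The plan is to chain Theorem~\ref{thm:Dc_convex_conjugate_general_P} with the $f$-divergence generalization of the Gibbs variational principle. First I would check that the hypotheses of Theorem~\ref{thm:Dc_convex_conjugate_general_P} hold with $D=D_f$: $\mathcal{L}$ is measurable and bounded below or above by assumption~(1), the map $Q\mapsto D_f(Q\|P)$ is convex by Remark~\ref{remark:Df_convex}, and $c(x,x)=0$ by assumption~(4). That theorem then yields
\[
\sup_{Q\in\mathcal{P}(\mathcal{X}):\,D_f^c(Q\|P)<\infty}\{E_Q[\mathcal{L}]-\lambda D_f^c(Q\|P)\}
=\lambda\sup_{Q\in\mathcal{P}(\mathcal{X}):\,D_f(Q\|P)<\infty}\{E_Q[\mathcal{L}^c_\lambda]-D_f(Q\|P)\}\,,
\]
together with the information that $\mathcal{L}^c_\lambda$ is universally measurable. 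It then remains to evaluate the inner supremum over $Q$.

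For that I would apply Theorem~4.2 in \cite{BenTal2007}, namely $\sup_{Q\in\mathcal{P}(\mathcal{X})}\{E_Q[g]-D_f(Q\|P)\}=\inf_{\rho\in\mathbb{R}}\{\rho+E_P[f^*(g-\rho)]\}$, with $g=\mathcal{L}^c_\lambda$. Three points require care. (i)~Integrability $g^-\in L^1(P)$: since $c(x,x)=0$ one has the pointwise bound $\mathcal{L}^c_\lambda(x)\geq \mathcal{L}(x)/\lambda-c(x,x)=\mathcal{L}(x)/\lambda$, hence $(\mathcal{L}^c_\lambda)^-\leq\mathcal{L}^-/\lambda\in L^1(P)$ by assumption~(2); this also shows $\mathcal{L}^c_\lambda>-\infty$ $P$-a.s., so $E_P[f^*(\mathcal{L}^c_\lambda-\rho)]$ is well defined in $(-\infty,\infty]$, with the convention $f^*(+\infty)\coloneqq\infty$ supplying the correct value where $\mathcal{L}^c_\lambda=+\infty$. (ii)~Measurability: $\mathcal{L}^c_\lambda$ is only universally measurable, so I would pass to a Borel function equal to it $\overline P$-a.s.\ (such a representative exists because universally measurable functions are $\mathcal{B}(\mathcal{X})_P$-measurable, hence equal $\overline P$-a.s.\ to Borel functions), which alters neither $E_Q[\mathcal{L}^c_\lambda]$ for $Q\ll P$ nor $E_P[f^*(\mathcal{L}^c_\lambda-\rho)]$. (iii)~The restriction $D_f(Q\|P)<\infty$ coincides with the unrestricted supremum in \cite{BenTal2007}, because assumption~(3) ($a\geq0$) forces $D_f(Q\|P)=\infty$ unless $Q\ll P$ (consistent with Definition~\ref{def:f_div}) and such $Q$ do not raise the value of the objective, using again $g^-\in L^1(P)$.

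Combining the two displays and pulling $\lambda$ through the inner supremum gives the asserted identity
\[
\sup_{Q\in\mathcal{P}(\mathcal{X}):\,D_f^c(Q\|P)<\infty}\{E_Q[\mathcal{L}]-\lambda D_f^c(Q\|P)\}=\lambda\inf_{\rho\in\mathbb{R}}\{\rho+E_P[f^*(\mathcal{L}^c_\lambda-\rho)]\}\,.
\]
I expect no conceptual obstacle: the two invoked results do the real work, and the effort lies entirely in the measure-theoretic bookkeeping — verifying $(\mathcal{L}^c_\lambda)^-\in L^1(P)$, carrying out the universal-to-Borel reduction, and confirming the mutual consistency of the conventions $\infty-\infty\coloneqq-\infty$ (from Lemma~\ref{lemma:interchangability}, inherited through Theorem~\ref{thm:Dc_convex_conjugate_general_P}) and $f^*(\pm\infty)\coloneqq\infty$ along the chain of equalities.
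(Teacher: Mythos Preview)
Your proposal is correct and follows essentially the same approach as the paper: apply Theorem~\ref{thm:Dc_convex_conjugate_general_P} with $D=D_f$, then invoke the Gibbs variational principle for $f$-divergences (Theorem~4.2 in \cite{BenTal2007}), handling the same technical points along the way---the integrability bound $(\mathcal{L}^c_\lambda)^-\leq \mathcal{L}^-/\lambda\in L^1(P)$, the universal-to-Borel reduction of $\mathcal{L}^c_\lambda$, and the observation that $D_f(Q\|P)<\infty$ forces $Q\ll P$. Your treatment of the bookkeeping is, if anything, slightly more explicit than the paper's.
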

The proof follows from combining  Theorem \ref{thm:Dc_convex_conjugate_general_P} with \eqref{eq:Df_Gibbs}; see Corollary \ref{cor:CC_formula_OT_f_div_supp} in Appendix \ref{app:additional_proofs} for details.

\subsection{A Tractable Reformulation of the DRO Problem}
In this section we use the above results to derive a tractable reformulation of the OT-regularized-divergence DRO problem. First we need one final lemma, regarding the finiteness of expectations as the distribution ranges over an OT-regularized-divergence neighborhood.
\begin{lemma}\label{lemma:finite_integrals}
Suppose we have the following:
\begin{enumerate}
\item A measurable function $\mathcal{L}:\mathcal{Z}\to [-\infty,\infty]$.
\item  A distribution $P\in\mathcal{P}(\mathcal{Z})$.
\item  A pre-divergence, $D$, such that $D(\cdot\|P)$ is convex.
\item A cost function, $c$, that satisfies $c(z,z)=0$ for all $z\in\mathcal{Z}$.
\end{enumerate}
Suppose there exists $\epsilon>0$ such that $\mathcal{L}\in L^1(Q)$ for all $Q\in\mathcal{P}(\mathcal{Z})$ that satisfy $D^c(Q\|P)\leq \epsilon$.  Then $\mathcal{L}\in L^1(Q)$ for all $Q$ that satisfy $D^c(Q\|P)<\infty$.
\end{lemma}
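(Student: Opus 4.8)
The plan is to use convexity of $\nu\mapsto D^c(\nu\|P)$ together with the identity $D^c(P\|P)=0$: these let us shrink any $Q$ with $D^c(Q\|P)<\infty$ toward $P$ along a line segment until it enters the neighborhood $\{D^c(\cdot\|P)\le\epsilon\}$, where $\mathcal{L}$ is integrable by hypothesis, and then transfer integrability back to $Q$ using linearity of the integral in the measure.

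First I would record the two structural facts. Since $D(\cdot\|P)$ is convex by assumption, Lemma \ref{lemma:D_c_convex_property} gives that $\nu\mapsto D^c(\nu\|P)$ is convex; since $c(x,x)=0$ for all $x$, Lemma \ref{lemma:D_c_pre_div_property} gives $D^c(P\|P)=0$. Now fix $Q\in\mathcal{P}(\mathcal{X})$ with $D^c(Q\|P)<\infty$ and, for $t\in(0,1]$, set $Q_t\coloneqq(1-t)P+tQ\in\mathcal{P}(\mathcal{X})$. Convexity yields
\[
D^c(Q_t\|P)\le (1-t)\,D^c(P\|P)+t\,D^c(Q\|P)=t\,D^c(Q\|P)\,.
\]
Choosing $t\in(0,1]$ small enough that $t\,D^c(Q\|P)\le\epsilon$ (always possible since $D^c(Q\|P)<\infty$) we obtain $D^c(Q_t\|P)\le\epsilon$, so by hypothesis $\mathcal{L}\in L^1(Q_t)$, i.e.\ $E_{Q_t}[|\mathcal{L}|]<\infty$.

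Second, since $|\mathcal{L}|\ge 0$ is measurable, the Lebesgue integral is additive over the mixture $Q_t=(1-t)P+tQ$:
\[
E_{Q_t}[|\mathcal{L}|]=(1-t)\,E_P[|\mathcal{L}|]+t\,E_Q[|\mathcal{L}|]\,.
\]
All three expectations are nonnegative and the left-hand side is finite, hence each summand on the right is finite; as $t>0$, this forces $E_Q[|\mathcal{L}|]<\infty$, i.e.\ $\mathcal{L}\in L^1(Q)$, which is the claim.

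I do not anticipate a genuine obstacle here. The only points that require a little care are invoking convexity of $D^c(\cdot\|P)$ — which is exactly why the hypothesis that $D(\cdot\|P)$ is convex is imposed — and noting that integration of a nonnegative function is linear in the measure, so no $\infty-\infty$ ambiguity arises in the additivity step. The hypothesis $c(x,x)=0$ enters only through $D^c(P\|P)=0$, ensuring that $P$ itself lies in every neighborhood $\{D^c(\cdot\|P)\le\epsilon\}$ and thus anchoring the interpolation.
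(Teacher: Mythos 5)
Your proof is correct and follows essentially the same route as the paper's: both use Lemmas \ref{lemma:D_c_convex_property} and \ref{lemma:D_c_pre_div_property} to get convexity of $D^c(\cdot\|P)$ and $D^c(P\|P)=0$, interpolate $Q_t=(1-t)P+tQ$ into the $\epsilon$-neighborhood, and transfer integrability back via linearity of the integral in the measure. No gaps.
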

\begin{proof}
The assumptions imply that $D^c$ is a pre-divergence (see Lemma \ref{lemma:D_c_pre_div_property}) and $D^c(\cdot\|P)$ is convex (see Lemma \ref{lemma:D_c_convex_property}). Take any $Q$ with $D^c(Q\|P)<\infty$ and define $Q_t=tQ+(1-t)P$ for $t\in(0,1)$.  By convexity and the pre-divergence property we have $D^c(Q_t\|P)\leq tD^c( Q\|P)$.  We assumed $D^c( Q\|P)<\infty$, hence there exists $t\in(0,1)$ with $D^c(Q_t\|P)\leq \epsilon$. This implies $\mathcal{L}\in L^1(Q_t)$ and so $\infty >E_{Q_t}[|\mathcal{L}|]=tE_{ Q}[|\mathcal{L}|]+(1-t)E_P[|\mathcal{L}|]$. We have $t\in(0,1)$, therefore we can conclude $E_{ Q}[|\mathcal{L}|]<\infty$ as claimed.
\end{proof}

We are now ready to consider the DRO problem for general $P$. We also allow for an explicit $D^c$ penalty term, in addition to maximizing over the distribution neighborhood; { such a penalty term can be useful for alleviating numerical issues that can arise when $\lambda\to 0$.}
\begin{theorem}\label{thm:DRO_gen_P_gen_D}
Suppose we have the following:
\begin{enumerate}
\item A measurable function $\mathcal{L}:\mathcal{Z}\to[-\infty,\infty]$ that is  bounded below or is bounded above.
\item A distribution $P\in\mathcal{P}(\mathcal{Z})$ with $\mathcal{L}^-\in L^1(P)$.
\item A pre-divergence, $D$, such that $D(\cdot\|P)$ is convex.
\item  A  cost function, $c$, that satisfies $c(z,z)=0$ for all $z\in\mathcal{Z}$.
\end{enumerate}
Then for $\epsilon>0$, $\kappa\geq 0$ we have
\begin{align}\label{eq:DRP_general_P_unbounded_loss}
&\sup_{Q:D^c(Q\|P)\leq \epsilon}\{E_Q[\mathcal{L}]-\kappa D^c(Q\|P)\}\\
=&\inf_{\lambda>0}\{\lambda \epsilon+(\lambda+\kappa)\sup_{Q\in\mathcal{P}(\mathcal{Z}):D(Q\|P)<\infty}\{E_{{Q}}[(\lambda+\kappa)^{-1} \mathcal{L}_{\lambda+\kappa}^c]- D(Q\|P)\}\}\,.\notag
\end{align} 
\end{theorem}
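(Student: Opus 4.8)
The plan is to treat \eqref{eq:DRP_general_P_unbounded_loss} as a concave maximization over the convex set $\{Q:D^c(Q\|P)\le\epsilon\}$ (convexity of this set is Lemma~\ref{lemma:D_c_convex_property}), dualize the single scalar constraint, and then invoke Theorem~\ref{thm:Dc_convex_conjugate_general_P} to recognize the resulting inner supremum in terms of the $c$-transformed loss $\mathcal{L}^c_{\lambda+\kappa}$ of \eqref{eq:regularized_loss_def}. The $\le$ direction is weak duality: for any $\lambda>0$ and any feasible $Q$ we have $\lambda(\epsilon-D^c(Q\|P))\ge0$, so $E_Q[\mathcal{L}]-\kappa D^c(Q\|P)\le\lambda\epsilon+E_Q[\mathcal{L}]-(\lambda+\kappa)D^c(Q\|P)$; bounding the right-hand side by its supremum over all $Q$ with $D^c(Q\|P)<\infty$ and applying Theorem~\ref{thm:Dc_convex_conjugate_general_P} with coefficient $\lambda+\kappa>0$ turns that supremum into $(\lambda+\kappa)\sup_{Q':D(Q'\|P)<\infty}\{E_{Q'}[\mathcal{L}^c_{\lambda+\kappa}]-D(Q'\|P)\}$, and taking $\sup$ over feasible $Q$ and then $\inf$ over $\lambda>0$ completes this direction. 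The one-sided boundedness of $\mathcal{L}$ together with $c(x,x)=0$ keeps every expectation well defined throughout.

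For the $\ge$ direction I would establish strong Lagrangian duality for the scalar-constrained problem. Put $g(Q)=E_Q[\mathcal{L}]-\kappa D^c(Q\|P)$, which is concave by Lemma~\ref{lemma:D_c_convex_property}, and $h(Q)=D^c(Q\|P)$, which is convex; Slater's condition holds at $Q=P$ since $h(P)=D^c(P\|P)=0<\epsilon$ by Lemma~\ref{lemma:D_c_pre_div_property}. With $V$ denoting the left-hand side of \eqref{eq:DRP_general_P_unbounded_loss}, the standard supporting-hyperplane argument for a convex program with one inequality constraint --- separate the point $(0,V)$ from the convex set $\{(t,s):\exists Q,\ h(Q)-\epsilon\le t,\ g(Q)\ge s\}$, whose ``upward-left'' recession directions fix the signs of the normal and whose Slater point excludes the degenerate vertical supporting hyperplane --- yields, when $V<\infty$, the identity $V=\inf_{\lambda\ge0}\{\lambda\epsilon+\sup_{Q\in\mathcal{P}(\mathcal{X})}\{E_Q[\mathcal{L}]-(\lambda+\kappa)D^c(Q\|P)\}\}$; the case $V=\infty$ is immediate from weak duality. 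The right-hand side is visibly convex in $\lambda$ on $[0,\infty)$, being $\lambda\epsilon$ plus a supremum of functions affine in $\lambda$, so its infimum over $\lambda\ge0$ coincides with its infimum over $\lambda>0$; and for $\lambda>0$, Theorem~\ref{thm:Dc_convex_conjugate_general_P} with coefficient $\lambda+\kappa>0$ (after reducing $\sup_{Q\in\mathcal{P}(\mathcal{X})}$ to $\sup_{Q:D^c(Q\|P)<\infty}$) rewrites the inner supremum as $(\lambda+\kappa)\sup_{Q':D(Q'\|P)<\infty}\{E_{Q'}[\mathcal{L}^c_{\lambda+\kappa}]-D(Q'\|P)\}$, which is exactly the right-hand side of \eqref{eq:DRP_general_P_unbounded_loss}.

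The genuinely delicate part is not the convex analysis above --- which is essentially the Esfahani--Kuhn reduction specialized to $D^c$-neighborhoods, with Lemma~\ref{lemma:interchangability} and Theorem~\ref{thm:Dc_convex_conjugate_general_P} already absorbing the transport-plan and $c$-transform steps --- but the bookkeeping of $\pm\infty$ when $\mathcal{L}$ is only bounded on one side. Two issues recur: passing from $\sup_{Q\in\mathcal{P}(\mathcal{X})}$ to $\sup_{Q:D^c(Q\|P)<\infty}$ in the Lagrangian so that Theorem~\ref{thm:Dc_convex_conjugate_general_P} applies, and keeping $g$ real-valued. For the latter, when $\mathcal{L}$ is bounded below this is precisely where Lemma~\ref{lemma:finite_integrals} enters: if $V<\infty$ then $E_Q[\mathcal{L}]=g(Q)+\kappa D^c(Q\|P)\le V+\kappa\epsilon<\infty$ on the $\epsilon$-ball, so $\mathcal{L}\in L^1(Q)$ there and hence, by Lemma~\ref{lemma:finite_integrals}, on all of $\{Q:D^c(Q\|P)<\infty\}$; the case $\mathcal{L}$ bounded above needs no such step since $E_Q[\mathcal{L}]<\infty$ automatically. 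I would handle both issues as in the proof of Theorem~\ref{thm:Dc_convex_conjugate_general_P}: first prove the identity for the bounded truncations $\mathcal{L}^n:=\min\{\mathcal{L},n\}$, then pass $n\to\infty$ using the monotone convergence theorem, the pointwise monotonicity $(\mathcal{L}^n)^c_\mu\nearrow\mathcal{L}^c_\mu$, and a uniform-in-$n$ lower bound on $(\mathcal{L}^n)^c_\mu$. Making this truncation pass through the constrained primal and the one-dimensional dual simultaneously, so that the two limits line up, is the main obstacle.
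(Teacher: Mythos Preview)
Your overall strategy matches the paper's: Slater at $Q=P$, Lagrangian duality for the single convex constraint, then Theorem~\ref{thm:Dc_convex_conjugate_general_P} to convert the inner supremum into the $c$-transformed form. The only substantive difference is in the $\pm\infty$ bookkeeping, where the paper avoids your proposed truncation-and-limit entirely. It first disposes of the case where some $Q$ in the $\epsilon$-ball has $E_Q[\mathcal{L}^+]=\infty$ (both sides of \eqref{eq:DRP_general_P_unbounded_loss} are then $+\infty$), and otherwise applies Lemma~\ref{lemma:finite_integrals} to $\mathcal{L}^+$ rather than to $\mathcal{L}$, yielding $\mathcal{L}^+\in L^1(Q)$ for every $Q$ with $D^c(Q\|P)<\infty$; this makes $Q\mapsto E_Q[\mathcal{L}]-\kappa D^c(Q\|P)$ a concave $[-\infty,\infty)$-valued map on $\{D^c(\cdot\|P)<\infty\}$ with a finite Slater point at $P$, so standard strong duality (Ponstein, Theorem 3.11.2) applies directly. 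Since Theorem~\ref{thm:Dc_convex_conjugate_general_P} is already stated for one-sided bounded $\mathcal{L}$, no second truncation is needed at this level, and the ``main obstacle'' you flag simply does not arise. A minor point: the paper justifies passing from $\inf_{\lambda\ge0}$ to $\inf_{\lambda>0}$ via the nonnegativity of $D^c$ (which gives $d(\lambda)\le\lambda\epsilon+d(0)$ and hence $\inf_{\lambda>0}d(\lambda)\le d(0)$) rather than by appealing to convexity of the dual; both work, but yours would need an extra word when $d\equiv+\infty$ on $(0,\infty)$.
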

\begin{proof}
We will show that
\begin{align}\label{eq:strong_duality_general_P}
\sup_{Q:D^c(Q\|P)\leq \epsilon}\{E_Q[\mathcal{L}]-\kappa D^c(Q\|P)\}=\inf_{\lambda>0}\{\lambda\epsilon+\sup_{Q:D^c(Q\|P)<\infty}\{E_Q[\mathcal{L}]-(\lambda+\kappa)D^c(Q\|P)\}\}\,.
\end{align}
Combining this with the result of Theorem \ref{thm:Dc_convex_conjugate_general_P} will then complete the proof. If there exists $Q$ such that $D^c(Q\|P)\leq \epsilon$ and $E_Q[\mathcal{L}^+]=\infty$ then it is straightforward to see that both sides of \eqref{eq:strong_duality_general_P} equal $\infty$.  Therefore it suffices to consider the case where $E_Q[\mathcal{L}^+]<\infty$ for all $Q$ satisfying $D^c(Q\|P)\leq \epsilon$.  Applying  Lemma \ref{lemma:finite_integrals} to $\mathcal{L}^+$ then implies $\mathcal{L}^+\in L^1(Q)$ for all $Q$ satisfying $D^c(Q\|P)<\infty$. Therefore the $F:Q\mapsto E_Q[\mathcal{L}]-\kappa D^c(Q\|P)$ is a concave map from $\{Q:D^c(Q\|P)<\infty\}$ to $[-\infty,\infty)$ and $Q\mapsto D^c(Q\|P)$ is a convex constraint. $P$ satisfies $F[P]\in\mathbb{R}$ and $D^c(P\|P)<\epsilon$. Therefore Slater's constraint qualification condition holds (see, e.g., Theorem 3.11.2  in \cite{ponstein2004approaches}). This implies strong duality, i.e., that the equality \eqref{eq:strong_duality_general_P} holds. We note that the infimum can be restricted to $\lambda>0$ (rather than $\lambda\geq 0$) due to the lower bound on the constraint function,   $D^c(\cdot\|P)\geq 0$. This proves the claim.
\end{proof}

If $D$ is a $f$-divergence, the convex conjugate term (i.e., the supremum over $Q$) in \eqref{eq:DRP_general_P_unbounded_loss} can be evaluated   as in Corollary \ref{cor:CC_formula_OT_f_div}, resulting in a two-dimensional convex optimization problem. The details can be found in Appendix \ref{supp:reformulation_OT_f_DRO}.
\begin{corollary}\label{cor:OT_f_div_DRO}
Suppose we have the following:
\begin{enumerate}
\item A measurable function $\mathcal{L}:\mathcal{Z}\to[-\infty,\infty]$ that is  bounded below or is bounded above.
\item A distribution $P\in\mathcal{P}(\mathcal{Z})$ with $\mathcal{L}^-\in L^1(P)$.
\item $f\in\mathcal{F}_1(a,b)$ where $a\geq 0$.
\item  A  cost function, $c$, that satisfies $c(z,z)=0$  for all $z\in\mathcal{Z}$.
\end{enumerate}
Define $f^*(\pm\infty)\coloneqq\infty$. Then for $\epsilon>0$, $\kappa\geq 0$  we have
\begin{align}\label{eq:OT_f_div_DRO}
&\sup_{Q:D_f^c(Q\|P)\leq \epsilon}\{E_Q[\mathcal{L}]-\kappa D_f^c(Q\|P)\}\\
=&\inf_{\lambda>0,\rho\in\mathbb{R}}\{\lambda \epsilon+\rho+(\lambda+\kappa)E_{{P}}[f^*((\mathcal{L}^c_{\lambda+\kappa}-\rho)/(\lambda+\kappa))]\}\notag
\end{align}  
and the objective of the finite dimensional minimization is convex on  $(0,\infty)\times \mathbb{R}$.
\end{corollary}
{ A similar result can be proven for  R{\'e}nyi divergences by combining  Theorem \ref{thm:DRO_gen_P_gen_D} with Theorem \ref{thm:Renyi_cc} from Appendix \ref{app:additional_proofs}; see \eqref{eq:OT_reg_Renyi_DRO} .}

Finally, we  derive limiting formulas for the DRO problem, analogous to the interpolation results for $D^c$, Theorems \ref{thm:limit_D_c_to_C} and \ref{thm:limit_D_c_to_D}.  Though we don't use those theorems directly, the method of proof is similar and the conclusions align with what one expects in light of those results. We do require the more stringent assumptions that $\mathcal{Z}$ is compact and $\mathcal{L}$ is upper semicontinuous (USC),  which are often the case in practice. The proofs use many of the same techniques that were employed in Theorem \ref{thm:limit_D_c_to_C} and \ref{thm:limit_D_c_to_D} and so we defer details to Appendix \ref{supp:DRO_limit_thms}.
\begin{theorem}\label{thm:DRO_limit_r_0}
Suppose  the Polish space $\mathcal{Z}$ is compact and that we have the following:
\begin{enumerate}
\item An USC function $\mathcal{L}:\mathcal{Z}\to[-\infty,\infty)$.
\item A distribution $P\in\mathcal{P}(\mathcal{Z})$.
\item A pre-divergence $D$ such that $D(\cdot\|P)$ is LSC and $D(\mu_n\|P)\to 0$ implies $\mu_n\to P$ weakly.
\item A cost-function $c$.
\end{enumerate}
For $r>0$ define the cost functions $c_r=rc$. Then for $\epsilon>0$ we have
\begin{align}\label{eq:DRO_lim_0}
 \lim_{r\to 0^+}\sup_{Q:D^{c_r}(Q\|P)\leq r\epsilon}E_Q[\mathcal{L}]=\sup_{Q:C(P,Q)\leq \epsilon}E_Q[\mathcal{L}]\,.
\end{align}
If $\mathcal{L}_\theta:\mathcal{Z}\to[-\infty,\infty)$, $\theta\in\Theta$, is a family of USC functions then
\begin{align}\label{eq:DRO_min_r_0_limit}
\lim_{r\to 0^+}\inf_{\theta\in\Theta}\sup_{Q:D^{c_r}(Q\|P)\leq r\epsilon}E_Q[\mathcal{L}_\theta]=\inf_{\theta\in\Theta}\sup_{Q:C(P,Q)\leq \epsilon}E_Q[\mathcal{L}_\theta]\,.
\end{align}
\end{theorem}
\begin{theorem}\label{thm:DRO_limit_r_infty}
Suppose  the Polish space $\mathcal{Z}$ is compact and that we have the following:
\begin{enumerate}
\item An USC function $\mathcal{L}:\mathcal{Z}\to[-\infty,\infty)$.
\item A distribution $P\in\mathcal{P}(\mathcal{Z})$.
\item A pre-divergence $D$ such that $D(\cdot\|P)$ is LSC.
\item A cost-function $c$ that satisfies $c(z,\tilde z)=0$ iff $z=\tilde z$.
\end{enumerate}
For $r>0$ define the cost functions $c_r=rc$. Then for $\epsilon>0$ we have
\begin{align}
 \lim_{r\to \infty}\sup_{Q:D^{c_r}(Q\|P)\leq \epsilon}E_Q[\mathcal{L}]=\sup_{Q:D(Q\|P)\leq \epsilon}E_Q[\mathcal{L}]\,.
\end{align}
If $\mathcal{L}_\theta:\mathcal{Z}\to[-\infty,\infty)$, $\theta\in\Theta$, is a family of USC functions then
\begin{align}
 \lim_{r\to \infty}\inf_{\theta\in\Theta}\sup_{Q:D^{c_r}(Q\|P)\leq \epsilon}E_Q[\mathcal{L}_\theta]=\inf_{\theta\in\Theta}\sup_{Q:D(Q\|P)\leq \epsilon}E_Q[\mathcal{L}_\theta]\,.
\end{align}
\end{theorem}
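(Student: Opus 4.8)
The plan is to mirror the structure of the proof of Theorem \ref{thm:DRO_limit_r_0}, with the limit $r\to 0^+$ replaced by $r\to\infty$ and the limiting neighborhood $\{Q:C(P,Q)\le\epsilon\}$ replaced by $\{Q:D(Q\|P)\le\epsilon\}$. First I would record the elementary bound $D^{c_r}(Q\|P)\le D(Q\|P)$, obtained by evaluating the infimum in \eqref{eq:D_c_def} at $\eta=Q$ and using $c_r(x,x)=0$ (a consequence of $c(x_1,x_2)=0$ iff $x_1=x_2$). This gives the inclusion $\{Q:D(Q\|P)\le\epsilon\}\subseteq K_r\coloneqq\{Q:D^{c_r}(Q\|P)\le\epsilon\}$, hence the ``$\ge$'' direction $\sup_{Q:D(Q\|P)\le\epsilon}E_Q[\mathcal{L}]\le\sup_{Q\in K_r}E_Q[\mathcal{L}]$ for every $r>0$. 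Since $r\mapsto D^{c_r}(Q\|P)$ is non-decreasing, the $K_r$ are nested decreasing, so $r\mapsto\sup_{Q\in K_r}E_Q[\mathcal{L}]$ is non-increasing and its limit as $r\to\infty$ equals $\inf_{r>0}\sup_{Q\in K_r}E_Q[\mathcal{L}]$.

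For the reverse inequality I would use compactness. Since $\mathcal{X}$ is a compact Polish space, $\mathcal{P}(\mathcal{X})$ is compact; $D(\cdot\|P)$ is LSC so it has closed, hence compact, sublevel sets, and therefore Theorem \ref{thm:Dc_LSC} applies to every $c_r$, giving that $D^{c_r}(\cdot\|P)$ is LSC and each $K_r$ is closed, hence compact. Since $\mathcal{L}$ is USC it attains a maximum on the compact $\mathcal{X}$ and so is bounded above, the expectations are well-defined, and by the Portmanteau theorem $Q\mapsto E_Q[\mathcal{L}]$ is USC; thus it attains its maximum over $K_r$ at some $Q_r$. Taking $r_n\nearrow\infty$, extract a weakly convergent subsequence $Q_j\coloneqq Q_{r_{n_j}}\to Q_*$, so that $\limsup_j E_{Q_j}[\mathcal{L}]\le E_{Q_*}[\mathcal{L}]$. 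By Theorem \ref{thm:inf_conv_solution} (whose hypotheses hold here) there exist intermediate measures $\eta_{*,j}$ with $D^{c_{r_{n_j}}}(Q_j\|P)=D(\eta_{*,j}\|P)+r_{n_j}C(\eta_{*,j},Q_j)\le\epsilon$.

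The crux — and the step I expect to be the main obstacle — is showing that $Q_*$ lies in $\{Q:D(Q\|P)\le\epsilon\}$, which I would do by first identifying the limit of the $\eta_{*,j}$ with $Q_*$. From the displayed inequality one reads off $C(\eta_{*,j},Q_j)\le\epsilon/r_{n_j}\to 0$ and $D(\eta_{*,j}\|P)\le\epsilon$; the latter confines the $\eta_{*,j}$ to a compact sublevel set, so passing to a further subsequence $\eta_{*,j_k}\to\eta_*$ weakly. Joint lower semicontinuity of $C$ gives $C(\eta_*,Q_*)\le\liminf_k C(\eta_{*,j_k},Q_{j_k})=0$, and since $c(x_1,x_2)=0$ iff $x_1=x_2$ implies $C(\mu,\nu)=0$ iff $\mu=\nu$, we conclude $\eta_*=Q_*$. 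Lower semicontinuity of $D(\cdot\|P)$ then yields $D(Q_*\|P)=D(\eta_*\|P)\le\liminf_k D(\eta_{*,j_k}\|P)\le\epsilon$, so $Q_*$ is feasible. Therefore
\[
\sup_{Q:D(Q\|P)\le\epsilon}E_Q[\mathcal{L}]\ge E_{Q_*}[\mathcal{L}]\ge\limsup_k E_{Q_{j_k}}[\mathcal{L}]=\limsup_k\sup_{Q\in K_{r_{n_{j_k}}}}E_Q[\mathcal{L}]\ge\inf_{r>0}\sup_{Q\in K_r}E_Q[\mathcal{L}],
\]
where the last step uses monotonicity in $r$; combined with the ``$\ge$'' direction this proves the single-loss identity. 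Finally, for a family $\mathcal{L}_\theta$, since for each $\theta$ the map $r\mapsto\sup_{Q\in K_r}E_Q[\mathcal{L}_\theta]$ is non-increasing, so is $r\mapsto\inf_\theta\sup_{Q\in K_r}E_Q[\mathcal{L}_\theta]$; hence its limit as $r\to\infty$ is $\inf_{r>0}$, one may interchange $\inf_\theta$ with $\inf_{r>0}$, and applying the single-loss result termwise gives the stated min-max identity.
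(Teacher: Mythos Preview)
Your proposal is correct and follows precisely the approach the paper indicates: the paper omits the details, saying only that the proof ``is similar to that of Theorem \ref{thm:DRO_limit_r_0}, with slight differences that are motivated by the proof of Theorem \ref{thm:limit_D_c_to_D}.'' Your argument is exactly this hybrid---the compactness/maximizer/subsequence scaffolding of Theorem \ref{thm:DRO_limit_r_0} combined with the key step from Theorem \ref{thm:limit_D_c_to_D} of identifying the limit of the intermediate measures $\eta_{*,j_k}$ with $Q_*$ via $C(\eta_*,Q_*)=0$ and then invoking LSC of $D(\cdot\|P)$ to place $Q_*$ in the $D$-neighborhood.
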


\subsection{Pre-Divergences with Tractable Convex Conjugate}\label{sec:CC_general_D}

{ 
In this subsection, we present a method for constructing a large collection of pre-divergences, $D$,  that have computationally-tractable convex conjugates, expanding on the $f$-divergence and R{\'e}nyi cases from \eqref{eq:Df_Gibbs} and \eqref{eq:Renyi_cc} respectively. This general class of $D$'s can then be combined with Theorem \ref{thm:DRO_gen_P_gen_D} to obtain tractable reformulations of the corresponding OT-regularized  DRO problems.  To simplify the technicalities, we focus on the case where $\mathcal{Z}$ is  compact; this is often sufficient in practice. The non-compact case could similarly be studied using the theory of infimal convolutions, see, e.g., Theorem 2.3.10 in \cite{bot2009duality}.  We will equip $C(\mathcal{Z})$ with the topology from the supremum norm and $\mathcal{P}(\mathcal{Z})$ with the topology of weak convergence.
\begin{theorem}\label{thm:CC_general_D}
Let $\mathcal{Z}$ be a compact Polish space, and suppose for every $P\in\mathcal{P}(\mathcal{Z})$ we have $H_P: C(\mathcal{Z})\to(-\infty,\infty]$ that is convex, LSC,  and that satisfies $\inf_{\rho\in\mathbb{R}}\{\rho+H_P[-\rho]\}=0$  and  $H_P[g]\geq E_P[g]$ for all $g\in C(\mathcal{Z})$. Define $D_H:\mathcal{P}(\mathcal{Z})\times\mathcal{P}(\mathcal{Z})\to[0,\infty]$  by
\begin{align}\label{eq:DH_def}
D_H(Q\|P)\coloneqq\sup_{g\in C(\mathcal{Z})}\{E_Q[g] -H_P[g]\}\,.
\end{align}
Then $D_H$ is a pre-divergence, $D_H(\cdot\|P)$ is convex and LSC for all $P\in\mathcal{P}(\mathcal{Z})$, and the convex conjugate of $D_H(\cdot\|P)$ at $\phi\in C(\mathcal{Z})$ is given by
\begin{align}\label{eq:D_CC_general}
    \sup_{Q\in\mathcal{P}(\mathcal{Z}):D_H(Q\|P)<\infty}\{E_Q[\phi]-D_H(Q\|P)\}=\inf_{g\in C(\mathcal{Z})}\{\sup g+ H_P[\phi-g]\}\,.
\end{align}

If, in addition, we have the monotonicity property $g_1\leq g_2\implies H_P[g_1]\leq H_P[g_2]$ for all $P\in\mathcal{P}(\mathcal{Z})$, $g_1,g_2\in C(\mathcal{Z})$ then
\begin{align}\label{eq:D_CC_general_monotone}
    \sup_{Q\in\mathcal{P}(\mathcal{Z}):D_H(Q\|P)<\infty}\{E_Q[\phi]-D_H(Q\|P)\}=\inf_{\rho\in\mathbb{R}}\{\rho+ H_P[\phi-\rho]\}\,.
\end{align}
\end{theorem}
\begin{remark}
The $f$-divergence case \eqref{eq:Df_Gibbs} corresponds to 
$H_P[g]=  E_P[f^*(g)]$ and, formally, the R{\'e}nyi divergence case \eqref{eq:Renyi_cc} corresponds to $H_P[g]=\Lambda_\alpha^P[g]$ (to make this rigorous, it must be suitably modified to be LSC; we provide a rigorous proof of \eqref{eq:Renyi_cc} in Theorem \ref{thm:Renyi_cc}). Note that the specialized proof for $f$-divergences in  Corollary \ref{cor:CC_formula_OT_f_div} also applies to non-compact $\mathcal{Z}$.
\end{remark}
\begin{proof}
Non-negativity of $D_H$ follows from
\begin{align}
    D_H(Q\|P)\geq \sup_{\rho\in\mathbb{R}}\{-\rho-H_P[-\rho]\}=0\,.
\end{align}
Combined with the bound $H_P[g]\geq E_P[g]$ we see that $D_H(P\|P)=0$. Therefore $D_H$ is a pre-divergence.  The definition \eqref{eq:DH_def} clearly implies $D_H(\cdot\|P)$ is convex and LSC. Noting that $\mathcal{P}(\mathcal{Z})$ is convex and compact, $\{g\in C(\mathcal{Z}):H_P[g]<\infty\}$ is convex, and $(g,Q)\mapsto E_Q[\phi-g]+H_P[g]$ is continuous and linear in $Q$ for every $g$ and is convex and LSC in $g$ for every $Q$, we can apply Sion's Theorem \cite{Sion1958,komiya1988elementary} to obtain
\begin{align}
\sup_{Q:D_H(Q\|P)<\infty}\{E_Q[\phi]-D_H(Q\|P)\}
  =&\inf_{\substack{g\in C(\mathcal{Z}):\\H_P[g]<\infty}}\sup_{Q\in \mathcal{P}(\mathcal{Z})}\{E_Q[\phi-g]+H_P[g]\}\notag\\
  =&\inf_{\substack{g\in C(\mathcal{Z}):\\H_P[g]<\infty}}\{\sup\{\phi-g\}+H_P[g]\}\,.\notag
\end{align}
This implies \eqref{eq:D_CC_general}. In the monotone case, use $H_P[\phi-g]\geq H_P[\phi-\sup g]$ to conclude \eqref{eq:D_CC_general_monotone}.
\end{proof}
By combining  Theorem \ref{thm:DRO_gen_P_gen_D} with a family of functionals that satisfy the hypotheses of Theorem \ref{thm:CC_general_D} we arrive at the DRO problem reformulation
\begin{align}\label{eq:DH_DRO}
&\sup_{Q:D_H^c(Q\|P)\leq \epsilon}\{E_Q[\mathcal{L}]-\kappa D_H^c(Q\|P)\}\\
=&\inf_{\lambda>0,g\in C(\mathcal{Z})}\left\{\lambda \epsilon+\sup g+(\lambda+\kappa)H_P\left[(\lambda+\kappa)^{-1} (\mathcal{L}_{\lambda+\kappa}^c-g)\right]\right\}\,,\notag
\end{align} 
and if the $H_P$'s satisfy the above monotonicity property  then the infimum over $g\in C(\mathcal{Z})$ can be replaced by the infimum over $\rho\in\mathbb{R}$.
Theorem \ref{thm:CC_general_D} is thus a powerful tool for constructing a large class of $D$'s to which our DRO theory can be applied, and which lead to computationally tractable reformulations  of the  OT-regularized-divergence DRO problem.  In particular, note that given any such family   of $H_P$'s one can obtain another such family by adding  non-negative, LSC, convex penalty terms that vanish on constant functions, e.g., variance penalties $H_P[g]+\gamma\text{Var}_P[g]$, $\gamma>0$. }

\section{Conclusions}
\label{sec:conclusions}
 We introduced a new class of divergences for comparing probability distributions, the optimal-transport-regularized divergences, $D^c$, which are defined as an infimal convolution between an information divergence $D$ (such as KL) and an optimal-transport (OT) cost $C$. Using these new divergences within the distributionally robust optimization (DRO) framework, we proposed   the $ARMOR_D$ methods for enhancing the adversarial robustness of deep learning models.  The key innovation is the principled and dynamical manner in which the method combines transported adversarial samples with adversarial re-weighting of the samples via the information divergence.  In practice, the adversarial re-weighting focuses the optimization towards improving the performance on the most troublesome adversarial samples. We proved that these new tools have many attractive mathematical properties, making them well suited to applications in statistical learning, including adversarial training. As our method is based on the general DRO framework of $D^c$ neighborhoods, it can be used to augment any  empirical risk minimization problem. We demonstrated this flexibility by using $ARMOR_D$ to augment the UDR, TRADES, and MART methods and obtained improved performance in adversarially robust image recognition on CIFAR-10 and CIFAR-100. Specifically, augmenting with $ARMOR_D$ leads to 1.9\% and 2.1\% improvement  against  AutoAttack, a powerful ensemble of adversarial attacks, on CIFAR-10 and CIFAR-100 respectively.  Our experiments were  done using $ARMOR_D$ where $D$ was a {R{\'e}nyi divergence} or $f$-divergence, however the majority of the rigorous theoretical development we provided in Section \ref{sec:proofs} applies to a much more general class of $D$'s; { see Theorem \ref{thm:CC_general_D} for the construction of a large class of divergences, beyond the R{\'e}nyi and $f$-divergence cases, that lead to computationally tractable reformulations of the OT-regularized DRO problem \eqref{eq:DRO_Dc}. Investigating  the effectiveness of alternative OT-regularized divergences  at enhancing adversarial robustness, or other  learning tasks, is a promising direction for further exploration. Specifically, our framework can be applied  to  OT-regularizations   of variance-penalized divergences, which we intend to explore in future work. }

\appendix

\section{Additional Proofs}\label{app:additional_proofs}

\subsection{Convex Conjugate of the R{\'e}nyi Divergences}
Here we derive the    convex conjugate formula for R{\'e}nyi divergences that was stated in Eq.\,\eqref{eq:Renyi_cc}. The derivation builds on the work in \cite{birrell2023functionspace}, though to the best of the authors' knowledge the result presented here is new.
\begin{theorem}\label{thm:Renyi_cc}
Let $\mathcal{Z}$ be a compact metric space, $\alpha\in(0,1)\cup(1,\infty)$,  $P\in\mathcal{P}(\mathcal{Z})$, and $\phi\in C(\mathcal{Z})$.  Then
\begin{align}
\sup_{\eta\in \mathcal{P}(\mathcal{Z})}\{E_\eta[\phi]-R_\alpha(P\|\eta)\}
=& \inf_{\rho\in\mathbb{R}}\{\rho+\Lambda_\alpha^P[\phi-\rho]\}\,,
\end{align}
where
$\Lambda_\alpha^P:C(\mathcal{Z})\to(-\infty,\infty]$, 
\begin{align}\label{def:Lambda_alpha_P}
\Lambda_\alpha^P[g]\coloneqq-\left(\frac{1}{\alpha-1}\log\int |g|^{(\alpha-1)/\alpha} dP+\alpha^{-1}(\log\alpha +1)\right)1_{g<0}+\infty1_{g\not<0}\,.
\end{align}
\end{theorem}
\begin{proof}
$C(\mathcal{Z})$ is a  Banach space with dual $C(\mathcal{Z})^*=M(\mathcal{Z})$, the space of finite signed Borel measures on $\mathcal{Z}$ (see, e.g., Corollary 7.18 in \cite{folland2013real}).  As shown in Lemma 7.2 in \cite{birrell2023functionspace},  $F\coloneqq\Lambda_\alpha^P$ is convex and continuous on $\{g\in C(\mathcal{Z}):g<0\}$, an open set.  Define $G:C(\mathcal{Z})\to(-\infty,\infty]$, $G[g]\coloneqq\sup_{\mathcal{Z}}\{\phi-g\}$ and note that it is straightforward to show $G$ is convex.  We have $F[-1]<\infty$, $G[-1]<\infty$ and $F$ is continuous at $-1$, therefore Fenchel-Rockafellar duality (see, e.g., Theorem 4.4.3  in \cite{borwein2005techniques}) gives
\begin{align}
\inf_{g\in C(\mathcal{Z})}\{\Lambda_\alpha^P[g]+\sup_{\mathcal{Z}}\{\phi-g\}\}=\sup_{\mu\in M(\mathcal{Z})}\{-F^*[-\mu]-G^*[\mu]\}
\end{align}
or, after reparameterization,
\begin{align}
\inf_{g\in C(\mathcal{Z})}\{\Lambda_\alpha^P[\phi-g]+\sup_{\mathcal{Z}}g\}=\sup_{\mu\in M(\mathcal{Z})}\{-F^*[\mu]-G^*[-\mu]\}\,.
\end{align}
For $\mu\in M(\mathcal{Z})$ we have
\begin{align}\label{eq:G_star}
G^*[-\mu]=\sup_{g\in C(\mathcal{Z})}\left\{-\int gd\mu-\sup_{\mathcal{Z}}\{\phi-g\}\right\}=-\int\phi d\mu-\inf_{g\in C(\mathcal{Z})}\left\{\sup_{\mathcal{Z}}g-\int g d\mu\right\}\,.
\end{align}
We consider three cases:
\begin{enumerate}
\item If $\mu\in\mathcal{P}(\mathcal{Z})$ then it is clear that the infimum at the end  of \eqref{eq:G_star} equals $0$ and so $G^*[-\mu]=-E_\mu[\phi]$.
\item If $\mu(\mathcal{Z})\neq 1$ then $\inf_{g\in C(\mathcal{Z})}\{\sup g- \int gd\mu\}=-\infty$ and so $G^*[-\mu]=\infty$.
\item If $\mu$ is not positive then there exists $A$ with $\mu(A)<0$. Define the probability measure $\widetilde{P}=|\mu|/\|\mu\|$.  By Lusin's theorem, for all $\epsilon>0$ there exists a closed set $E_\epsilon$ with $\widetilde{P}(E_\epsilon^c)<\epsilon$ and there exists $g_\epsilon\in C(\mathcal{Z})$ such that $g_\epsilon|_{E_\epsilon}=1_A$, $0\leq g_\epsilon\leq 1$.  Therefore for $c\in (0,\infty)$ we have
\begin{align}
\inf_{g\in C(\mathcal{Z})}\left\{\sup_{\mathcal{Z}}g-\int g d\mu\right\}
\leq&\sup_{\mathcal{Z}}(-cg_\epsilon)-\int (-cg_\epsilon) d\mu\\
\leq& c\left(\int 1_{E_\epsilon}1_Ad\mu+\int 1_{E_{\epsilon}^c}g_\epsilon d\mu\right)\notag\\
\leq&c\left(\mu(A)-\int 1_{E_\epsilon^c}1_A d\mu+|\mu|(E_{\epsilon}^c)\right)\notag\\
\leq&c(\mu(A)+2\|\mu\|\epsilon)\,.\notag
\end{align}
This holds for all $c>0,\epsilon>0$ therefore we can let $\epsilon=|\mu(A)|/(4\|\mu\|)$ to get
\begin{align}
\inf_{g\in C(\mathcal{Z})}\left\{\sup_{\mathcal{Z}}g-\int g d\mu\right\}
\leq&-c|\mu(A)|/2
\end{align}
for all $c>0$. Sending $c\to\infty$ and noting that $|\mu(A)|>0$ we find
\begin{align}
    \inf_{g\in C(\mathcal{Z})}\left\{\sup_{\mathcal{Z}}g-\int g d\mu\right\}=-\infty\,.
\end{align} Hence $G^*[-\mu]=\infty$ in this case as well. 
\end{enumerate}
Combining these three cases, we have proven that 
\begin{align}
G^*[-\mu]=-\int\phi d\mu+\infty 1_{\mu\not\in\mathcal{P}(\mathcal{Z})}
\end{align}
Theorem   2.2 in \cite{birrell2023functionspace} implies  
 $(\Lambda_\alpha^P)^*[\mu]=R_\alpha(P\|\mu)$ for all    $\mu\in\mathcal{P}(\mathcal{Z})$. Therefore
\begin{align}
\inf_{g\in C(\mathcal{Z})}\{\Lambda_\alpha^P[\phi-g]+\sup_{\mathcal{Z}}g\}=&\sup_{\mu\in M(\mathcal{Z})}\{-F^*[\mu]-G^*[-\mu]\}\label{eq:Renyi_inf_g_lhs}\\
=&\sup_{\eta\in \mathcal{P}(\mathcal{Z})}\{-F^*[\eta]-G^*[-\eta]\}\notag\\
=&\sup_{\eta\in \mathcal{P}(\mathcal{Z})}\{E_\eta[\phi]-R_\alpha(P\|\eta)\}\,.\notag
\end{align}

Now we show that the infimum  over $g$ on the left-hand side of \eqref{eq:Renyi_inf_g_lhs} can be restricted to constant functions.  For $g\in C(\mathcal{Z})$ we can use monotonicity properties of the functions involved in $\Lambda_\alpha^P$ to obtain  
\begin{align}
&\Lambda_\alpha^P[\phi-g]\\
=&\infty1_{\phi-g\not<0}-\left(\frac{1}{\alpha-1}\log\int (g-\phi)^{(\alpha-1)/\alpha} dP+\alpha^{-1}(\log\alpha +1)\right)1_{\phi-g<0}\notag\\\geq&\begin{cases}
\infty \text{ if } \phi-g\not<0\\
-\left(\frac{1}{\alpha-1}\log\int (\sup g-\phi)^{(\alpha-1)/\alpha} dP+\alpha^{-1}(\log\alpha +1)\right) \text{ if }\phi-g<0
\end{cases}\notag\\
\geq&\begin{cases}
\infty \text{ if } \phi-\sup g\not<0\\
-\left(\frac{1}{\alpha-1}\log\int |\phi-\sup g|^{(\alpha-1)/\alpha} dP+\alpha^{-1}(\log\alpha +1)\right) \text{ if }\phi-\sup g<0
\end{cases}\notag\\
=&\Lambda_\alpha^P[\phi-\sup g]\,.\notag
\end{align}
Therefore
\begin{align}
\Lambda_\alpha^P[\phi-g]+\sup g
\geq &\Lambda_\alpha^P[\phi-\sup g]+\sup g\\
\geq & \inf_{\rho\in\mathbb{R}}\{\Lambda_\alpha^P[\phi-\rho]+\rho\}\,.\notag
\end{align}
This holds for all $g\in C(\mathcal{Z})$, hence
\begin{align}
&\inf_{g\in C(\mathcal{Z})}\{\Lambda_\alpha^P[\phi-g]+\sup g\}
\geq \inf_{\rho\in\mathbb{R}}\{\Lambda_\alpha^P[\phi-\rho]+\rho\}\,.
\end{align}
The reverse inequality is trivial and so we are done.
\end{proof}
\subsection{Weak Convergence and $f$-Divergences}\label{sec:setwise}
Here we show that $f$-divergences can be used to prove weak convergence of measures; this is needed in Theorem \ref{thm:D_c_div_property} as well as to apply many of the properties from Section \ref{app:properties} to OT-regularized $f$-divergences. In fact, we will prove the stronger setwise convergence property.  In this section we let $\mathcal{M}_b(\Omega)$ denote the set of bounded measurable real-valued functions on a measurable space $(\Omega,\mathcal{M})$.
\begin{definition}
Let $\{\mu_n\}_{n=1}^\infty,\mu$ be probability measures on the measurable space $(\Omega,\mathcal{M})$.  We say that $\mu_n\to \mu$ {\bf setwise} if $\lim_{n\to\infty}\mu_n(A)=\mu(A)$ for all $A\in\mathcal{M}$.
\end{definition}
First recall that setwise convergence implies convergence of integrals; we provide a simple proof of this fact.
\begin{lemma}\label{lemma:setwise_implies_integrals}
Let $(\Omega,\mathcal{M})$ be a measurable space and $\mu_n$, $\mu$ be probability measures on $\Omega$.  If $\mu_n\to\mu$ setwise then $\int \phi d\mu_n\to\int\phi d\mu$ for all $\phi\in\mathcal{M}_b(\Omega)$.
\end{lemma}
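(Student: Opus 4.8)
The plan is to reduce the claim to the case of simple functions and then use uniform approximation. First I would record the simple-function case: if $\psi=\sum_{i=1}^k a_i 1_{A_i}$ with $A_i\in\mathcal{M}$ and $a_i\in\mathbb{R}$, then linearity of the integral together with the definition of setwise convergence gives $\int\psi\,d\mu_n=\sum_{i=1}^k a_i\mu_n(A_i)\to\sum_{i=1}^k a_i\mu(A_i)=\int\psi\,d\mu$. This disposes of the lemma for simple functions with no further work.

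Next, fix $\phi\in\mathcal{M}_b(\Omega)$, say $|\phi|\leq M$, and let $\varepsilon>0$. Since $\phi$ is bounded and measurable it admits a \emph{uniform} approximation by simple functions: partition $[-M,M]$ into finitely many intervals of length less than $\varepsilon$, and let $\psi$ take on each set $\phi^{-1}(I)$ the constant value equal to a chosen point of the interval $I$; this produces a simple function $\psi$ with $\|\phi-\psi\|_\infty<\varepsilon$. Because $\mu_n$ and $\mu$ are probability measures (total mass $1$), we get the uniform-in-$n$ bound $\left|\int(\phi-\psi)\,d\nu\right|\leq\|\phi-\psi\|_\infty\,\nu(\Omega)<\varepsilon$ for $\nu\in\{\mu_n,\mu\}$ and all $n$.

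Then I would combine the pieces by the triangle inequality: $\left|\int\phi\,d\mu_n-\int\phi\,d\mu\right|\leq\left|\int(\phi-\psi)\,d\mu_n\right|+\left|\int\psi\,d\mu_n-\int\psi\,d\mu\right|+\left|\int(\psi-\phi)\,d\mu\right|<2\varepsilon+\left|\int\psi\,d\mu_n-\int\psi\,d\mu\right|$. Taking $\limsup_{n\to\infty}$ and invoking the simple-function case makes the middle term vanish, so $\limsup_{n\to\infty}\left|\int\phi\,d\mu_n-\int\phi\,d\mu\right|\leq2\varepsilon$. Since $\varepsilon>0$ was arbitrary, the lim sup is $0$ and the lemma follows.

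There is no real obstacle here; the argument is routine. The only point worth attention is that the approximation of $\phi$ by $\psi$ must be \emph{uniform}, so that a single estimate controls $\left|\int(\phi-\psi)\,d\mu_n\right|$ simultaneously for all $n$ via the unit total mass of the $\mu_n$ — a pointwise or $L^1$ approximation would not suffice, since there need not be a common dominating measure.
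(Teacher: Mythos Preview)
Your proposal is correct and follows essentially the same approach as the paper: uniformly approximate $\phi$ by a simple function, apply the triangle inequality to split into three terms, control the outer two via $\|\phi-\psi\|_\infty$ and the unit total mass, and kill the middle term using setwise convergence on simple functions. The only cosmetic difference is that the paper cites a reference for the existence of the uniform simple approximation while you construct it explicitly by partitioning the range.
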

\begin{remark}
In particular, if $(\Omega,\mathcal{M})$ is a metric space with the Borel $\sigma$-algebra then this implies $\mu_n\to \mu$ weakly.
\end{remark}
\begin{proof}
Let $\phi\in\mathcal{M}_b(\Omega)$. Take a sequence of simple functions $\phi_j$ that converge uniformly to $\phi$ (see, e.g., Theorem 2.10 in \cite{folland2013real}).  With these we can compute
\begin{align}
|\int\phi d\mu_n-\int\phi d\mu|
\leq &|\int \phi d\mu_n-\int \phi_jd\mu_n|+|\int \phi_jd\mu_n-\int \phi_jd\mu|+|\int \phi_jd\mu-\int \phi d\mu|\notag\\
\leq & \|\phi-\phi_j\|_\infty(\mu_n(\Omega)+\mu(\Omega))+|\int \phi_jd\mu_n-\int \phi_jd\mu|\,.
\end{align}
 The fact that $\phi_j$ are simple and $\mu_n\to\mu$ setwise implies that $|\int \phi_jd\mu_n-\int \phi_jd\mu|\to 0$ as $n\to\infty$ for all $j$, hence
\begin{align}
\limsup_{n\to\infty}|\int \phi d\mu_n-\int \phi d\mu|\leq 2\|\phi-\phi_j\|_\infty
\end{align}
for all $j$.  Taking $j\to\infty$ completes the proof.
\end{proof}

We now prove that convergence of an $f$-divergence to zero implies setwise convergence under mild assumptions on $f$.
\begin{theorem}\label{thm:f_div_setwise}
Let $(\Omega,\mathcal{M})$ be a measurable space, $f\in\mathcal{F}_1(a,b)$, and define $ t_0\coloneqq f^\prime_+(1)$ (where $f^\prime_+$ denotes the right derivative of $f$, which exists due to the convexity of $f$).  Suppose $ t_0\in\{f^*<\infty\}^o$ (where $A^o$ denotes the interior of the set $A$) and that $f$ is strictly convex on a neighborhood of $1$.  If $P_n,P$ are probability measures on $\Omega$ and either $D_f(P_n\|P)\to 0$ or $D_f(P\|P_n)\to 0$ then $P_n\to P$ setwise.  

If $(\Omega,\mathcal{M})$ is a metric space with the Borel $\sigma$-algebra then we can further conclude $P_n\to P$ weakly.
\end{theorem}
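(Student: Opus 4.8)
The plan is to prove the setwise statement directly via the Fenchel--Young inequality applied to a carefully chosen two-valued test function, and then obtain weak convergence on metric spaces for free from Lemma \ref{lemma:setwise_implies_integrals}.

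First I would record two facts about the Legendre transform $f^*$ near the point $w_0=f'_+(1)$. Since $w_0\in\partial f(1)$, equality holds in the Fenchel--Young inequality $wr\le f(r)+f^*(w)$ at $(r,w)=(1,w_0)$, which together with $f(1)=0$ gives $f^*(w_0)=w_0$. Next, $f^*$ is differentiable at $w_0$ with $(f^*)'(w_0)=1$: the hypothesis $w_0\in\{f^*<\infty\}^o$ places $w_0$ in the interior of the convex set where $f^*$ is finite, and the hypothesis that $f$ is strictly convex on a neighborhood $(1-\delta,1+\delta)$ of $1$ forces $\partial f^*(w_0)=\{r:w_0\in\partial f(r)\}$ to be the singleton $\{1\}$ --- indeed, if some $r>1$ had $w_0\in\partial f(r)$, then monotonicity of $f'$ (from convexity) together with $f'_+(1)=w_0$ and $f'_-(r)\le w_0$ would force $f'\equiv w_0$ on $(1,r)$, contradicting strict convexity on $(1,1+\delta)$, and symmetrically for $r<1$. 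A convex function with a singleton subdifferential at an interior point of its domain is differentiable there, so $\phi(t)\coloneqq f^*(w_0+t)-w_0$ is finite for all sufficiently small $|t|$ and satisfies $\phi(0)=0$, $\phi(t)=t+o(t)$ as $t\to0$; in particular $\phi(t)/t\to1$ from either side.

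Second, for any $Q\ll P$ with density $r=dQ/dP$ and any bounded measurable $w$ with $f^*(w)$ bounded, integrating the Fenchel--Young inequality against $P$ gives $D_f(Q\|P)=E_P[f(r)]\ge E_P[wr-f^*(w)]=E_Q[w]-E_P[f^*(w)]$. Fix $A\in\mathcal{M}$ and apply this with the bounded simple function $w=w_0+t\,1_A$ for $|t|$ small enough that $w_0+t\in\{f^*<\infty\}$; then $E_P[f^*(w)]=f^*(w_0)+\phi(t)P(A)=w_0+\phi(t)P(A)$, so the bound reads $D_f(Q\|P)\ge t\,Q(A)-\phi(t)P(A)$. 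In the case $D_f(P_n\|P)\to0$ (so $P_n\ll P$ for large $n$), taking $Q=P_n$ and rearranging gives, for $t>0$, $P_n(A)\le (D_f(P_n\|P)+\phi(t)P(A))/t$, hence $\limsup_n P_n(A)\le(\phi(t)/t)P(A)$; letting $t\to0^+$ yields $\limsup_n P_n(A)\le P(A)$, and using $t<0$ (where dividing reverses the inequality) yields $\liminf_n P_n(A)\ge P(A)$, so $P_n(A)\to P(A)$. In the case $D_f(P\|P_n)\to0$ I run the identical computation with the roles reversed: integrating Fenchel--Young against $P_n$ gives $D_f(P\|P_n)\ge tP(A)-\phi(t)P_n(A)$, where again $f^*(w)$ is a bounded simple function so $E_{P_n}[f^*(w)]=w_0+\phi(t)P_n(A)$ is exactly computed; solving for $P_n(A)$ and sending $n\to\infty$ then $t\to0^{\pm}$ again sandwiches $\lim_n P_n(A)=P(A)$. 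In both cases $P_n\to P$ setwise. Finally, when $(\Omega,\mathcal{M})$ is a metric space with its Borel $\sigma$-algebra, Lemma \ref{lemma:setwise_implies_integrals} gives $\int\phi\,dP_n\to\int\phi\,dP$ for every bounded measurable $\phi$, in particular for every bounded continuous $\phi$, which is exactly $P_n\to P$ weakly.

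I expect the main obstacle to be the convex-analysis step establishing $(f^*)'(w_0)=1$, i.e.\ ruling out that $\partial f^*(w_0)$ is a nondegenerate interval; this is precisely where one needs $f$ strictly convex on a whole neighborhood of $1$ rather than merely ``at'' $1$, and one must be careful that the resulting $o(t)$ estimate for $\phi$ is two-sided since both signs of $t$ are used. Everything else is bookkeeping with the sign of $t$ and the orientation of the Fenchel--Young inequality.
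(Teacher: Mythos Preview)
Your proof is correct and follows essentially the same approach as the paper: both use the Fenchel--Young/variational lower bound $D_f(Q\|P)\ge E_Q[w]-E_P[f^*(w)]$ with the two-valued test function $w=w_0+t\,1_A$, together with the facts $f^*(w_0)=w_0$ and $(f^*)'(w_0)=1$, and then pass to the limit. The only cosmetic differences are that you use both signs of $t$ where the paper uses $\epsilon>0$ together with the complement $A\mapsto A^c$, and you supply a self-contained strict-convexity argument for $(f^*)'(w_0)=1$ where the paper cites an external lemma.
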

\begin{proof}
Take any probability measures $Q_1,Q_2$ on $(\Omega,\mathcal{M})$ and $A\in\mathcal{M}$. For all $\epsilon>0$ we define $\phi_\epsilon= t_0+\epsilon 1_A$.  Then $\phi_\epsilon\in\mathcal{M}_b(\Omega)$, hence the variational representation of $f$-divergences (see Proposition B.1 in \cite{JMLR:v23:21-0100}) implies
\begin{align}
D_f(Q_1\|Q_2)\geq& E_{Q_1}[\phi_\epsilon]-E_{Q_2}[f^*(\phi_\epsilon)]\\
=& t_0+\epsilon Q_1(A)-E_{Q_2}[f^*( t_0+\epsilon 1_A)]\,.\notag
\end{align}
We have assumed that  $ t_0\in\{f^*<\infty\}^o$, hence there exists $\delta>0$ with $B_\delta( t_0)\subset \{f^*<\infty\}$.   Using properties of the Taylor expansion of convex functions (see \cite{LieseVajda}) along with the identities $f^*( t_0)= t_0$  and $(f^*)^\prime_+( t_0)=1$ (see Lemma A.9 in \cite{JMLR:v23:21-0100}) we can compute
\begin{align}
f^*(t)=&f^*( t_0)+(f^*)_+^\prime( t_0)(t- t_0)+R_{f^*}( t_0,t)\\
=&t+R_{f^*}( t_0,t)\notag\\
\leq &t+|t- t_0||(f^*)^\prime_+(t)-(f^*)^\prime_+( t_0)|\notag
\end{align}
for all $t\in B_\delta( t_0)$.  Letting $\epsilon<\delta$ we have $\text{range}( t_0+\epsilon 1_A)\subset B_\delta( t_0)$ and so
\begin{align}
f^*( t_0+\epsilon 1_A)\leq & t_0+\epsilon 1_A+| t_0+\epsilon 1_A- t_0||(f^*)^\prime_+( t_0+\epsilon 1_A)-(f^*)^\prime_+( t_0)|\\
=& t_0+\epsilon 1_A+\epsilon 1_A|(f^*)^\prime_+( t_0+\epsilon)-(f^*)^\prime_+( t_0)|\,.\notag
\end{align}
Hence
\begin{align}
D_f(Q_1\|Q_2)\geq& t_0+\epsilon Q_1(A)-E_{Q_2}[ t_0+\epsilon 1_A+\epsilon 1_A|(f^*)^\prime_+( t_0+\epsilon)-(f^*)^\prime_+( t_0)|]\\
=&\epsilon [Q_1(A)- Q_2(A)(1+|(f^*)^\prime_+( t_0+\epsilon)-(f^*)^\prime_+( t_0)|)]\,.\notag
\end{align}

Now let $P_n,P$ be probability measures on $\Omega$ and consider the following two cases.
\begin{enumerate}
\item Suppose $D_f(P_n,P)\to 0$.  Then letting $Q_1=P_n$ and $Q_2=P$ in the above we get
\begin{align}
0=&\limsup_nD_f(P_n\|P)\\
\geq&\epsilon [\limsup_nP_n(A)- P(A)(1+|(f^*)^\prime_+( t_0+\epsilon)-(f^*)^\prime_+( t_0)|)]\label{eq:bracket_term}
\end{align}
for all $\epsilon\in(0,\delta)$. If $\limsup_n P_n(A)>P(A)$ then by right-continuity of  $(f^*)^\prime_+$ (the right-derivative of a convex function), for $\epsilon$ small enough the term in brackets in \eqref{eq:bracket_term} is positive, which is a contradiction.  Therefore  $\limsup_n P_n(A)\leq P(A)$.    This holds for all $A\in\mathcal{M}$, hence for a given $A$ we can apply it to $A^c$ to get $\limsup_n P_n(A^c)\leq P(A^c)$, hence $\liminf_n P_n(A)\geq P(A)$.  Together these bounds imply $\lim_n P_n(A)=P(A)$ for all $A\in\mathcal{M}$, hence $P_n\to P$ setwise.
\item Suppose $D_f(P,P_n)\to 0$.  Letting $Q_1=P$ and $Q_2=P_n$ we have
\begin{align}
0=&\limsup_nD_f(P\|P_n)\\
\geq&\epsilon [P(A)-\liminf_n P_n(A)(1+|(f^*)^\prime_+( t_0+\epsilon)-(f^*)^\prime_+( t_0)|)]\,.\notag
\end{align}
If $P(A)>\liminf_n P_n(A)$ then for $\epsilon$ sufficiently small we again find the term in brackets to be positive, which is a contradiction.  Hence $P(A)\leq \liminf_n P_n(A)$ for all $A\in\mathcal{M}$.  Applying this to $A^c$ and combining the results gives $\lim_{n\to\infty}P_n(A)=P(A)$ for all $A\in\mathcal{M}$. Hence  we find that $P_n\to P$ setwise in this case as well.
\end{enumerate}

If $(\Omega,\mathcal{M})$ is a metric space with the Borel $\sigma$-algebra then we can further conclude $P_n\to P$ weakly by using Lemma \ref{lemma:setwise_implies_integrals}.
\end{proof}

\subsection{Interchangeability Result}\label{supp:interchangability}
In this section we prove a required interchangeability result. The proof mimics the strategy used for the more general result stated in \cite{2022arXiv220500362Z}; for completeness we provide a proof of the version that is used in the main text. Below we will use the notation $\mathcal{M}_\mu$ for the completion of a $\sigma$-algebra, $\mathcal{M}$, with respect to a measure $\mu$, we will denote the completion of $\mu$ by $\overline{\mu}$, and $\mathcal{M}_*$ will denote the $\sigma$-algebra of universally measurable sets (with respect to $\mathcal{M}$).  
\begin{lemma}[Interchangeability]
Let $\mu\in\mathcal{P}(\mathcal{Z})$ and  $\phi:\mathcal{Z}\times \mathcal{Z}\to[-\infty,\infty]$ be  measurable.   Then $z\mapsto\sup_{\tilde z\in \mathcal{Z}}\phi(z,\tilde z)$ is a $\mathcal{B}(\mathcal{Z})_*$-measurable function  and
\begin{align}\label{supp_eq:interchangability}
\sup_{\pi\in\mathcal{P}(\mathcal{Z}\times \mathcal{Z}):\pi_1=\mu} E_\pi[\phi]=\int \sup_{\tilde z\in \mathcal{Z}}\phi(z,\tilde z)\overline{\mu}(dz)\,.
\end{align}
\end{lemma}
\begin{remark}
In \eqref{supp_eq:interchangability} we use the convention $\infty-\infty\coloneqq-\infty$ to ensure all integrals therein are defined, though when using this result in the proof of Theorem \ref{thm:Dc_convex_conjugate_general_P} in the main text we will have further assumptions that guarantee all integrals are defined without relying on any such convention. 
\end{remark}
\begin{proof}
Define $\Phi=\sup_{\tilde z\in\mathcal{Z}}\phi(\cdot,y)$.  For $a\in\mathbb{R}$ we have
\begin{align}
&\{z:\Phi(z)>a\}=\{z:\exists \tilde z, \phi(z,\tilde z)>a\}\,,
\end{align}
which is the projection of the measurable set $\phi^{-1}((a,\infty])$ onto its first component. Therefore the measurable projection theorem (see, e.g., Proposition 8.4.4 in  \cite{cohn2013measure}) implies $\{z:\Phi(z)>a\}$ is $\mathcal{B}(\mathcal{Z})_*$-measurable. The rays $(a,\infty]$ for $a\in\mathbb{R}$ generate the $\sigma$-algebra on $[-\infty,\infty]$, hence $\Phi$ is universally measurable as claimed.

To prove \eqref{supp_eq:interchangability}, first suppose $\int \Phi^-d\overline{\mu}<\infty$, where $\Phi^-$ denotes the negative part of $\Phi$. Define $\Phi_n=\min\{n,\Phi-1/n\}$ and note that $\min\{0,\Phi-1\}\leq \Phi_n<\Phi$,  and $\Phi_n\nearrow \Phi$. The $\Phi_n$ are  universally measurable,  therefore $C_n\coloneqq\{(z,\tilde z)\in\mathcal{Z}\times \mathcal{Z}:\phi(z,\tilde z)>\Phi_n(z)\}$ are $\mathcal{B}(\mathcal{Z})_\mu\bigotimes\mathcal{B}(\mathcal{Z})$-measurable. For every $z\in\mathcal{Z}$ we have $\Phi_n(z)<\Phi(z)=\sup_{\tilde z\in\mathcal{Z}}\phi(z,\tilde z)$, hence there exists $y\in \mathcal{Z}$ such that $(z,\tilde z)\in C_n$. Therefore the projection of $C_n$ onto its first component equals $\mathcal{Z}$. The measurable selection theorem,  Corollary 8.5.4 in \cite{cohn2013measure}, then implies that there exists $T_n:\mathcal{Z}\to \mathcal{Z}$ that is $((\mathcal{B}(\mathcal{Z})_\mu)_*,\mathcal{B}(\mathcal{Z}))$-measurable  such that the graph of $T_n$ is contained in $C_n$. Using the result from Ex.~8.4.2(b) in  \cite{cohn2013measure} we have  $({\mathcal{B}(\mathcal{Z})}_\mu)_*={\mathcal{B}(\mathcal{Z})}_\mu$, therefore  $T_n$ is $(\mathcal{B}(\mathcal{Z})_\mu,\mathcal{B}(\mathcal{Z}))$-measurable.  The map $\psi:z\mapsto (z,T_n(z))$ is $({\mathcal{B}(\mathcal{Z})}_\mu,\mathcal{B}(\mathcal{Z})\bigotimes \mathcal{B}(\mathcal{Z}))$-measurable and the pushforward measure $\psi_\#\overline{\mu}\in P(\mathcal{Z}\times\mathcal{Z})$ satisfies $(\psi_\#\overline{\mu})_1=\mu$, therefore 
\begin{align}
&\sup_{\pi\in\mathcal{P}(\mathcal{Z}\times \mathcal{Z}):\pi_1=\mu} E_\pi[\phi]\geq E_{\psi_\#\overline{\mu}}[\phi]
=\int \phi(z,T_n(z)) \overline{\mu}(dz)\geq \int \Phi_nd\overline{\mu}\,,
\end{align}
where in the last inequality we used the fact that the graph of $T_n$ is contained in $C_n$.  We have the  lower bound $\Phi_n\geq -\Phi^--1\in L^1(\overline{\mu})$ and therefore we can use the monotone convergence theorem to obtain
\begin{align}\label{eq:interchange_ineq1}
&\sup_{\pi\in\mathcal{P}(\mathcal{Z}\times \mathcal{Z}):\pi_1=\mu} E_\pi[\phi]\geq \lim_{n\to\infty}\int \Phi_nd\overline{\mu}  = \int \Phi d\overline{\mu}\,.
\end{align}
This also trivially holds if $\int \Phi^-d\overline{\mu}=\infty$ due to our convention $\infty-\infty\coloneqq-\infty$. The reverse inequality follows easily from the bound $\Phi(z)\geq \phi(z,\tilde z)$ for all $z$ and $\tilde z$, together with the fact that  there exists a $\mathcal{B}(\mathcal{Z})$-measurable function that equals  $\Phi$  $\overline{\mu}$-a.s.   
\end{proof}

\subsection{OT-Regularized Divergence Interpolation Theorem}\label{supp:interp_thm}

\begin{theorem}[Interpolation] 
Let $D$ be a pre-divergence, $c$ be a cost function, and $\mu,\nu\in\mathcal{P}(\mathcal{Z})$ that satisfy the following.
\begin{enumerate}
\item The mapping $P\mapsto D(P\|\mu)$ is LSC and has compact sublevel sets.
\item  $c(z,\tilde z)=0$ iff $z=\tilde z$.
\end{enumerate}
  For $r>0$ define the cost function $c_r=rc$.   Then
\begin{align}
\lim_{r\to \infty}D^{c_r}(\nu\|\mu)=D(\nu\|\mu)\,\, \text{ for all } \mu,\nu\in \mathcal{P}(\mathcal{Z}).
\end{align}
\end{theorem}
\begin{proof}
 From the definitions we have
\begin{align}
D^{c_r}(\nu\|\mu)= \inf_{\eta\in\mathcal{P}(\mathcal{Z})}\{D(\eta\|\mu)+rC(\eta,\nu)\}\leq D(\nu\|\mu)
\end{align}
and the left-hand side is non-decreasing in $r$. Therefore for $r_n\nearrow \infty$ we have
\begin{align}
\lim_{n\to\infty}D^{c_{r_n}}(\nu\|\mu)=\sup_{n}D^{c_{r_n}}(\nu\|\mu)\leq D(\nu\|\mu)\,.
\end{align}
We now show that assuming this inequality is strict leads lead to a contradiction, thus completing the proof.  Suppose that $\sup_{n}D^{c_{r_n}}(\nu\|\mu)< D(\nu\|\mu)$. Theorem \ref{thm:inf_conv_solution} implies the existence of $\eta_{*,n}$ such that
\begin{align}
D^{c_{r_n}}(\nu\|\mu)=D(\eta_{*,n}\|\mu)+r_nC(\eta_{*,n},\nu)\,.
\end{align}
In particular, $\sup_nD(\eta_{*,n}\|\mu)\leq \sup_nD^{c_{r_n}}(\nu\|\mu)<\infty$ and therefore $\eta_{*,n}$ all lie in a sublevel set of $D(\cdot\|\mu)$, which is compact.  Hence there exists a weakly convergent subsequence $\eta_{*,n_j}\to \eta_*$.  Next we show that $\eta_*=\nu$.  To do this, note that
\begin{align}
\infty>\sup_nD^{c_{r_n}}(\nu\|\mu)\geq \sup_n r_n C(\eta_{*,n},\nu)
\end{align}
and therefore $\lim_n C(\eta_{*,n},\nu)=0$.  Lower semicontinuity  implies $0=\lim_j C(\eta_{*,n_j},\nu)\geq C(\eta_*,\nu)\geq 0$  and so $C(\eta_*,\nu)=0$.  The cost function has the property $c(z,\tilde z)=0$ iff $z=\tilde z$, hence we can conclude that $\eta_*=\nu$.  To complete the proof we can use the lower semicontinuity of $D(\cdot\|\mu)$ to compute
\begin{align}
D(\nu\|\mu)>\sup_{n}D^{c_{r_n}}(\nu\|\mu)\geq \liminf_j  D(\eta_{*,n_j}\|\mu)\geq D(\eta_*\|\mu)= D(\nu\|\mu)\,.
\end{align}
This is a contradiction and so the proof is complete.
\end{proof}

\subsection{Convex Conjugate of OT-Regularized $f$-Divergence}
\begin{corollary}\label{cor:CC_formula_OT_f_div_supp}
Suppose we have the following:
\begin{enumerate}
\item A measurable function $\mathcal{L}:\mathcal{Z}\to[-\infty,\infty]$ that is  bounded below or is bounded above.
\item A distribution $P\in\mathcal{P}(\mathcal{Z})$ with $\mathcal{L}^-\in L^1(P)$, where $\mathcal{L}^-$ denotes the negative part of $\mathcal{L}$.
\item $f\in\mathcal{F}_1(a,b)$ with $a\geq 0$.
\item  A  cost function, $c$, that satisfies $c(z,z)=0$ for all $z\in\mathcal{Z}$.
\end{enumerate}
Then for $\lambda>0$ we have
\begin{align}
\sup_{Q\in\mathcal{P}(\mathcal{Z}):D_f^c(Q\|P)<\infty}\{E_Q[\mathcal{L}]-\lambda D_f^c(Q\|P)\}=\lambda\inf_{\rho\in\mathbb{R}}\{\rho+E_{{P}}[f^*(\lambda^{-1} \mathcal{L}^c_\lambda-\rho)]\}\,,
\end{align}
where the definition of $f^*$ is extended by $f^*(\pm\infty)\coloneqq\infty$.
\end{corollary}
\begin{proof} 
$D_f$ is a pre-divergence and $D_f(\cdot\|P)$ is convex, hence Theorem \ref{thm:Dc_convex_conjugate_general_P}  gives
\begin{align}\label{eq:CC_formula_OT_f_div_intermediate}
\sup_{Q\in\mathcal{P}(\mathcal{Z}):D_f^c(Q\|P)<\infty}\{E_Q[\mathcal{L}]-\lambda D_f^c(Q\|P)\}=\lambda\sup_{Q\in\mathcal{P}(\mathcal{Z}):D_f(Q\|P)<\infty}\{E_{\overline{Q}}[ \lambda^{-1}\mathcal{L}_\lambda^c]- D_f(Q\|P)\}
\end{align}
for all $\lambda>0$.  We have $\mathcal{L}^c_\lambda(z)\geq \mathcal{L}(z)$ and so $(\mathcal{L}^c_\lambda)^-\leq \mathcal{L}^-\in L^1(P)$.  Therefore $(\mathcal{L}^c_\lambda)^-\in L^1(P)$ and we can employ the Gibbs variational principle for $f$-divergences (see  Theorem 4.2 in \cite{BenTal2007}) to compute
\begin{align}\label{eq:Gibbs_unbounded}
\sup_{Q: D_f(Q\|P)<\infty}\{E_{\overline{Q}}[\lambda^{-1}\mathcal{L}^c_\lambda]-D_f(Q\|P)\}=\inf_{\rho\in\mathbb{R}}\{\rho+E_{\overline{P}}[f^*(\lambda^{-1}\mathcal{L}^c_\lambda-\rho)]\}\,.
\end{align}
We revert to explicit completion notation here to clarify a technical point. Theorem 4.2 from \cite{BenTal2007} assumes measurability of the integrand and not universal measurability.  However one can easily prove that \eqref{eq:Gibbs_unbounded} still follows by first replacing $\mathcal{L}^c_\lambda$ with a $\mathcal{B}(\mathcal{Z})$-measurable function that agrees with it $\overline{P}$-a.s. and then using the fact that $D_f(Q\|P)<\infty$ implies $Q\ll P$; see Definition \eqref{def:f_div}. Also, the result in \cite{BenTal2007} assumes the integrand on the left-hand side of \eqref{eq:Gibbs_unbounded} is in $L^1(P)$ but the case where the positive part is not integrable is easily checked to yield infinity on both sides of the identity. Combining \eqref{eq:Gibbs_unbounded} and \eqref{eq:CC_formula_OT_f_div_intermediate} completes the proof.
\end{proof}

\subsection{A Tractable Reformulation of the OT-Regularized $f$-Divergence DRO Problem}\label{supp:reformulation_OT_f_DRO}

\begin{corollary}\label{cor:OT_f_div_DRO_app}
Suppose we have the following:
\begin{enumerate}
\item A measurable function $\mathcal{L}:\mathcal{Z}\to[-\infty,\infty]$ that is  bounded below or is bounded above.
\item A distribution $P\in\mathcal{P}(\mathcal{Z})$ with $\mathcal{L}^-\in L^1(P)$.
\item $f\in\mathcal{F}_1(a,b)$ where $a\geq 0$.
\item  A  cost function, $c$, that satisfies $c(z,z)=0$  for all $z\in\mathcal{Z}$.
\end{enumerate}
Define $f^*(\pm\infty)\coloneqq\infty$. Then for $\epsilon>0$, $\kappa\geq 0$  we have
\begin{align}\label{eq:OT_f_div_DRO_app}
&\sup_{Q:D_f^c(Q\|P)\leq \epsilon}\{E_Q[\mathcal{L}]-\kappa D_f^c(Q\|P)\}\\
=&\inf_{\lambda>0,\rho\in\mathbb{R}}\{\lambda \epsilon+\rho+(\lambda+\kappa)E_{{P}}[f^*((\mathcal{L}^c_{\lambda+\kappa}-\rho)/(\lambda+\kappa))]\}\notag
\end{align}  
and the objective of the finite dimensional minimization is convex on  $(0,\infty)\times \mathbb{R}$.
\end{corollary}
\begin{proof}
Equation \eqref{eq:OT_f_div_DRO_app} follows from applying Theorem \ref{thm:DRO_gen_P_gen_D} to $D=D_f$ and then evaluating the convex conjugate of $D_f(\cdot\|P)$ by the same method as in Corollary \ref{cor:CC_formula_OT_f_div}. To prove convexity of the objective function, first note that for all $z$ the maps $h_z(t)\coloneqq \sup_{\tilde z}\{t\mathcal{L}(\tilde z)-c(z,\tilde z)\}$ are convex in $t\in(0,\infty)$ and either $h_z>-\infty$ or $h_z(t)=-\infty$ for all $t$.  $f^*$ is convex and it is straightforward to check that $a\geq 0$ implies $f^*$ is non-decreasing on $(-\infty,\infty]$. These facts together imply that  $(t,\rho)\mapsto f^*(h_z(t)-\rho)$ are convex on $(0,\infty)\times\mathbb{R}$ for all $z$. Linearity of the expectation then implies $H(t,\rho)\coloneqq E_P[f^*(h_z(t)-\rho)]$  is convex (note that $f^*(s)\geq s$ and so the assumptions on $\mathcal{L}$ imply that $H(t,\rho)>-\infty$).  Therefore  the perspective of $H$, given by $(\lambda,t,\rho)\mapsto \lambda  H(t/\lambda,\rho/\lambda)$, is convex on $(0,\infty)\times (0,\infty)\times\mathbb{R}$. Composing with the affine map $(\lambda,\rho)\mapsto (\lambda+\kappa,1,\rho)$ and adding the linear term $\lambda\epsilon+\rho$ results in a convex function on $(0,\infty)\times\mathbb{R}$. This completes the proof.
\end{proof}

\subsection{OT-Regularized-Divergence DRO Interpolation Theorems}\label{supp:DRO_limit_thms}
\begin{theorem}\label{supp_thm:DRO_limit_r_0}
Suppose  the Polish space $\mathcal{Z}$ is compact and that we have the following:
\begin{enumerate}
\item An USC function $\mathcal{L}:\mathcal{Z}\to[-\infty,\infty)$.
\item A distribution $P\in\mathcal{P}(\mathcal{Z})$.
\item A pre-divergence $D$ such that $D(\cdot\|P)$ is LSC and $D(\mu_n\|P)\to 0$ implies $\mu_n\to P$ weakly.
\item A cost-function $c$.
\end{enumerate}
For $r>0$ define the cost functions $c_r=rc$. Then for $\epsilon>0$ we have
\begin{align}\label{supp_eq:DRO_lim_0}
 \lim_{r\to 0^+}\sup_{Q:D^{c_r}(Q\|P)\leq r\epsilon}E_Q[\mathcal{L}]=\sup_{Q:C(P,Q)\leq \epsilon}E_Q[\mathcal{L}]\,.
\end{align}
If $\mathcal{L}_\theta:\mathcal{Z}\to[-\infty,\infty)$, $\theta\in\Theta$, is a family of USC functions then
\begin{align}\label{supp_eq:DRO_min_r_0_limit}
\lim_{r\to 0^+}\inf_{\theta\in\Theta}\sup_{Q:D^{c_r}(Q\|P)\leq r\epsilon}E_Q[\mathcal{L}_\theta]=\inf_{\theta\in\Theta}\sup_{Q:C(P,Q)\leq \epsilon}E_Q[\mathcal{L}_\theta]\,.
\end{align}
\end{theorem}
\begin{proof}
Compactness of $\mathcal{Z}$ and upper semicontinuity of $\mathcal{L}$ implies that $\mathcal{L}$ has a maximizer and hence $\mathcal{L}$ is bounded above. In particular, the expectations in \eqref{supp_eq:DRO_lim_0} are all well-defined. The bound $D^{c_r}(Q\|P)\leq rC(P,Q)$ implies 
\begin{align}\label{eq:DRO_limit_r_0_lb}
\sup_{Q:C(P,Q)\leq \epsilon}E_Q[\mathcal{L}]\leq \sup_{Q:D^{c_r}(Q\|P)\leq r\epsilon}E_Q[\mathcal{L}]
\end{align}
for all $r>0$. Define $K_r\coloneqq \{Q:D^{c_r}(Q\|P)\leq r\epsilon\}$ and note that $r_2\leq r_1$ implies $K_{r_2}\subset K_{r_1}$, hence the right-hand side of \eqref{eq:DRO_limit_r_0_lb} is non-decreasing in $r$.  $D(\cdot\|P)$ is LSC, therefore it has closed sublevel sets. $\mathcal{Z}$ is a compact Polish space, hence  $\mathcal{P}(\mathcal{Z})$ is compact (see, e.g., page 117 of \cite{bogachev2018weak}). Therefore the sublevel sets of $D(\cdot\|P)$ are also compact. Theorem  \ref{thm:Dc_LSC} then implies $D^{c_r}(\cdot\|P)$ is LSC for all $r$ and so the $K_r$ are closed sets, hence also compact. $\mathcal{L}$ is USC and bounded above, therefore the Portmanteau theorem implies that $Q\mapsto E_Q[\mathcal{L}]$ is USC, hence it achieves its maximum on $K_r$, i.e., there exists $Q_r\in K_r$ such that
\begin{align}
\sup_{Q\in K_r}E_Q[\mathcal{L}]=E_{Q_r}[\mathcal{L}].  
\end{align}
 Take $r_n\searrow 0^+$.   Compactness of $\mathcal{P}(\mathcal{Z})$ implies the existence of a weakly convergent subsequence $Q_j\coloneqq Q_{r_{n_j}}\to Q_*$.  Upper semicontinuity then implies $\limsup_jE_{Q_j}[\mathcal{L}]\leq E_{Q_*}[\mathcal{L}]$. By Theorem \ref{thm:inf_conv_solution} there exist $\eta_{*,j}\in\mathcal{P}(\mathcal{Z})$ such that
\begin{align}
\epsilon\geq \frac{1}{r_{n_j}}D^{c_{r_{n_j}}}(Q_j\|P)=\frac{1}{r_{n_j}}D(\eta_{*,j}\|P)+C(\eta_{*,j},Q_j)\,.
\end{align}
In particular we see that $\lim_{j\to\infty}D(\eta_{*,j}\|P)=0$.  The assumptions on $D$ therefore imply  that $\eta_{*,j}\to P$ weakly. Now we can use  lower semicontinuity of $C$ to compute 
\begin{align}
C(P,Q_*)\leq \liminf_j C(\eta_{*,j},Q_j)\leq \liminf_j\frac{1}{r_{n_j}}D^{c_{r_{n_j}}}(Q_j\|P)\leq \epsilon\,.
\end{align}
Therefore $Q^*\in\{Q:C(P,Q)\leq \epsilon\}$. Putting these pieces together we find
\begin{align}
\sup_{Q:C(P,Q)\leq \epsilon}E_Q[\mathcal{L}]\geq& E_{Q^*}[\mathcal{L}]\geq\limsup_j E_{Q_{r_{n_j}}}[\mathcal{L}]=\limsup_j \sup_{Q\in K_{r_{n_j}}}E_Q[\mathcal{L}]\\
\geq &\inf_{r>0}\sup_{Q:D^{c_r}(Q\|P)\leq r\epsilon}E_Q[\mathcal{L}]=\lim_{r\to 0^+}\sup_{Q:D^{c_r}(Q\|P)\leq r\epsilon}E_Q[\mathcal{L}]\,,\notag
\end{align}
where the last equality follows from the fact that the right-hand side is non-decreasing in $r$. Combining this inequality with \eqref{eq:DRO_limit_r_0_lb}  completes the proof of \eqref{supp_eq:DRO_lim_0}. If one has a family of USC functions $\mathcal{L}_\theta$ then apply \eqref{supp_eq:DRO_lim_0} for each $\theta$ and note  that the limit as $r\to 0^+$ is an infimum, hence one can commute the infimum over $\theta$ with the limit $r\to 0^+$ to obtain \eqref{supp_eq:DRO_min_r_0_limit}.
\end{proof}
\begin{theorem}\label{supp_thm:DRO_limit_r_infty}
Suppose  the Polish space $\mathcal{Z}$ is compact and that we have the following:
\begin{enumerate}
\item An USC function $\mathcal{L}:\mathcal{Z}\to[-\infty,\infty)$.
\item A distribution $P\in\mathcal{P}(\mathcal{Z})$.
\item A pre-divergence $D$ such that $D(\cdot\|P)$ is LSC.
\item A cost-function $c$ that satisfies $c(z,\tilde z)=0$ iff $z=\tilde z$.
\end{enumerate}
For $r>0$ define the cost functions $c_r=rc$. Then for $\epsilon>0$ we have
\begin{align}
 \lim_{r\to \infty}\sup_{Q:D^{c_r}(Q\|P)\leq \epsilon}E_Q[\mathcal{L}]=\sup_{Q:D(Q\|P)\leq \epsilon}E_Q[\mathcal{L}]\,.
\end{align}
If $\mathcal{L}_\theta:\mathcal{Z}\to[-\infty,\infty)$, $\theta\in\Theta$, is a family of USC functions then
\begin{align}
 \lim_{r\to \infty}\inf_{\theta\in\Theta}\sup_{Q:D^{c_r}(Q\|P)\leq \epsilon}E_Q[\mathcal{L}_\theta]=\inf_{\theta\in\Theta}\sup_{Q:D(Q\|P)\leq \epsilon}E_Q[\mathcal{L}_\theta]\,.
\end{align}
\end{theorem}
The proof of Theorem \ref{supp_thm:DRO_limit_r_infty} is similar to that of Theorem \ref{supp_thm:DRO_limit_r_0}, with slight differences that are motivated by the proof of Theorem \ref{thm:limit_D_c_to_D}; we omit the details.

\section{Interpreting the Outer Minimizer: Adversarial Sample Weights}\label{app:reweighting}

In this section we derive the (formal) solution  \eqref{eq:OT_reg_Df_DRO_optim_main}  to the optimization problem \eqref{eq:OT_reg_Df_DRO} that was presented in Section \ref{sec:reweighting}. We work under the assumptions that  $\mathcal{Z}=\mathbb{R}^d$, exact optimizers exist, and all functions are sufficiently smooth. 

Begin by letting $\tilde z_i(\lambda)$ be the solution to the inner maximizer \eqref{eq:OT_regularized_loss_main} with $z=z_i$  as a function of $\lambda$ and let $\lambda_*$ and $\rho_*$ be the optimal scaling and shift parameters  for the outer minimizer at a fixed $\theta$ (we suppress the $\theta$-dependence of $\tilde z_i$, $\lambda_*$, and $\rho_*$ in the notation). Taking the gradient of the objective function for the inner maximizer \eqref{eq:OT_regularized_loss_main} with respect to $\tilde z$ and evaluating at the optimizer $\tilde z_i(\lambda)$, we find
\begin{align}\label{eq:dy_OT_reg_loss}
\nabla_{\tilde z}\mathcal{L}_\theta(\tilde z_i(\lambda))-\lambda\nabla_{\tilde z} c(z_i,\tilde  z_i(\lambda))=0
\end{align}
for all $\lambda$.  Differentiating the objective function in \eqref{eq:OT_reg_Df_DRO_optim} with respect to $\rho$ we find
\begin{align}
    &\partial_\rho|_{\rho=\rho_*}(\epsilon\lambda_*+ \rho+\lambda_* \frac{1}{n}\sum_{i=1}^nf^*((\mathcal{L}_{\theta,\lambda_*}^{c}(z_i)-\rho)/\lambda_*))\\
    =&1+\lambda_*\frac{1}{n}\sum_{i=1}^n (f^*)^\prime((\mathcal{L}^c_{\theta,\lambda_*}(z_i)-\rho_*)/\lambda_*)(-\lambda_*^{-1})=0\,,\notag
\end{align}
i.e.,
\begin{align}\label{eq:new_weight_sum}
\frac{1}{n}\sum_{i=1}^n (f^*)^\prime((\mathcal{L}^c_{\theta,\lambda_*}(z_i)-\rho_*)/\lambda_*)=1\,.
\end{align}
In particular, this implies that the $p_{*,i}$'s, defined by
\begin{align}\label{eq:new_weights}
   p_{*,i}\coloneqq\frac{1}{n}(f^*)^\prime((\mathcal{L}^c_{\theta,\lambda_*}(z_i)-\rho_*)/\lambda_*)\,,
\end{align}
sum to $1$. We next differentiate the objective function with respect to $\lambda$ to obtain
\begin{align}\label{eq:ARMOR_parial_lambda}
&\epsilon+\frac{1}{n}\sum_i f^*((\mathcal{L}^c_{\theta,\lambda_*}(z_i)-\rho_*)/\lambda_*)\\
&+\lambda\frac{1}{n}\sum_i (f^*)^\prime((\mathcal{L}^c_{\theta,\lambda_*}(z_i)-\rho_*)/\lambda_*)(\partial_\lambda|_{\lambda=\lambda_*}(\lambda^{-1}\mathcal{L}^c_{\theta,\lambda}(z_i))+\rho_*/\lambda_*^2)=0\,,\notag
\end{align}
where we can use \eqref{eq:dy_OT_reg_loss} to simplify
\begin{align}\label{eq:inner_max_partial_lambda}
    \partial_\lambda(\lambda^{-1}\mathcal{L}^c_{\theta,\lambda}(z_i))=&-\lambda^{-2}\mathcal{L}_\theta({\tilde z}_i(\lambda))+(\lambda^{-1}\nabla_{\tilde z}\mathcal{L}_\theta({\tilde z}_i(\lambda))-\nabla_{\tilde z}c(z_i,{\tilde z}_i(\lambda)))\cdot {\tilde z}_i^\prime(\lambda)\\
    =&-\lambda^{-2}\mathcal{L}_\theta({\tilde z}_i(\lambda))\,.\notag
\end{align}
Combining \eqref{eq:new_weight_sum}, \eqref{eq:ARMOR_parial_lambda}, and \eqref{eq:inner_max_partial_lambda} we can compute
\begin{align}
    &\epsilon\lambda_*+\rho_*+\lambda_*\frac{1}{n}\sum_if^*((\mathcal{L}^c_{\theta,\lambda_*}(z_i)-\rho_*)/\lambda_*)\\
    =&\frac{1}{n}\sum_i (f^*)^\prime((\mathcal{L}^c_{\theta,\lambda_*}(z_i)-\rho_*)/\lambda_*)(\mathcal{L}_\theta({\tilde z}_i(\lambda_*))-\rho_*)-\lambda_*\frac{1}{n}\sum_i f^*((\mathcal{L}^c_{\theta,\lambda_*}(z_i)-\rho_*)/\lambda_*)\notag\\
    &+\rho_*+\lambda_*\frac{1}{n}\sum_i f^*((\mathcal{L}^c_{\theta,\lambda_*}(z_i)-\rho_*)/\lambda_*)\notag\\
    =&\frac{1}{n}\sum_i (f^*)^\prime((\mathcal{L}^c_{\theta,\lambda_*}(z_i)-\rho_*)/\lambda_*)\mathcal{L}_\theta({\tilde z}_i(\lambda_*))\,.\notag
\end{align}
Recalling the definition of $\lambda_*$ and $\rho_*$, this implies
\begin{align}\label{eq:OT_reg_Df_DRO_optim}
\inf_{\lambda>0,\rho\in\mathbb{R}}\left\{ \epsilon\lambda+ \rho+\lambda \frac{1}{n}\sum_{i=1}^nf^*((\mathcal{L}_{\theta,\lambda}^{c}(z_i)-\rho)/\lambda)\right\}=E_{Q_{*,\theta}}[\mathcal{L}_\theta]\,,
\end{align}
where the optimal adversarial distribution is
\begin{align}\label{eq:Q_star_app}
    Q_{*,\theta}\coloneqq\sum_{i=1}^n\frac{1}{n}(f^*)^\prime((\mathcal{L}^c_{\theta,\lambda_*}(z_i)-\rho_*)/\lambda_*) \delta_{{\tilde z}_i(\lambda_*)}\,.
\end{align}
This is the equality claimed above in \eqref{eq:OT_reg_Df_DRO_optim}. See Section \ref{sec:reweighting} for a discussion of the implications of this formula.

\subsection{Relation to OT-Based Adversarial Training} 
The robust-optimization adversarial training method \eqref{eq:classical_robust_optimization}  is a special case of OT-DRO, as was noted previously in \cite{regniez2021distributional,bui_UDR_2022unified}, with a specific choice of cost function.  The $ARMOR_D$ methods, which allow for general OT cost, are therefore  generalizations of \eqref{eq:classical_robust_optimization}.  In addition, the inclusion of an information-theoretic component in $ARMOR_D$ implies that it is also an extension of the general OT-DRO method \cite{bui_UDR_2022unified}, allowing for distribution neighborhoods which  have qualitatively different structure. This allows for qualitatively new types of robustness (i.e., beyond the shifting of samples within an allowed neighborhood), such as miss-specification of the mass of widely-separated modes. Algorithmically, this feature effectively introduces dynamical sample weights \eqref{eq:new_weights}, as shown via the formal solution \eqref{eq:Q_star_app}-\eqref{eq:Q_star_app}. In short,   adversarial samples in the $ARMOR_D$ method are both transported (as in other OT-based methods) and re-weighted; this principled dynamical reweighting mechanism is the new ingredient in our adversarial robustness methods and contributes non-trivially to the performance of the $ARMOR_D$ methods (see Section \ref{app:convex_comb_objective_functions}).

\section{Implementation Details}\label{app:implementation_details}
To foster reproducibility of our results, in addition to providing the source code at \url{https://github.com/star-ailab/ARMOR}, we provide the threat model, pseudocode for the OT-regularized-divergence adversarial robustness methods, and the target network structures used in the experiments.

\subsection{Threat Model} \label{app:threat_model}
Following the guidelines in \cite{carlini2019evaluating}, we consider the following threat model characterizing the adversary’s goal, knowledge, and capabilities in implementing $ARMOR_D$ as a method for enhancing adversarial robustness. 
\begin{enumerate}
    \item \textit{Adversaries goal:} The adversaries goal is to generate adversarial samples that force a  neural network model to make erroneous predictions. To avoid restrictive assumptions, any wrong classification is considered as a successful attack. 
    \item \textit{Adversary's knowledge:} To avoid restrictive assumptions, we assume that the adversary has complete knowledge of the inner workings of the target model (i.e., white-box access). This aligns with the Kerckhoff's principle that mandates security even if system details are known to the adversary.
    \item \textit{Adversary's Capabilities:} The adversary can apply arbitrary modifications to natural samples of any class.
\end{enumerate}

\subsection {Algorithm Pseudocode}
\label{app:algo}
In this section we provide pseudocode for the methods proposed in this work.  The pseudocode  will reference the following loss functions.
\begin{enumerate}
\item {\bf Inner Maximizer Objective for Adversarial Samples (see Eq.~\ref{eq:OT_regularized_loss_main}):}
\begin{align}
    &A_s(x,\tilde x,y,\lambda,\theta)\coloneqq \mathcal{L}_\theta(\tilde x,x,y)-\lambda c_s(x,\tilde x)\,.\label{eq:adv_sample_objective}
\end{align}
\item {\bf KL-Divergence Outer Minimizer Objective (see Eq.~\ref{eq:OT_reg_KL_DRO}):}
\begin{align}
    A_{w}^{KL}(x,\tilde x,y,\lambda,\theta)\coloneqq \epsilon\lambda+\lambda \log\left(\frac{1}{B}\sum_{i=1}^B\exp(\lambda^{-1}A_s(x_i,\tilde x_i,y_i,\lambda,\theta))\right)\,. \label{eq:KL_outer_min_objective}
\end{align}
\item {\bf $f$-Divergence Outer Minimizer Objective (see Eq.~\ref{eq:OT_reg_Df_DRO}):}
\begin{align}
A_{w}^f(x,\tilde x,y,\lambda,\rho,\theta)\coloneqq \epsilon\lambda+ \rho+\lambda \frac{1}{B}\sum_{i=1}^Bf^*((A_s(x_i,\tilde x_i,y_i,\lambda,\theta)-\rho)/\lambda)\,.\label{eq:f_div_outer_min_objective}
\end{align}
In the examples in Section \ref{sec:experiments}, the $f$-divergence experiments use use the $\alpha$-divergences, for which $f_\alpha^*$ is given in \eqref{eq:f_alpha_star}.
\item {\bf R{\'e}nyi-Divergence Outer Minimizer Objective (see Eq.~\ref{eq:OT_reg_Renyi_DRO}):}
\begin{align}
&A_w^{{R{\'e}n}}(x,\tilde{x},y,\lambda,\rho,\theta)\coloneqq \epsilon\lambda+\rho-\alpha^{-1}(\log\alpha +1)\lambda\label{eq:Renyi_div_outer_min_objective}\\
 &\qquad+ \begin{cases}
-\frac{1}{\alpha-1}\lambda\log\left(\frac{1}{B}\sum_{i=1}^B  (\lambda^{-1}(\rho-A_s(x_i,\tilde x_i,y_i,\lambda,\theta)))^{(\alpha-1)/\alpha} \right) & \text{ if } M_{\lambda,\theta}<\rho\\
\infty &\text{ if } M_{\lambda,\theta}\not < \rho
\end{cases}\,,\notag\\
&M_{\lambda,\theta}\coloneqq \max_i A_s(x_i,\tilde x_i,y_i,\lambda,\theta)\,.\notag
\end{align}
\end{enumerate}
The loss functions $\mathcal{L}_\theta$ in \eqref{eq:adv_sample_objective} come from \eqref{eq:PGD_T_M_losses} and  the cost function for the adversarial samples $c_s$ (i.e., having already incorporated the fixed label and natural sample constraints from the OT cost function) is given by:
\begin{enumerate}
    \item TRADES and MART use the $\epsilon$-ball hard-constraint cost  
\begin{align}
    c_\epsilon(x,\tilde x)=\infty 1_{\|x-\tilde x\|>\epsilon}\,.
\end{align}
\item UDR \cite{bui_UDR_2022unified}  uses the $\epsilon$-ball soft-constraint
\begin{align}\label{eq:c_UDR}
    c_{UDR}(x,\tilde{x})=\gamma\sum_i((x_i-\tilde{x}_i-\text{sign}(x_i-\tilde{x}_i) \epsilon)^21_{|x_i-\tilde{x}_i|\geq\epsilon}\,,
\end{align}
where $\gamma$ is another hyperparameter.
\end{enumerate}

\renewcommand{\algorithmicrequire}{\textbf{Input:}}
\renewcommand{\algorithmicensure}{\textbf{Output:}}
\begin{algorithm}
\caption{\textbf{A}dversarially \textbf{R}obust Deep Learning \textbf{M}odels with \textbf{O}ptimal-Transport-\textbf{R}egularized \textbf{D}ivergences ($ARMOR_D$)}\label{alg:adv_samples}
\begin{algorithmic}[1]
\Require Labeled training data $\{(x_i, y_i)\}$, target model $\phi_\theta$ depending on NN-parameters $\theta$, number of training epochs $N$, minibatch size $B$, information divergence $D$ ($D=KL$, $D=D_f$, or $D=R_\alpha$), number of inner maximizer iterations $M$, learning rates $lr_{\tilde x}$, $lr_\lambda$, and $lr_\theta$.
\Ensure robustified model $\phi_\theta$

\For{$n=1,\ldots, N$}
    \State Sample a minibatch $B_n$
    \For {$(x_i,y_i) \in B_n$}
        \State $\tilde{x}_i \gets x_i+$noise
        \For {$m=1, \ldots, M$}
            \State  $\tilde x_i\gets \tilde x_i+lr_{\tilde x}\nabla_{\tilde x} A_s(x_i,\tilde x_i,y_i,\lambda,\theta)$ \Comment {See \eqref{eq:adv_sample_objective}}
        \EndFor
    \EndFor
    \If{$D=KL$}
        \State $\lambda \gets \lambda-lr_\lambda \nabla_{\lambda} A^{KL}_w(x,\tilde x,y,\lambda,\theta)$  \Comment {See \eqref{eq:KL_outer_min_objective}}
        \State $\theta \gets \theta-lr_\theta \nabla_{\theta} A^{KL}_w(x,\tilde x,y,\lambda,\theta)$
    \ElsIf{$D=D_f$}
        \State $(\lambda,\rho) \gets (\lambda,\rho)-lr_\lambda \nabla_{(\lambda,\rho)} A^f_w(x,\tilde x,y,\lambda,\rho,\theta)$  \Comment {See \eqref{eq:f_div_outer_min_objective}}
        \State $\theta \gets \theta-lr_\theta \nabla_{\theta} A^f_w(x,\tilde x,y,\lambda,\rho,\theta)$
        \ElsIf{$D=R_\alpha$}
        \State $(\lambda,\rho) \gets (\lambda,\rho)-lr_\lambda \nabla_{(\lambda,\rho)} A^{R{\'e}n}_w(x,\tilde x,y,\lambda,\rho,\theta)$  \Comment {See \eqref{eq:Renyi_div_outer_min_objective}}
        \State $\theta \gets \theta-lr_\theta \nabla_{\theta} A^{R{\'e}n}_w(x,\tilde x,y,\lambda,\rho,\theta)$
    \EndIf
\EndFor
\end{algorithmic}
\end{algorithm}
Lines 3-8 of Algorithm~\ref{alg:adv_samples} implement the inner maximizer, wherein the adversarial samples $\tilde x_i$ are constructed, and lines 9-18 implement one step of the outer minimizer for the chosen divergence, $D$. In line 4 of the inner maximizer we allow for noise to be added to the natural sample when initializing the adversarial sample; default is IID Uniform$[-lr_{\tilde x},lr_{\tilde x}]$ noise added to each component. In addition, we also implemented and tested versions that replace the SGD step in line 10 with the bisection method and in lines 13 and 16 with  convex optimization routines; specifically, we use SciPy's \texttt{optimize.bisect} and \texttt{optimize.minimize} respectively.  Using these slightly increases the overall computational cost of the method, but results in improved performance in most cases and eliminates the need to tune the learning rate $lr_\lambda$. The modifications to Algorithm \ref{alg:adv_samples} (specifically, to the outer minimizers) that yield the $adv+nat$ and $adv^a$ methods are outlined in Appendices \ref{app:adv+nat_description} and \ref{app:asym_DRO} respectively below.

\subsection{Experimental Setup} 
\label{app:setup}
CIFAR-100, CIFAR-10, and MNIST datasets comprise 50,000 images in the training and 10,000 in the test set. While MNIST includes black-and-white images each with $28\times 28$ features, CIFAR-10 and CIFAR-100 include higher-dimensional colored images each with $32\times32\times3$ features. 

The target neural network for CIFAR-10 and CIFAR-100 is the widely-adopted ResNet18 \cite{bui_UDR_2022unified}.  For the MNIST dataset, we followed \cite{bui_UDR_2022unified} with the standard network architecture from \cite{carlini2017towards}. The MNIST image detector is a CNN network with four convolutional layers (32, 32, 64, and 64) each with a square filter of size 3 and ReLU activations, two $2 \times 2$ max-pooling layers, and three fully connected layers with a dropout layer between the first and second fully connected layer (\cite{carlini2017towards}). 

All attacks in our qualitative experiments with MNIST were conducted by the neighborhood size of $0.1$, $\ell_\infty$ neighborhood, 20 iterations for adversarial training, and 40 iterations for evaluation of adversarial robustness.  For the attacks in our benchmark experiments on CIFAR-10 and CIFAR-100, we followed the standard settings in \cite{bui_UDR_2022unified}. Accordingly, we used the $\ell_\infty$ neighborhood size of $8/255$ with the step size of $2/255$ for CIFAR-10; and the $\ell_\infty$ neighborhood size of $0.01$ with the step size of $0.001$ for CIFAR-100. Following the same standard, we used 10 iterations of adversarial training per sample and 200 iterations for the evaluation of the adversarial robustness.


\subsection{Robust Optimization Using a Mixture of Adversarial and Natural Samples}\label{app:adv+nat_description}
The best performance in the experiments presented in Section \ref{sec:experiments}  was often obtained using a mixture of adversarial samples along with the original training data (called the natural samples) and their corresponding losses.  This can be viewed as DRO over distribution neighborhoods of the form
\begin{align}\label{eq:adv+orig_nbhds}
    \mathcal{U}^{D^c}_{\epsilon,t}(P_n)\coloneqq \{tP_n+(1-t)Q:D^c(Q\|P_n)\leq \epsilon\}\,,\,\,\,\epsilon>0,t\in(0,1)\,,
\end{align}
as we have
\begin{align}\label{eq:DRO_adv+orig}
 \inf_{\theta\in \Theta}\sup_{Q\in\mathcal{U}^{D^c}_{\epsilon,t}(P_n)} E_Q[\mathcal{L}_\theta]=  \inf_{\theta\in\Theta}\left\{tE_{P_n}[\mathcal{L}_\theta]+(1-t)\sup_{Q:D^c(Q\|P_n)\leq \epsilon}E_Q[\mathcal{L}_\theta]\right\}\,.
\end{align}
The supremum over $Q$ on the right-hand side of \eqref{eq:DRO_adv+orig} can then be evaluated by the method discussed in Section \ref{sec:formal_derivation} and the resulting expression is used in what we call the $adv_s+nat$ methods.

\subsection{Asymmetric Robust Optimization}\label{app:asym_DRO}
In many cases the training samples are naturally partitioned into distinct components, with corresponding empirical distributions $P_{n,0}$ and  $P_{n,1}$ (e.g., distinct class labels), and one wishes to robustify only one component of the partition (e.g., to protect against false negative adversarial attacks but not false positives).    In such cases one can formulate the DRO problem in an asymmetric manner as follows. Define the baseline distribution $P_{n,s}=(1-s)P_{n,0}+sP_{n,1}$ for some $s\in(0,1)$ and  define the distribution neighborhoods
\begin{align}
 \mathcal{U}^{a,D^c}_{\epsilon}(P_{n,s})\coloneqq\{(1-s)P_{n,0}+sQ:D^c(Q\|P_{n,1})\leq \epsilon\}\,.   
\end{align}
The corresponding DRO problem can  be rewritten as
\begin{align}\label{eq:DRO_asym}
  \inf_{\theta\in\Theta}\sup_{Q\in \mathcal{U}^{a,D^c}_{\epsilon}(P_{n,s})} E_Q[\mathcal{L}_\theta]=\inf_{\theta\in\Theta}\left\{(1-s)E_{P_{n,0}}[\mathcal{L}_\theta]+s\sup_{Q:D^c(Q\|P_{n,1})\leq \epsilon} E_Q[\mathcal{L}_\theta]\right\}\,,  
\end{align}
where one can clearly see that the objective on the right-hand side is non-robust in $P_{n,0}$ but uses OT-regularized-divergence robust optimization for the $P_{n,1}$ component.  The parameter $s$  weights the relative importance of the partition components in the overall loss; it can be chosen to correspond to the relative sizes of the partition components (i.e., so that $P_{n,s}=P_n$) or it can be used as a hyperparameter. The supremum over $Q$ on the right-hand side of \eqref{eq:DRO_asym} can  be evaluated as in Section \ref{sec:formal_derivation},  resulting in what we call the $adv^a$ methods. This approach can  easily be extended to partitions with more than two components.

{\bf Asymmetric Robust Optimization Using a Mixture of  Adversarial and Natural Samples:} One can  combine  asymmetry with the use of natural samples. To do this, choose a mixing parameter $t\in(0,1)$ and define the distribution neighborhoods
\begin{align}
\mathcal{U}^{a,D^c}_{\epsilon,t}(P_{n,s})\coloneqq&\{tP_{n,s}+(1-t)((1-s)P_{n,0}+sQ):D^c(Q\|P_{n,1})\leq \epsilon\}\\
=& \{(1-s)P_{n,0}+tsP_{n,1}+(1-t)sQ:D^c(Q\|P_{n,1})\leq \epsilon\}\,.\notag
\end{align}
The corresponding DRO problem can  be rewritten as
\begin{align}\label{eq:DRO_asym+nat}
  &\inf_{\theta\in\Theta}\sup_{Q\in \mathcal{U}^{a,D^c}_{\epsilon,t}(P_{n,s})} E_Q[\mathcal{L}_\theta]\\
  =&\inf_{\theta\in\Theta}\left\{(1-s)E_{P_{n,0}}[\mathcal{L}_\theta]+tsE_{P_{n,1}}[\mathcal{L}_\theta]+(1-t)s \sup_{Q:D^c(Q\|P_{n,1})\leq \epsilon} E_Q[\mathcal{L}_\theta]\right\}\,.\notag
\end{align}
Once again, the supremum over $Q$ on the right-hand side can  be evaluated as in Section \ref{sec:formal_derivation}, resulting in what we call the $adv^a+nat$ methods.

\subsection{Interpolating between  OT-Regularized-$D_f$ and OT Methods}\label{app:convex_comb_objective_functions}
Here we describe a general procedure for modifying an $f$-divergence into a one-parameter family, $D_{f_\beta}$, so that the resulting OT-regularized-$D_{f_\beta}$ method  interpolates between OT DRO and  OT-regularized-$D_f$ DRO.  In particular, this enables us to examine the effect of "turning off" the information divergence component of the method, i.e., the adversarial sample weights (see Section \ref{sec:reweighting}).   This is a distinct interpolation procedure from the result in Theorem \ref{thm:limit_D_c_to_C}.

Given an $f$-divergence, $D_f$, and $\beta\in(0,1]$ define 
\begin{align}
    f_\beta(t)=\beta f((t-1+\beta)/\beta)\,,
\end{align}
(not to be confused with the $\alpha$-divergences, Eq.~\ref{eq:f_alpha_def}).  The $f_\beta$ are convex and $f_\beta(1)=0$ for all $\beta$, hence $D_{f_\beta}$ is a well-defined family of divergences with $\beta=1$ giving the original $f$-divergence.  It is straightforward to compute the Legendre transform of $f_\beta$ in terms of that of $f$,
 \begin{align}
     f_\beta^*(t)=\beta f^*(t)+(1-\beta)t\,.
 \end{align}
 Therefore one can use $f_\beta$ to define an OT-regularized-divergence DRO problem and simplify it as follows
 \begin{align}\label{eq:DRO_method_mixture}
&\inf_{\theta\in\Theta}\sup_{Q: D_{f_\beta}^c(Q\|P_n)\leq \epsilon}E_Q[\mathcal{L}_\theta]\\
=&\inf_{\lambda>0,\rho\in\mathbb{R},\theta\in\Theta}\left\{ \epsilon\lambda+ \beta(\rho+\lambda   \frac{1}{n}\sum_{i=1}^nf^*((\mathcal{L}_{\theta,\lambda}^{c}(z_i)-\rho)/\lambda))+(1-\beta) \frac{1}{n}\sum_{i=1}^n\mathcal{L}_{\theta,\lambda}^{c}(z_i)\right\}\,.\notag
\end{align}
As $\beta\to 0^+$ the objective function in \eqref{eq:DRO_method_mixture} approaches that of OT-DRO and so \eqref{eq:DRO_method_mixture} can be thought of as a mixture of OT DRO and OT-regularized-$D_f$ DRO. Moreover, the mixing parameter $\beta$ sets a lower bound on the adversarial sample weights \eqref{eq:new_weights}, with $p_{*,i}\geq(1-\beta)/n$.  In the KL case one can  evaluate the infimum over $\rho$ in \eqref{eq:DRO_method_mixture} to obtain
  \begin{align}\label{eq:DROKL_method_mixture}
&\inf_{\theta\in\Theta}\sup_{Q: KL_\beta^c(Q\|P_n)\leq \epsilon}E_Q[\mathcal{L}_\theta]\\
=&\inf_{\lambda>0,\theta\in\Theta}\left\{ \epsilon\lambda+ \beta\lambda   \log\left(\frac{1}{n}\sum_{i=1}^n\exp(\lambda^{-1}\mathcal{L}_{\theta,\lambda}^{c}(z_i))\right)+(1-\beta) \frac{1}{n}\sum_{i=1}^n\mathcal{L}_{\theta,\lambda}^{c}(z_i)\right\}\,.\notag
\end{align}
When $D_f$ is an $\alpha$-divergence we denote the  method \eqref{eq:DRO_method_mixture}  by $ARMOR_{\alpha,\beta}$. We denote the method \eqref{eq:DROKL_method_mixture} by $ARMOR_{KL_\beta}$. In our tests on MNIST we found that the performance degrades significantly as $\beta$ decreases to $0$, both in terms of adversarial robustness and performance generalizability  (see Figure~\ref{fig:beta}). This implies that the information divergence, and the adversarial sample weights which it generates,  contributes non-trivially to the success of the method.

\begin{figure}[H]
    \centering
    \includegraphics[width=0.65\textwidth]{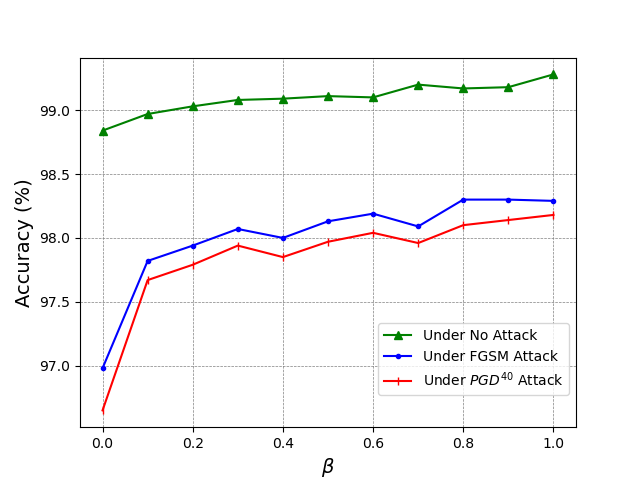}
    \caption{$ARMOR_{\alpha,\beta}$-robustified model performance against $\beta$.}
    \label{fig:beta}
\end{figure}

\bibliographystyle{siamplain}
\bibliography{armor_arxiv.bbl}

\end{document}